\newcommand{\argmin}{\operatornamewithlimits{arg\ min}}
\newcommand{\argmax}{\operatornamewithlimits{arg\ max}}
\newcommand{\beq} {\begin{equation}}
\newcommand{\eeq} {\end{equation}}
\newcommand{\beqs} {\begin{equation*}}
\newcommand{\eeqs} {\end{equation*}}
\newcommand{\EE}{\mathbb{E}}
\newcommand{\PP}{\mathbb{P}}
\newcommand{\RR}{\mathbb{R}}
\newcommand{\bS}{\boldsymbol{S}}
\newcommand{\sE}{\ensuremath{\mathcal{E}}}
\newcommand{\sH}{\ensuremath{\mathcal{H}}}
\newcommand{\sX}{\ensuremath{\mathcal{X}}}
\newcommand{\Rbb}{\mathbb{R}}
\newcommand{\n}[1]{\|#1\|}
\newcommand{\Tlambda}{\ensuremath{\tilde{\lambda}}}
\newtheorem{theorem}{Theorem}[section]
\newtheorem{lemma}[theorem]{Lemma}
\newtheorem{corollary}[theorem]{Corollary}
\newtheorem{definition}[theorem]{Definition}
\begin{document}

\twocolumn[

\aistatstitle{Simple Regret Minimization for Contextual Bandits}
\aistatsauthor{Aniket Anand Deshmukh* \footnotemark[1], Srinagesh Sharma* \footnotemark[1] \And James W. Cutler\footnotemark[2] \AND  Mark Moldwin\footnotemark[3] \And Clayton Scott\footnotemark[1]  }

\aistatsaddress{ \footnotemark[1]Department of EECS, University of Michigan, Ann Arbor, MI, USA \\ \footnotemark[2]Department of Aerospace Engineering, University of Michigan, Ann Arbor, MI, USA \\ \footnotemark[3]Climate and Space Engineering, University of Michigan, Ann Arbor, MI, USA }
]

\begin{abstract}
  There are two variants of the classical multi-armed bandit (MAB) problem that have received considerable attention from machine learning researchers in recent years: contextual bandits and simple regret minimization. Contextual bandits are a sub-class of MABs where, at every time step, the learner has access to side information that is predictive of the best arm. Simple regret minimization assumes that the learner only incurs regret after a pure exploration phase. In this work, we study simple regret minimization for contextual bandits.  Motivated by applications where the learner has separate training and autonomous modes, we assume that the learner experiences a {\em pure exploration} phase, where feedback is received after every action but no regret is incurred, followed by a {\em pure exploitation} phase in which regret is incurred but there is no feedback. We present the Contextual-Gap algorithm and establish performance guarantees on the simple regret, i.e., the regret during the pure exploitation phase. Our experiments examine a novel application to adaptive sensor selection for magnetic field estimation in interplanetary spacecraft, and demonstrate considerable improvement over algorithms designed to minimize the cumulative regret.
\end{abstract}

\section{Introduction}
\label{sec:intro}

The multi-armed bandit (MAB) is a framework for sequential decision making where, at every time step, the learner selects (or ``pulls'') one of several possible actions (or ``arms''), and receives a reward based on the selected action. The regret of the learner is the difference between the maximum possible reward and the reward resulting from the chosen action. In the classical MAB setting, the goal is to minimize the sum of all regrets, or {\em cumulative regret}, which naturally leads to an exploration/exploitation trade-off problem \citep{auer2002finite}. If the learner explores too little, it may never find an optimal arm which will increase its cumulative regret. If the learner explores too much, it may select sub-optimal arms too often which will also increase its cumulative regret. There are a variety of algorithms that solve this exploration/exploitation trade-off problem \citep{auer2002finite, auer2002using, auer2002nonstochastic, agrawal2012analysis, bubeck2012regret}.        

The contextual bandit problem extends the classical MAB setting, with the addition of time-varying side information, or {\em context}, made available at every time step. 
The best arm at every time step depends on the context, and  intuitively the learner seeks to determine the best arm as a function of context. To date, work on contextual bandits has studied cumulative regret minimization, which is motivated by applications in health care, web advertisement recommendations and news article recommendations \citep{li2010contextual}. The contextual bandit setting is also called associative reinforcement learning \cite{auer2002using} and linear bandits \citep{agrawal2012analysis,abbasi2011improved}. 

In classical (non-contextual) MABs, the goal of the learner isn't always to minimize the cumulative regret. In some applications, there is a {\em pure exploration} phase during which the learning incurs no regret (i.e., no penalty for sub-optimal decisions), and performance is measured in terms of {\em simple regret}, which is the regret assessed at the end of the pure exploration phase. For example, in top-arm identification, the learner must guess the arm with highest expected reward at the end of the exploration phase. Simple regret minimization clearly motivates different strategies, since there is no penalty for sub-optimal decisions during the exploration phase. Fixed budget and fixed confidence are the two main theoretical frameworks in which simple regret is generally analyzed \cite{gabillon2012best,jamieson2014best,garivier2016optimal,carpentier2015simple}. 

In this paper, we extend the idea of simple regret minimization to contextual bandits. In this setting, there is a pure exploration phase during which no regret is incurred, following by a {\em pure exploitation} phase during which regret is incurred, but there is no feedback so the learner cannot update its policy. To our knowledge, previous work has not addressed novel algorithms for this setting.
\citet{guan2018nonparametric} provide simple regret guarantees for the policy of uniform sampling of arms in the i.i.d setting. The contextual bandit algorithm of \citet{tekin2015releaf} also has distinct exploration and exploitation phases, but unlike our setting, the agent has control over which phase it is in, i.e., when it wants to receive feedback.
In the work of \citet{hoffman2014correlation, soare2014best,libin2017bayesian, xu2017fully} there is a single best arm even when contexts are observed (directly or indirectly).
Our algorithm, Contextual-Gap, generalizes the idea of Bayes Gap algorithm \cite{hoffman2014correlation} and UGapEb algorithm \cite{gabillon2012best} to the contextual bandits setting. Theoretical analysis of bounding Simple regret for contextual bandits is one of the key contributions of the paper. Specifically, proof of Lemma \ref{lem:LowestEigenValueSelfAdjoint} using Freedman’s inequality for self-adjoint operators and proof of Lemma \ref{lem:UCB_Bound_Self_Adjoint_Rankr} which gives monotonic upper bound on confidence widths makes the analysis interesting and novel. 


We make the following contributions: 1. We formulate a novel problem: that of simple regret minimization for contextual bandits. 2. We develop an algorithm, Contextual-Gap, for this setting. 3. We present performance guarantees on the simple regret in the fixed budget framework. 4. We present experimental results for synthetic dataset, and for adaptive sensor selection in nano-satellites.

The paper is organized as follows. In section 2, we motivate the new problem based on the real-life application of magnetometer selection in spacecraft. In section 3, we state the problem formally, and to solve this new problem, we present the Contextual-Gap algorithm in section 4. In section 5, we present the learning theoretic analysis and in section 6, we present and discuss experimental results. Section 7 concludes.

\section{Motivation}
\label{sec:motivation}

Our work is motivated by autonomous systems that go through an initial training phase (the pure exploration phase) where they learn how to accomplish a task without being penalized for sub-optimal decisions, and then are deployed in an environment where they no longer receive feedback, but regret is incurred (the pure exploitation phase). 

An example scenario arises in the problem of estimating weak interplanetary magnetic fields in the presence of noise using resource-constrained spacecraft known as nano-satellites or CubeSats. Spacecraft systems generate their own spatially localized magnetic field noise due to large numbers of time-varying current paths in the spacecraft. Historically, with large spacecraft, such noise was minimized by physically separating the sensor from the spacecraft using a rigid boom. In highly resource-constrained satellites such as nano-satellites, however, structural constraints limit the use of long rigid booms, requiring sensors to be close to or inside the CubeSat (Figure \ref{fig:distmag}). Thus, recent work has focused on nano-satellites equipped with multiple magnetic field sensors (magnetometers), shorter booms and adaptive methods for reducing spacecraft noise \citep{sheinker2016adaptive} . 

A natural problem that arises in nano-satellites with multiple sensors is that of determining the sensor with the reading closest to the true magnetic field. At each time step, whenever sensor is selected, one has to calibrate the sensor readings because sensor behaviours change due to rapid changes in temperature and movement (rotations or maneuvers) of the satellite which introduce a further errors in magnetic field readings. This calibration process is expensive in terms of computation and memory \citep{kepko1996accurate, leinweber2012inflight}, particularly when dealing with many magnetic field sensors. To get accurate readings from different sensors one has to repeat this calibration process for every sensor and it's not feasible because of the limited power resources on the spacecraft. These constraints motivate the selection of a single sensor at each time step.

Furthermore, the best sensor changes with time. This stems from the time-varying localization of noise in the spacecraft, which in turn results from different operational events such as data transmission, spacecraft maneuvers, and power generation.  This dynamic sensor selection problem is readily cast as a contextual bandit problem. The context is given by the spacecraft's telemetry system which provides real-time measurements related to spacecraft operation, including solar panel currents, temperatures, momentum wheel information, and real-time current consumption \citep{springmann2012attitude}. 

In this application, however, conventional contextual bandit algorithms are not applicable because feedback is not always available. Feedback requires knowledge of sensor noise, which in turn requires knowledge of the true magnetic field. Yet the true magnetic field is known only during certain portions of a spacecraft's orbit (e.g., when the satellite is near other spacecraft, or when the earth shields the satellite from sun-induced magnetic fields). Moreover, when the true magnetic field is known, there is no need to estimate the magnetic field in the first place! This suggests a learning scenario where the agent (the sensor scheduler) operates in two phases, one where it has feedback but incurs no regrets (because the field being estimated is known), and another where it does not receive feedback, but nonetheless needs to produce estimates. This is precisely the problem we study.

In the magnetometer problem defined above, the exploration and exploitation times occur in phases, as the satellite moves into and out of regions where the true magnetic field is known. For simplicity, we will the address the problem in which the first \(T\) time steps belong to the exploration phase, and all subsequent time steps to the exploitation phase. Nonetheless, the algorithm we introduce can switch between phases indefinitely, and does not need to know in advance when a new phase is beginning.

\begin{figure}[htb]
\centering
        \includegraphics[width=0.45\textwidth]{./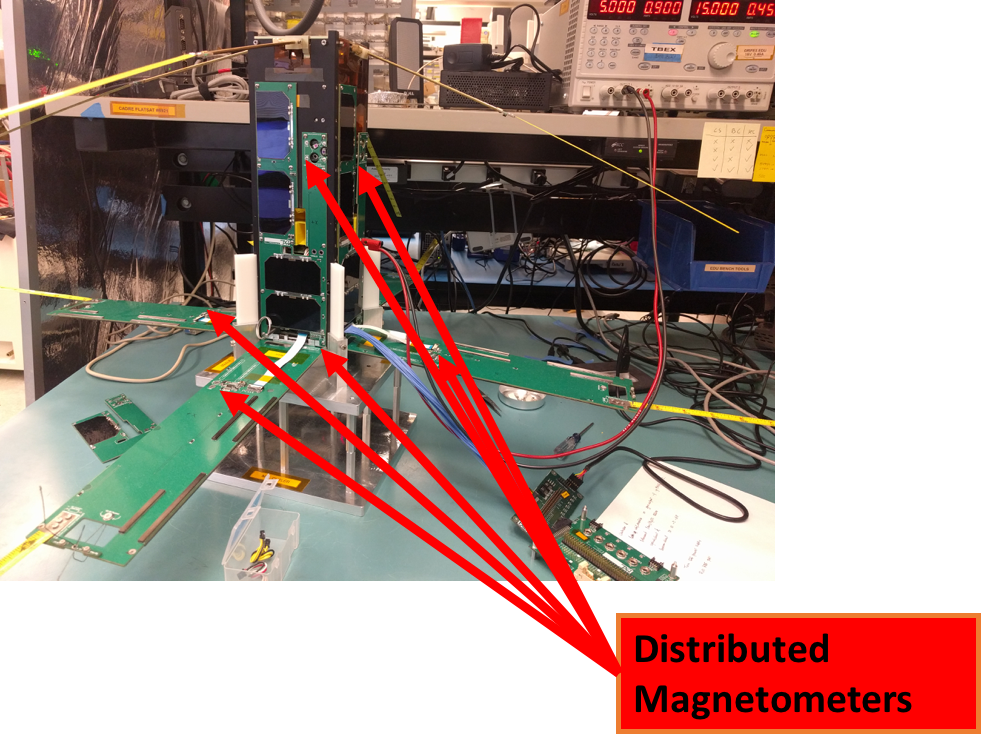}
        \caption{TBEx Small Satellite with Multiple Magnetometers \citep{tsunoda2016tilts, nathanael2018tbex}}
        \label{fig:distmag}
\end{figure}

Sensor management, adaptive sensing, and sequential resource allocation have historically been viewed in the decision process framework where the learner takes actions on selecting the sensor based on previously collected data. There have been many proposed solutions based on Markov decision processes (MDPs) and partially observable MDPs, with optimality bounds for cumulative regret \citep{hero2011sensor, castanon1997approximate, evans2001optimal, krishnamurthy2002algorithms, chong2009partially}. In fact, sensor management and sequential resource allocation was one of the original motivating settings for the classical MAB problem \citep{mahajan2008multi, bubeck2012regret, hero2011sensor}, again with the goal of cumulative regret minimization. We are interested in an adaptive sensing setting where the optimal decisions and rewards also depend on the context, but where the actions can be separated into a pure exploration and pure exploitation phases, with no regret during exploration, and with no feedback during pure exploitation.

\section{Formal Setting}
\label{sec:formal_setting}
We denote the context space as \(\sX = \RR^{d} \). Let \(\{x_t\}_{t=1}^{\infty}\) denote the sequence of observed contexts. Let the total number of arms be \(A\). For each \(x_t\), the learner is required to choose an arm \(a \in [A]\), where \([A] := \{1,2,...,A\}\). 

For arm \( a \in [A]\), let \(f_a: \sX \rightarrow \Rbb\) be a function that maps context to expected reward when arm $a$ is selected. Let \(a_t\) denote the arm selected at time \( t \), and assume the reward at time $t$ obeys \(r_{t}:= f_{a_t}(x_t) + \zeta_t\), where $\zeta_t$ is noise (described in more detail below). We assume that for each $a$, $f_a$ belongs to a reproducing kernel Hilbert space (RKHS) defined on \(\sX\). 
The first \(T\) time steps belong to the \emph{exploration phase} where the learner observes context \(x_t\), chooses arm \(a_t\) and obtains reward \(r_{t}\). The time steps after \(T\) belong to an \emph{exploitation phase} where the learner observes context \(x_t\), chooses arm \(a_t\) and earns an implicit reward \(r_{t}\) that is not returned to the learner.

For the theoretical results below, the following general probabilistic framework is adopted, following \citet{abbasi2011improved} and \citet{durand2017streaming}. 
We assume that \(\zeta_t\) is a zero mean, \(\rho\)-conditionally sub-Gaussian random variable, i.e., \(\zeta_t\) is such that for some \(\rho > 0\) and \(\forall \gamma \in \RR \),
\begin{equation}
\EE[e^{\gamma\zeta_t}|\mathcal{H}_{t-1}] \leq \exp\bigg(\frac{\gamma^2 \rho^2}{2} \bigg).
\end{equation}
Here \( \mathcal{H}_{t-1} = \{ x_1, \dots, x_{t-1}, \zeta_1, \dots, \zeta_{t-1}\} \) is the history at time \( t \) (see supplementary material for additional details).

We also define the following terms. Let \( D_{a,t} \) be the set of all  time indices when arm \( a \) was selected up to time \( t -1 \) and set \(  N_{a,t} = |D_{a,t}|\).  Let \(X_{a,t}\) be the data matrix whose columns are \( \{ x_{\tau} \}_{\tau \in D_{a,t}} \)  and similarly let \(Y_{a,t} \) denote the column vector of rewards \( \{ r_{\tau}
\}_{\tau \in D_{a,t}} \). Thus, \(X_{a,t} \in \mathbb{R}^{d \times N_{a,t}}\) and \( Y_{a,t} \in \mathbb{R}^{N_{a,t}} \). 

\subsection{Problem Statement}
At every time step \( t\), the learner observes context \( x_t \). During the exploration phase \(t \le T\), the learner chooses a series of actions to explore and learn the mappings \(f_a\) from context to reward. During the exploitation phase  \( t > T\), the goal is to select the best arm as a function of context. We define the {\em simple regret} associated with choosing arm \(a \in [A]\), given context \(x\), as:
\begin{equation}\label{eq:SimpleRegretDefinition}
R_{a}(x) := f^*(x) - f_a(x),
\end{equation}
where \( f^*(x) := \max_{i \in [A] } f_{i}(x)\) is the expected reward for the best arm for context \(x\). The learner aims to minimize the simple regret for \( t > T \). To be more precise, let $\Omega$ be the fixed policy mapping context to arm during the exploitation phase. The goal is to determine policies for the exploration and exploitation phases such that for all $\epsilon > 0$ and $t > T$
$$
\PP (R_{\Omega(x_t)}(x_t) \ge \epsilon |x_t) \le b_\epsilon(T),
$$
where $b_\epsilon(T)$ is an expression that decreases to 0 as $T \to \infty$. The following section presents an algorithm to solve this problem.

\section{Algorithm}
\label{sec:algorithm}

We propose an algorithm that extends the Bayes Gap algorithm \citep{hoffman2014correlation} to the contextual setting. Note that Bayes Gap itself is originally motivated from UGapEb \citep{gabillon2012best}.

\subsection{Estimating Expected Rewards}

A key ingredient of our extension is an estimate of \( f_a\), for each \( a \), based on the current history. We use kernel methods to estimate \( f_a\).
Let \(k: \sX \times \sX \rightarrow \Rbb\) be a symmetric positive definite kernel function on \(\sX\), \(\sH\) be the corresponding RKHS and \(\phi(x) = k(\cdot, x)\) be the associated canonical feature map. Let \(\phi(X_{a,t}) := [\phi(x_j)]_{j\in D_{a,t}} \). We define the kernel matrix associated with \(X_{a,t}\) as \(K_{a,t}:= \phi(X_{a,t})^T \phi(X_{a,t}) \in \Rbb^{N_{a,t} \times N_{a,t}} \) and the kernel vector of context \(x\) as \(k_{a,t}(x):= \phi(X_{a,t})^T \phi(x)\). Let \(I_{a,t}\) be the identity matrix of size \(N_{a,t}\). 
We estimate \(f_a\) at time \( t \), via kernel ridge regression, i.e., \[ \hat{f}_{a,t}(x) = \argmin_{f_a \in \sH } \sum_{j \in D_{a,t}} (f_a(x_j) - r_j )^2 + \lambda \| f_a \|^2. \] The solution to this optimization problem is \(\hat{f}_{a,t}(x) = k_{a,t}(x)^T(K_{a,t} + \lambda I_{a,t})^{-1}Y_{a,t} \). 
Furthermore, \citet{durand2017streaming} establish a confidence interval for \(f_{a}(x)\) in terms of \(\hat{f}_{a,t}(x)\) and the ``variance'' \(\hat{\sigma}_{a,t}^2(x):= k(x,x) - k_{a,t}(x)^T(K_{a,t} + \lambda I_{a,t})^{-1}k_{a,t}(x)\).

\begin{theorem}[Restatement of Theorem 2.1 in \\ \cite{durand2017streaming}] \label{Thm:MartingaleKernelUCB}
Consider the contextual bandit scenario described in section \ref{sec:formal_setting}.
For any \(\beta > 0\), with probability at least \( 1 - e^{-\beta^2} \), it holds simultaneously over all \( x \in \sX \) and all \( t \leq T \),  
\begin{equation}
|f_a(x) - \hat{f}_{a,t}(x)| \leq ( C_1\beta + C_2 ) \frac{\hat{\sigma}_{a,t}(x)}{\sqrt{\lambda}},
\end{equation}
 where
 \(C_1 = \rho \sqrt{2}\) and \\
 \(C_2 = \rho \sqrt{\sum_{\tau=2}^T \ln(1+ \frac{1}{\lambda} \hat{\sigma}^2_{a,\tau-1}(x_\tau) )} + \sqrt{\lambda} \n{f_a}_\sH.\) 
\end{theorem}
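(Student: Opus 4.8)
The plan is to pass to the feature space of $\sH$, write the estimation error as a deterministic bias term plus a stochastic noise term, and bound each separately; the noise term reduces to a self-normalized martingale quantity whose control is the heart of the argument. First I would rewrite kernel ridge regression in primal form. Setting $\Phi := \phi(X_{a,t})$ and $V := \Phi\Phi^T + \lambda I$ as an operator on $\sH$, the estimator is $\hat{f}_{a,t} = V^{-1}\Phi Y_{a,t}$ with $\hat{f}_{a,t}(x) = \ip{\phi(x), \hat{f}_{a,t}}$, and the push-through (Woodbury) identity yields $\hat{\sigma}^2_{a,t}(x) = \lambda\,\ip{\phi(x), V^{-1}\phi(x)}$. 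Using the RKHS assumption $f_a(x_j) = \ip{\phi(x_j), f_a}$ I would write $Y_{a,t} = \Phi^T f_a + \zeta$, where $\zeta$ stacks the noise terms $\{\zeta_\tau\}_{\tau \in D_{a,t}}$, substitute, and use $\Phi\Phi^T = V - \lambda I$ to obtain the exact decomposition
\[ \hat{f}_{a,t}(x) - f_a(x) = -\lambda\,\ip{\phi(x), V^{-1} f_a} + \ip{\phi(x), V^{-1}\Phi\zeta}. \]

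Next I would bound both terms by Cauchy--Schwarz in the $V^{-1}$ geometry, factoring out the common quantity $\ip{\phi(x), V^{-1}\phi(x)}^{1/2} = \hat{\sigma}_{a,t}(x)/\sqrt{\lambda}$. For the bias term, $V \succeq \lambda I$ gives $\ip{f_a, V^{-1} f_a} \le \n{f_a}_\sH^2/\lambda$, producing exactly the $\sqrt{\lambda}\,\n{f_a}_\sH$ summand of $C_2$. The noise term is bounded by $(\hat{\sigma}_{a,t}(x)/\sqrt\lambda)\cdot\n{\Phi\zeta}_{V^{-1}}$, where the self-normalized factor $\n{\Phi\zeta}_{V^{-1}}$ no longer depends on $x$.

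The main obstacle is to control $\n{\Phi\zeta}_{V^{-1}}$ uniformly over $t \le T$: I would establish that with probability at least $1 - e^{-\beta^2}$,
\[ \n{\Phi\zeta}_{V^{-1}}^2 \le 2\rho^2\beta^2 + \rho^2 \ln\frac{\det V}{\det(\lambda I)}. \]
Following \citet{abbasi2011improved} and \citet{durand2017streaming}, the $\rho$-sub-Gaussianity of $\zeta_\tau$ given $\mathcal{H}_{\tau-1}$ makes $\exp\!\big(\ip{\gamma, \Phi\zeta} - \tfrac{\rho^2}{2}\ip{\gamma, \Phi\Phi^T\gamma}\big)$ a supermartingale for each fixed direction $\gamma$; integrating over $\gamma$ against a Gaussian prior (the method of mixtures / pseudo-maximization) produces a nonnegative supermartingale whose value is a Gaussian integral evaluating to the log-determinant normalization, and a maximal inequality converts this into the displayed tail bound simultaneously for all $t$. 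The delicate point is that $\sH$ may be infinite-dimensional, so the mixture must be carried out on the finite-dimensional span of the observed features (or via the corresponding operator determinant) for $\det V/\det(\lambda I)$ to be meaningful.

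Finally I would convert to the summation form of $C_2$ using the telescoping elliptical-potential identity $\ln(\det V/\det(\lambda I)) = \ln\det(I + \tfrac1\lambda K_{a,t}) = \sum_{\tau}\ln\!\big(1 + \tfrac1\lambda \hat{\sigma}^2_{a,\tau-1}(x_\tau)\big)$, then split the square root via $\sqrt{a+b}\le\sqrt a+\sqrt b$ into the $C_1\beta = \rho\sqrt2\,\beta$ term and the $\rho\sqrt{\sum_\tau \ln(\cdots)}$ summand of $C_2$. Combining the bias and noise bounds gives $|f_a(x) - \hat{f}_{a,t}(x)| \le (C_1\beta + C_2)\,\hat{\sigma}_{a,t}(x)/\sqrt\lambda$. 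Uniformity over all $x \in \sX$ is then automatic: the high-probability event controlling $\n{\Phi\zeta}_{V^{-1}}$ is $x$-free, and the Cauchy--Schwarz step holds for every $x$ at once, with all $x$-dependence carried by $\hat{\sigma}_{a,t}(x)$.
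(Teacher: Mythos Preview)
Your proposal is correct, but it takes a substantially more fundamental route than the paper. The paper does not prove this theorem from first principles at all: it simply invokes Theorem~2.1 of \citet{durand2017streaming} as a black box, which already gives with probability at least $1-\delta$ the bound $|f_a(x)-\hat f_{a,t}(x)| \le \frac{\hat\sigma_{a,t}(x)}{\sqrt\lambda}\big(\sqrt\lambda\,\n{f_a}_\sH + \rho\sqrt{2\ln(1/\delta)+2\gamma_t(\lambda)}\big)$, then (i) replaces $t$ by $T$ inside the information gain term, (ii) substitutes $\delta = e^{-\beta^2}$, and (iii) applies $\sqrt{p+q}\le\sqrt p+\sqrt q$ to peel off $C_1\beta$ from $C_2$. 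That is the entire argument. What you have written is essentially a proof of \citet{durand2017streaming}'s result itself---the primal decomposition, Cauchy--Schwarz in the $V^{-1}$ norm, the method-of-mixtures supermartingale bound on $\n{\Phi\zeta}_{V^{-1}}$, and the elliptical-potential identity---followed by the same final square-root split. Your approach is self-contained and makes the dependence on the sub-Gaussian noise and the RKHS structure explicit; the paper's approach is a three-line reduction that defers all substance to the cited reference.
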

In the supplementary material we show that \(C_2 = O(\rho \sqrt{\ln{T}})\). For convenience, we denote the width of the confidence interval \( s_{a,t}(x) := 2(C_1 \beta + C_2 ) \frac{\hat{\sigma}_{a,t}(x)}{\sqrt{\lambda}}\). Thus, the upper and lower confidence bounds of \(f_a(x)\) are \(U_{a,t}(x) := \hat{f}_{a,t}(x) + \frac{s_{a,t}(x)}{2}\) and \( L_{a,t}(x) := \hat{f}_{a,t}(x) - \frac{s_{a,t}(x)}{2}\). The upper confidence bound is the most optimistic estimate of the reward and the lower confidence bound is the most pessimistic estimate of the reward.

\subsection{Contextual-Gap Algorithm}
\begin{algorithm}[h]
\caption{Contextual-Gap}
\label{alg:contextualGap}
\begin{algorithmic}
\STATE \textbf{Input:} Number of arms \( A \), Time Steps \( T \), parameter \(\beta\), regularization parameter \(\lambda \), burn-in phase constant \( N_{\lambda}\).
\STATE{//  Exploration Phase I: Burn-in Period // }
\FOR {\( t = 1,...,AN_{\lambda} \)}
\STATE Observe \(x_t\), choose \(  a_t = t \mod A \) \& receive \( r_{t} \in  \mathbb{R}\)
\ENDFOR
\STATE{//Exploration Phase II: Contextual-Gap Policy // }
\FOR {\( t = AN_{\lambda}+1,\dots,T \)}
\STATE Observe context \(x_t\)
\STATE Learn reward estimators \( \hat{f}_{a,t}(x_t) \) and confidence interval \( s_{a,t}(x_t) \) based on history
\STATE  \( U_{a,t}(x_t) =   \hat{f}_{a,t}(x_t) + \frac{s_{a,t}(x_t)}{2} \)
\STATE \( L_{a,t}(x_t) =   \hat{f}_{a,t}(x_t) - \frac{s_{a,t}(x_t)}{2} \)
\STATE \( B_{a,t}(x_t) = \max_{i \neq a} U_{i,t}(x_t) -  L_{a,t}(x_t) \)
\STATE \( J_{t}(x_t) = \argmin_a B_{a,t}(x_t) \)
\STATE \( j_{t}(x_t) = \argmax_{a \neq J_{t}(x_t)} U_{a,t}(x_t) \)
\STATE Choose \( a_t = \argmax_{a \in \{j_{t}(x_t),J_{t}(x_t)\}} s_{a,t}(x_t)\)
\STATE Receive reward \( r_{t}  \in  \mathbb{R}\)
\ENDFOR
\STATE{//  Exploitation Phase // }
\FOR {\(t>T\)}
	\STATE Observe context \(x_t\).
	\FOR {\(\tau = AN_{\lambda}+1,\dots,T\)}
    	\STATE Evaluate and collect \(J_{\tau}(x_{t}), B_{J_{\tau}(x_{t})}(x_t)\)
    \ENDFOR
    \STATE  \( \iota = \argmin_{AN_{\lambda}+1 \leq \tau \leq T} B_{J_{\tau}(x_{t}), t}(x_t)\)
    \STATE Choose \(\Omega(x_t) = J_{\iota}(x_t) \).
\ENDFOR
\end{algorithmic}
\end{algorithm}

During the exploration phase, the Contextual-Gap algorithm proceeds as follows. First, the algorithm has a burn-in period where it cycles through the arms (ignoring context) and pulls each one $N_\lambda$ times. Following this burn-in phase, when the algorithm is presented with context $x$ at time $t \le T$, the algorithm identifies two candidate arms, $J_t(x)$ and $j_t(x)$, as follows. For each arm $a$ the {\em contextual gap} is defined as \(B_{a,t}(x) :=      \max_{i \neq a} U_{i,t}(x) - L_{a,t}(x) \). \( J_t(x) \) is the arm that {\em minimizes} $B_{a,t}(x)$ and $j_t(x)$ is the arm (excluding \( J_t(x)\)) whose upper confidence bound is maximized. Among these two candidates, the one with the widest confidence interval is selected.  Quantity \( B_{a,t}(x) \) is used to bound the the simple regret for corresponding arm \(a \) and is the basis of definition of the best arm \( J_t(x)\). We use \(j_t(x) = \argmax_{a \neq J_{t}(x)} U_{a,t}(x) \) as the second candidate because optimistically \( j_t(x) \) has a chance to be the best arm and it may give more information about how bad the choice of \( J_t(x)\) could be.

In the exploitation phase, for a given context \(x\), the contextual gap for all time steps in the exploration phase are evaluated. The arm with the smallest gap over the entire exploration phase for the given context \(x\) is chosen as the best arm associated with context \(x\). 
Intuitively, this selection is because the arm with the smallest gap over the entire exploration phase represents the arm with the most certainty over the quality of its reward estimates over the other arms.
Because there is no feedback during the exploitation phase, the algorithm moves to the next exploitation step without modification to the learning history. The exact description is presented in Algorithm \ref{alg:contextualGap}.

During the exploitation phase, looking back at all history may be computationally prohibitive. Thus, in practice, we just select the best arm as \( J_{T}(x_t), \forall t > T \). As described in the experimental section, this works well in practice. Theoretically, \( N_{\lambda} \) has to be bigger than a certain number defined in Lemma \ref{lem:UCB_Bound_Self_Adjoint_Rankr}, but for experimental results we keep \( N_{\lambda} = 1\).

\subsection{Comparison of Contextual-Gap and Kernel-UCB}
In this section, we illustrate the difference between the policies of Kernel-UCB (which minimizes cumulative regret) and exploration phase of Contextual-Gap (which aims to minimize simple regret). 
At each time step, Contextual-Gap selects one of two arms: $J_t(x)$, the arm with highest pessimistic reward estimate, or $j_t(x)$, the arm excluding $J_t(x)$ with highest optimistic reward estimate.

Kernel-UCB, in contrast, selects the arm with the highest optimistic reward estimate (i.e., with the maximum upper confidence bound). 

\begin{figure}[htb]
\centering
        \includegraphics[width=0.35\textwidth]{./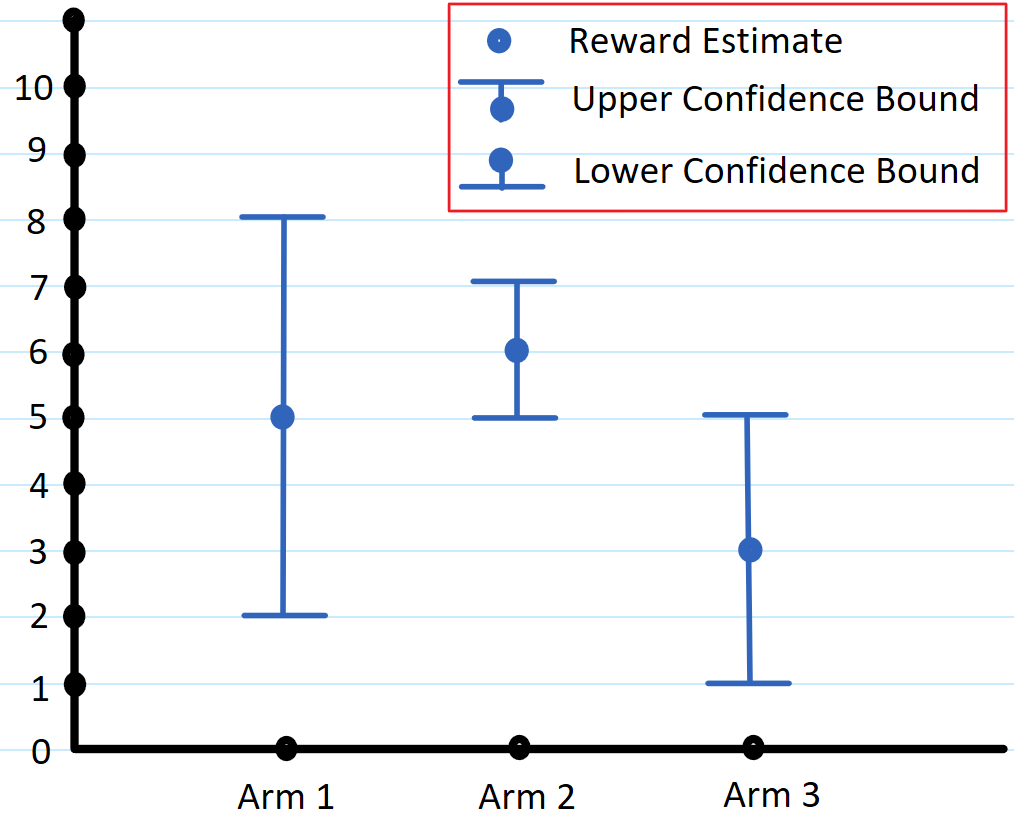}
        \caption{case 1}
        \label{fig:case1}
\end{figure}
Consider a three arm scenario at some time \( \tau \) with context \( x_{\tau} \). Suppose that the estimated rewards and confidence intervals are as in Figures \ref{fig:case1} and \ref{fig:case2}, reflecting two different cases. 
\begin{itemize}[topsep=0pt, noitemsep]
    \item Case 1 (Figure \ref{fig:case1}): In this case, Kernel-UCB would pick arm 1, because it has the maximum upper confidence bound. Kernel-UCB's policy is designed to be optimistic in the case of uncertainty. In the Contextual-Gap, we first calculate \( J_{\tau}(x_{\tau}) \) which minimizes \(B_{a,\tau}(x_{\tau}) \). Note that  \(B_{1,\tau}(x_{\tau}) =  U_{2,\tau}(x_{\tau}) - L_{1,\tau}(x_{\tau})  = 7 - 2  = 5 \),     \(B_{2,\tau}(x_{\tau}) = 3 \) and   \(B_{3,\tau}(x_{\tau}) = 7 \). In this case, \( J_{\tau}(x_{\tau}) = 2 \) and hence \( j_{\tau}(x_{\tau}) = 1\). Finally, Contextual-Gap would choose among arm 1 and arm 2, and would finally choose arm 1 because it has the largest confidence interval. Hence, in case 1, Contextual-Gap chooses the same arm as that of Kernel-UCB. 
    \item  Case 2 (Figure \ref{fig:case2}): In this case, Kernel-UCB would pick arm 1. Note that  \(B_{1,\tau}(x_{\tau}) = U_{2,\tau}(x_{\tau}) -  L_{1,\tau}(x_{\tau}) = 7 - 4  = 3 \),     \(B_{2,\tau}(x_{\tau}) = 7 \) and   \(B_{3,\tau}(x_{\tau}) = 4 \). Then \( J_{\tau}(x_{\tau}) = 1 \) and hence \( j_{\tau}(x_{\tau}) = 2\). Finally,  Contextual-Gap chooses arm 2, because it has the widest confidence interval. Hence, in case 2, Contextual-Gap chooses a different arm compared to that of Kernel-UCB. 
\end{itemize}
\begin{figure}[htb]
\centering
        \includegraphics[width=0.35\textwidth]{./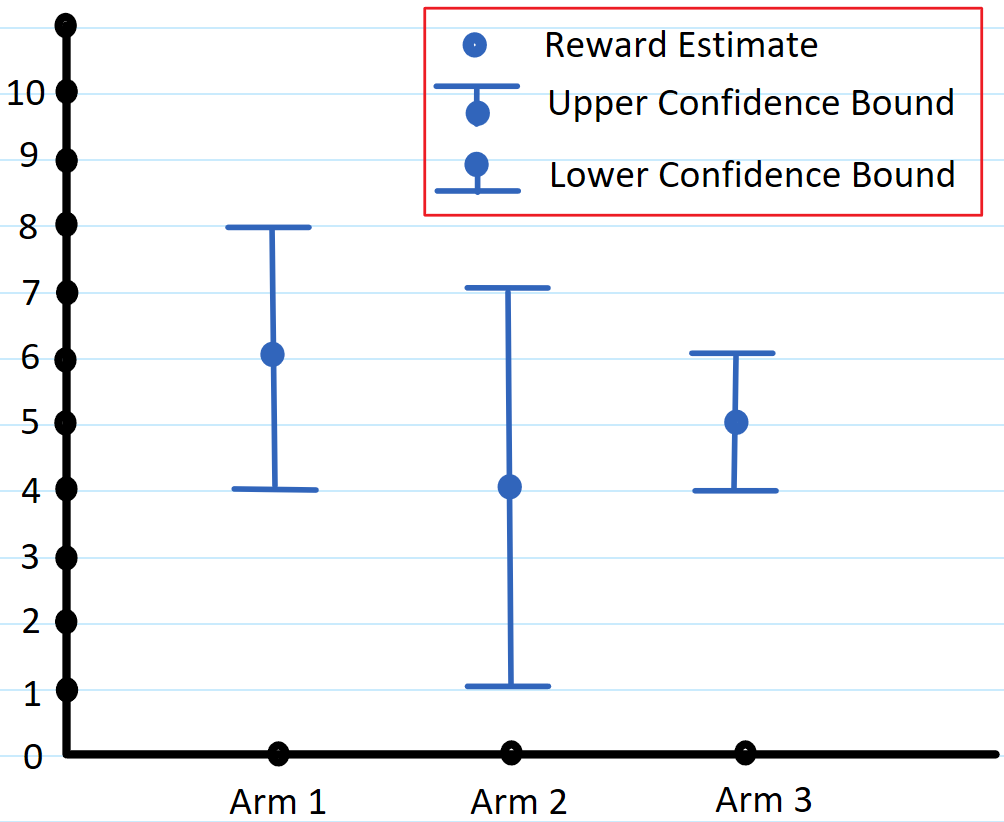}
        \caption{case 2}
        \label{fig:case2}
\end{figure}
Clearly, the use of the lower confidence bound along with upper confidence bound allows Contextual-Gap to explore more than kernel-UCB. However, Contextual-Gap doesn't explore just any arm, but rather it explores only among arms with some likelihood of being optimal. The following section details high probability bounds on the simple regret of the Contextual-Gap algorithm.

\section{Learning Theoretic Analysis}
\label{sec:learning_theoretic_analysis}
We now analyze high probability simple regret bounds which depend on the gap quantity \(\Delta_a(x):= | \max_{i \neq a} f_i(x) - f_a(x)|\). The bounds are presented in the non-i.i.d setting described in Section \ref{sec:formal_setting}.  For the confidence interval to be useful, it needs to shrink to zero with high probability over the feature space as each arm is pulled more and more. This requires the smallest non-zero eigenvalue of the sample covariance matrix of the data for each arm to be lower bounded by a certain value. 
We make an assumption that allows for such a lower bound, and use it to prove that the confidence intervals shrink with high probability under certain assumptions. Finally, we bound the simple regret using the result of shrinking confidence interval, the gap quantity, and the special exploration strategy described in Algorithm \ref{alg:contextualGap}. We now make additional assumptions to the problem setting.

\begin{enumerate}[align=left, leftmargin=*, label=\textbf{A \Roman*}, topsep=0pt, noitemsep]
\item \label{AsX} \(\{ \sX_t \}_{t \geq 1} \subset \Rbb^d\), is a random process on compact space endowed with a finite positive Borel measure.
\item \label{Akernel} Kernel \(k: \sX \times \sX \rightarrow \Rbb\) is bounded by a constant \( L \), the canonical feature map \(\phi: \sX \rightarrow \sH \) of \(k\) is a continuous function, and \( \sH \) is separable.
\end{enumerate}

We denote \(\EE_{t-1}[\cdot]:= \EE[ \cdot | x_1, x_2, \ldots, x_{t-1}] \) and by \(\lambda_r(A)\) the \(r^{\textnormal{th}}\) largest eigenvalue of a compact self adjoint operator \(A\). For a context \(x\), the operator \(\phi(x)\phi(x)^T: \sH \rightarrow \sH\) is a compact self-adjoint operator. 
Based on this notation, we make the following assumption:

\begin{enumerate}[align=left, leftmargin=*, label=\textbf{A \Roman*}, topsep=0pt, noitemsep]
\setcounter{enumi}{2}
\item \label{Alambda} There exists a subspace of dimension \(d^\ast\) with projection \(P\), and a constant \(\lambda_{x} > 0\), such that \( \forall t \), \(\lambda_r(P^T\EE_{t-1}[\phi(x_t)\phi(x_t)^T]P) > \lambda_x\) for \(r \leq d^\ast\) and \(\lambda_r((I-P)^T\EE_{t-1}[\phi(x_t)\phi(x_t)^T](I-P)) = 0, \forall r > d^\ast\).
\end{enumerate}
Assumption \ref{Alambda} facilitates the generalization of Bayes gap \citep{hoffman2014correlation} to the kernel setting with non-i.i.d, time varying contexts.
It allows us to lower bound, with high probability, the \(r^{\textnormal{th}}\) eigenvalue of the cumulative second moment operator \(S_t:= \sum_{s=1}^t \phi(x_s) \phi(x_s)^T\) so that it is possible to learn the reward behavior in the low energy directions of the context at the same rate as the high energy ones with high probability.

We now provide a lower bound on the \(r^{th}\) eigenvalue of a compact self-adjoint operator. There are similar results in the setting where reward is a linear function of context, including Lemma 2 in \citet{gentile2014online} and Lemma 7 in \citet{li2017online} which provides lowest eigenvalue bounds with the assumption of linear reward and full rank covariance, and Theorem 2.2 in \citet{tu2017least} which assumes more structure to the contexts generated. There are results similar to Lemma 2 in  \citet{gentile2014online}, \citet{gentile2017context} and \citet{korda2016distributed}. We extend these results to the setting of a compact self-adjoint operator scenario with data occupying a finite dimensional subspace. 

Let \(W_t := \sum_{s=1}^t \EE_{s-1}[ (\phi(x_s) \phi(x_s)^T)^2 ] - (\EE_{s-1}[\phi(x_s) \phi(x_s)^T])^2\). By construction and Assumption \ref{Alambda} we can show that \(W_t\) has \(d^\ast\) non-zero eigenvalues (See Section 4.1 in the supplementary material). 
%
\begin{lemma}[Lower bound on \( r^{th} \) 
Eigen-value of compact self-adjoint operators] \label{lem:LowestEigenValueSelfAdjoint}
Let \( x_t \in \sX \), \( t \geq 1 \) be generated sequentially from a random process. Assume that conditions \ref{AsX}-\ref{Alambda} hold. Let \( p(t) = \min(-t,1)\) and \( \forall b \geq 0, a > \frac{1}{6}(L^2 + \sqrt{L^4 + 36b})\) let \(\tilde{d}:= 50 \sum_{r=1}^{d^\ast} p(-\frac{a\lambda_r(\EE W_t)}{L^2 b}) \leq 50 d^\ast \). Let 
\[ A(t,\delta) = \log \frac{(tL^4 +1)(tL^4 +3)\tilde{d}}{\delta},\]
and 
\[ h(t, \delta) = \Big(t \lambda_{x} - \frac{L^2}{3}\sqrt{18t A(t,\delta) + A(t,\delta)^2}) - \frac{L^2}{3} A(t,\delta) \Big).\]
Then for any \( \delta > 0 \), 
\begin{equation*} 
\lambda_{r}(S_t) \geq  h(t, \delta)_+
\end{equation*}
holds for all \( t > 0 \) with probability at least \( 1 - \delta \). Furthermore, if L =1, \(r \leq d^\ast\) and \( 0 < \delta \leq \frac{1}{8}\), then the event 
\[  \lambda_{r}(S_t) \geq \frac{t \lambda_{x}}{2} , \forall t \geq \frac{256}{\lambda_{x}^2} \log (\frac{128\tilde{d}}{\lambda_{x}^2 \delta}), \]
holds with probability at least \( 1 - \delta \) .  
\end{lemma}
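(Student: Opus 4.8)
\emph{Approach.} The plan is to split $S_t$ into a predictable drift plus a self-adjoint operator martingale, bound the drift from below using Assumption \ref{Alambda}, control the martingale with a dimension-free, time-uniform Freedman inequality, and recombine via Weyl's inequality. Concretely, set $D_s := \phi(x_s)\phi(x_s)^T - \EE_{s-1}[\phi(x_s)\phi(x_s)^T]$ and $V_t := \sum_{s=1}^t \EE_{s-1}[\phi(x_s)\phi(x_s)^T]$, so that $S_t = V_t + M_t$ with $M_t := \sum_{s=1}^t D_s$. The $(D_s)$ form a self-adjoint martingale difference sequence for the filtration generated by the contexts; by Assumption \ref{Akernel} the feature map is bounded, which yields an increment bound $\|D_s\|\le L^2$ and per-step conditional variance $\|\EE_{s-1}[D_s^2]\|\le L^4$. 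The key identity is that $\sum_{s=1}^t \EE_{s-1}[D_s^2] = W_t$ by the very definition of $W_t$, so the martingale's predictable quadratic variation is exactly $W_t$, with $\|W_t\|\le tL^4$.

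First I would handle the drift. Assumption \ref{Alambda} says every conditional second-moment operator has its $d^\ast$ largest eigenvalues at least $\lambda_x$ on the range of $P$ and no mass on $(I-P)$ beyond $d^\ast$; equivalently $P^T\EE_{s-1}[\phi(x_s)\phi(x_s)^T]P \succeq \lambda_x I$ on the $d^\ast$-dimensional subspace. Summing over $s$ and using that the process stays supported on this fixed subspace gives $\lambda_r(V_t) \ge t\lambda_x$ for every $r \le d^\ast$, which is the linear-in-$t$ signal appearing in $h(t,\delta)$.

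The heart of the argument is a high-probability, time-uniform upper bound on $\lambda_{\max}(-M_t)$. Because $\sH$ is infinite-dimensional, the ambient-dimension factor in the usual matrix Freedman bound is vacuous, so I would invoke the intrinsic-dimension version of Freedman's inequality for self-adjoint operator martingales, in which the effective dimension $\tilde d = 50\sum_{r\le d^\ast}\min\!\big(a\lambda_r(\EE W_t)/(L^2 b),\,1\big)\le 50 d^\ast$ replaces the ambient dimension; this exploits that $W_t$ has only $d^\ast$ nonzero eigenvalues (verified in the supplement). Since Freedman's bound is intrinsically any-time once the quadratic-variation budget is fixed, to make it valid for all $t$ I would stratify over the range of the quadratic variation $\|W_t\|\le tL^4$, which is what produces the polynomial factor $(tL^4+1)(tL^4+3)$ inside $A(t,\delta)$. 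The resulting tail has the form $\PP\{\lambda_{\max}(-M_t)\ge u\}\le \exp\!\big(A(t,\delta) - \tfrac{u^2/2}{\,\|W_t\| + L^2 u/3\,}\big)$; equating the exponent to $\log\delta$ and solving the quadratic in $u$ with $\|W_t\|\le tL^4$ and increment $L^2$ returns exactly $u = \tfrac{L^2}{3}A(t,\delta) + \tfrac{L^2}{3}\sqrt{18t\,A(t,\delta)+A(t,\delta)^2}$.

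Combining the two pieces through Weyl's inequality on the $d^\ast$-dimensional subspace, $\lambda_r(S_t) \ge \lambda_r(V_t) - \lambda_{\max}(-M_t) \ge t\lambda_x - u = h(t,\delta)$, and taking positive parts establishes the first claim. The \textbf{furthermore} statement is then pure algebra: with $L=1$ and $\delta\le \tfrac18$ it suffices to show $\tfrac13\sqrt{18tA+A^2}+\tfrac13 A \le \tfrac{t\lambda_x}{2}$, and since $A(t,\delta)$ grows only logarithmically in $t$, this holds once $t \ge \tfrac{256}{\lambda_x^2}\log\tfrac{128\tilde d}{\lambda_x^2\delta}$, which I would verify by upper-bounding $A(t,\delta)$ on that range and comparing the deviation to half the signal. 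I expect the operator-martingale concentration step to be the main obstacle: obtaining a genuinely dimension-free Freedman inequality that is simultaneously time-uniform, tracks the correct effective dimension $\tilde d$, and keeps the increment and variance constants consistent with Assumption \ref{Akernel}. The drift lower bound and the final threshold computation are comparatively routine.
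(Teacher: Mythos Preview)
Your proposal is correct and follows essentially the same route as the paper: the same drift--martingale decomposition (your $M_t$ is the paper's $-Z_t$), the same lower bound $\lambda_r(V_t)\ge t\lambda_x$ from Assumption~\ref{Alambda}, the same intrinsic-dimension Freedman inequality from \citet{minsker2017some} applied to the self-adjoint operator martingale, the same union bound over integer levels of the predictable quadratic variation to obtain the $(tL^4+1)(tL^4+3)$ factor, and the same recombination via Weyl/Horn. The only cosmetic difference is that the paper phrases the contrapositive (bounding $\|Z_t\|$ from below in terms of $\lambda_r(S_t)$ and then upper-bounding $\|Z_t\|$ via Freedman), whereas you state Weyl directly as $\lambda_r(S_t)\ge\lambda_r(V_t)-\lambda_{\max}(-M_t)$; these are equivalent, and the second-part algebra matches as well.
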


Lemma \ref{lem:LowestEigenValueSelfAdjoint} provides high probability lower bounds on the minimum nonzero eigenvalue of the cumulative second moment operator \(S_t\). Using the preceding lemma and the confidence interval defined in Theorem \ref{Thm:MartingaleKernelUCB}, it is possible to provide high probability monotonic bounds on the confidence interval widths \(s_{a,t}(x)\). 
 
\begin{lemma}[Monotonic upper bound of \(s_{a,t}(x_t) \) ] \label{lem:UCB_Bound_Self_Adjoint_Rankr}
Consider a contextual bandit simple regret minimization problem with assumptions \ref{AsX}-\ref{Alambda} and  fix \(T\). Assume \(\n{\phi(x)} \leq 1\), \( \lambda > 0 \) and \( \forall a \in [A] \), \( N_{a,t} > N_{\lambda} := \max \Big( \frac{2(1 -\lambda) }{\lambda_{x}} , d^\ast, \frac{256}{\lambda_{x}^2} \log (\frac{128\tilde{d}}{\lambda_{x}^2 \delta}) \Big) \). Then, for any \( 0 < \delta \leq \frac{1}{8} \),
\[ 
s_{a,t}(x_t)^2 \leq g_{a,t}(N_{a,t})
\]
with probability at least \( 1 - \delta \),
for the monotonically decreasing function \(g_{a,t}\) defined as \(g_{a,t}(N_{a,t}) := 8 (C_1 \beta + C_2)^2 \Big(   \frac{1}{\lambda+N_{a,t}\lambda_{x}/2}  \Big)\).
\end{lemma}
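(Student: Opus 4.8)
The plan is to convert the scalar ``variance'' $\hat{\sigma}_{a,t}^2(x)$ into a quadratic form involving the empirical second-moment operator in feature space, and then control that quadratic form with the eigenvalue lower bound of Lemma \ref{lem:LowestEigenValueSelfAdjoint}. Writing $\Phi := \phi(X_{a,t})$ and $\hat{S}_{a,t} := \Phi\Phi^T = \sum_{j\in D_{a,t}}\phi(x_j)\phi(x_j)^T$ on $\sH$, the Woodbury / push-through identity $(\lambda I + \Phi\Phi^T)^{-1} = \tfrac1\lambda I - \tfrac1\lambda \Phi(\lambda I + K_{a,t})^{-1}\Phi^T$ yields
\[
\hat{\sigma}_{a,t}^2(x) = k(x,x) - k_{a,t}(x)^T(K_{a,t}+\lambda I_{a,t})^{-1}k_{a,t}(x) = \lambda\,\langle\phi(x),(\hat{S}_{a,t}+\lambda I)^{-1}\phi(x)\rangle .
\]
Substituting this into the definition $s_{a,t}(x)^2 = 4(C_1\beta+C_2)^2\,\hat{\sigma}_{a,t}^2(x)/\lambda$ cancels the explicit $\lambda$ and leaves $s_{a,t}(x)^2 = 4(C_1\beta+C_2)^2\,\langle\phi(x),(\hat{S}_{a,t}+\lambda I)^{-1}\phi(x)\rangle$, so the whole problem reduces to upper bounding the resolvent quadratic form $\langle\phi(x_t),(\hat{S}_{a,t}+\lambda I)^{-1}\phi(x_t)\rangle$.

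Next I would use Assumption \ref{Alambda} to reduce to the $d^\ast$-dimensional subspace $\mathrm{range}(P)$. The vanishing-eigenvalue condition forces $\EE[\|(I-P)\phi(x_s)\|^2]=0$, hence $(I-P)\phi(x_s)=0$ almost surely, so every observed feature map and the test point $\phi(x_t)$ lie in $\mathrm{range}(P)$. Because $\hat{S}_{a,t}$ maps $\mathrm{range}(P)$ into itself and vanishes on $\mathrm{range}(I-P)$, the resolvent $(\hat{S}_{a,t}+\lambda I)^{-1}$ is block diagonal with respect to $\sH=\mathrm{range}(P)\oplus\mathrm{range}(I-P)$, and only the $\mathrm{range}(P)$ block contributes to the quadratic form.

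On that subspace I would apply the sharpened ($L=1$) conclusion of Lemma \ref{lem:LowestEigenValueSelfAdjoint} to the subsequence of arm-$a$ pulls, giving, with probability at least $1-\delta$, the bound $\lambda_r(\hat{S}_{a,t})\ge N_{a,t}\lambda_{x}/2$ for all $r\le d^\ast$ once $N_{a,t}\ge \tfrac{256}{\lambda_{x}^2}\log(\tfrac{128\tilde{d}}{\lambda_{x}^2\delta})$, which is exactly one of the terms in the definition of $N_{\lambda}$. Then every eigenvalue of the $\mathrm{range}(P)$ block of $(\hat{S}_{a,t}+\lambda I)^{-1}$ is at most $(\lambda+N_{a,t}\lambda_{x}/2)^{-1}$, and since $\|\phi(x_t)\|\le 1$,
\[
\langle\phi(x_t),(\hat{S}_{a,t}+\lambda I)^{-1}\phi(x_t)\rangle \le \frac{\|\phi(x_t)\|^2}{\lambda+N_{a,t}\lambda_{x}/2}\le \frac{1}{\lambda+N_{a,t}\lambda_{x}/2}.
\]
Multiplying through gives $s_{a,t}(x_t)^2 \le 4(C_1\beta+C_2)^2(\lambda+N_{a,t}\lambda_{x}/2)^{-1}\le g_{a,t}(N_{a,t})$, the stated constant $8$ leaving comfortable slack, while monotonic decrease of $g_{a,t}$ is immediate because $N\mapsto(\lambda+N\lambda_{x}/2)^{-1}$ is decreasing. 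The remaining terms $\tfrac{2(1-\lambda)}{\lambda_{x}}$ and $d^\ast$ in $N_{\lambda}$ serve, respectively, to force $\lambda+N_{a,t}\lambda_{x}/2\ge 1$ and to guarantee the subspace is genuinely populated so the eigenvalue bound holds for every $r\le d^\ast$.

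The step I expect to be the main obstacle is applying Lemma \ref{lem:LowestEigenValueSelfAdjoint} to $\hat{S}_{a,t}$ rather than to the full cumulative operator $S_t$: the index set $D_{a,t}$ is selected adaptively, since the algorithm observes $x_t$ before choosing $a_t$, so the contexts entering $\hat{S}_{a,t}$ form a data-dependent subsequence and I must verify that the conditional lower bound of Assumption \ref{Alambda} and the self-adjoint Freedman martingale argument underlying the lemma survive this selection. The cleanest way to discharge this is to lean on the context-independent round-robin burn-in phase, which supplies $N_{\lambda}$ ``clean'' samples per arm before the eigenvalue concentration is invoked.
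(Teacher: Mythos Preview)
Your approach is correct and is in fact more direct than the paper's. The paper works in the kernel-matrix picture: it uses a Schur-complement identity to write $\hat{\sigma}_{a,t}^2(x)/\lambda$ as $\det(K_{a,t+1}+\lambda I)/\bigl(\lambda\det(K_{a,t}+\lambda I)\bigr)-1$, rewrites this determinant ratio as $\prod_{i\le d^\ast}(1+\mu_i/\lambda_{i,a,t})-1$, and then invokes an AM--GM--type lemma (if $\sum_i\mu_i\le L$ and $\lambda_{d^\ast}\ge L$ then $\prod_i(1+\mu_i/\lambda_i)-1\le 2L/\lambda_{d^\ast}$) to reach $2/\lambda_{d^\ast,a,t}$, before plugging in the eigenvalue lower bound from Lemma~\ref{lem:LowestEigenValueSelfAdjoint}. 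Your push-through identity $\hat{\sigma}_{a,t}^2(x)=\lambda\,\langle\phi(x),(\hat{S}_{a,t}+\lambda I)^{-1}\phi(x)\rangle$ short-circuits the determinant machinery entirely, landing directly on a quadratic form whose operator norm on $\mathrm{range}(P)$ is controlled by the same eigenvalue bound. This buys you a sharper constant (your argument yields $4$ where the statement has $8$) and makes two of the three terms in $N_\lambda$ inessential for the inequality itself: the paper needs $N_{a,t}>2(1-\lambda)/\lambda_x$ precisely so that $\lambda_{d^\ast,a,t}\ge 1$ and its AM--GM lemma applies, and needs $N_{a,t}\ge d^\ast$ to truncate the eigenvalue product, whereas your route requires neither. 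The adaptive-subsequence concern you flag is real and is not handled any more carefully in the paper's proof; both arguments implicitly take for granted that Lemma~\ref{lem:LowestEigenValueSelfAdjoint} transfers to the arm-$a$ subsequence.
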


The condition \(N_{a,t} > N_\lambda\) results in a minimum number of tries that arm \(a\) has to be selected before any bound will hold. In \(N_\lambda := \max \Big( \frac{2(1 -\lambda) }{\lambda_{x}} , d^\ast, \frac{256}{\lambda_{x}^2} \log (\frac{128\tilde{d}}{\lambda_{x}^2 \delta}) \Big)\), the first and third term in the \( \max \) are needed so that we can give concentration bounds on eigenvalues and prove that the confidence width shrinks. The second term is needed because one has to get at least \( d^\ast \) contexts for every arm so that at least some energy is added to the lowest eigenvalues.

These high probability monotonic upper bounds on the confidence estimate can be used to upper bound the simple regret. The upper bound depends on a context-based hardness quantity defined for each arm \(a\) (similar to \citet{hoffman2014correlation}) as 
\begin{equation}
H_{a,\epsilon}(x) = \max(\frac{1}{2}(\Delta_a(x) + \epsilon), \epsilon). 
\end{equation}
Denote its lowest value as \(H_{a,\epsilon}:= \inf_{x \in \sX} H_{a,\epsilon}(x) \). Let total hardness be defined as \(H_\epsilon := \sum_{a \in [A]} H_{a, \epsilon}^{-2}\) (Note that \(H_{\epsilon} \leq \frac{A}{\epsilon^2}\)). 
The recommended arm after time \(t \geq T\) is defined as \[\Omega(x) = J_{ \argmin_{A N_{\lambda} +1 \leq \tau \leq T} B_{J_{\tau}(x_t), t}(x_t)}(x_t)\] from Algorithm \ref{alg:contextualGap}. 
We now upper bound the simple regret as follows:
\begin{theorem}
\label{thm:MainTheorem}
Consider a contextual bandit problem as defined in Section \ref{sec:formal_setting} with assumptions \ref{AsX}-\ref{Alambda}. For  \( 0 < \delta \leq \frac{1}{8} \), \( \epsilon > 0 \) and \( N_{\lambda} := \max \Big( \frac{2(1 -\lambda) }{\lambda_{x}} , d^\ast, \frac{256}{\lambda_{x}^2} \log (\frac{128\tilde{d}}{\lambda_{x}^2 \delta}) \Big) \), 
let 
\begin{equation}\label{eq:beta}
\beta    = \sqrt{\frac{\lambda_{x}(T - N_{\lambda}( A -1)) + 2 A\lambda}{ 16  C_1^2 H_{\epsilon} }} - \frac{C_2}{C_1}.
\end{equation}
For all \(t > T\) and \(\epsilon  > 0\),
\begin{equation}
\PP (R_{\Omega(x_t)}(x_t) < \epsilon | x_t) \geq 1 - A(T - AN_\lambda) e^{-\beta^2} - A\delta.
\end{equation}
\end{theorem}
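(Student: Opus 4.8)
The plan is to follow the template of the UGapEb/Bayes-Gap fixed-budget analysis, adapted to the contextual kernel setting, with the two new lemmas supplying the non-i.i.d.\ contextual concentration that replaces the classical i.i.d.\ arguments. Throughout I condition on the single exploitation context $x_t$ and work on one high-probability ``clean'' event. First I would define the clean event as the intersection of two families of events. Applying Theorem \ref{Thm:MartingaleKernelUCB} for each arm $a$ and each exploration round $\tau$, the confidence containment $L_{a,\tau}(x) \le f_a(x) \le U_{a,\tau}(x)$ holds simultaneously in $x$; a union bound over the $A$ arms and the $T-AN_\lambda$ rounds of Exploration Phase II contributes the $A(T-AN_\lambda)e^{-\beta^2}$ term. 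Applying Lemma \ref{lem:UCB_Bound_Self_Adjoint_Rankr} to each arm (which itself rests on the eigenvalue lower bound of Lemma \ref{lem:LowestEigenValueSelfAdjoint}) gives the monotone width bound $s_{a,\tau}(x_t)^2 \le g_{a,\tau}(N_{a,\tau})$ once $N_{a,\tau} > N_\lambda$, each failing with probability at most $\delta$ and contributing the $A\delta$ term. The burn-in phase guarantees $N_{a,\tau} > N_\lambda$, so Lemma \ref{lem:UCB_Bound_Self_Adjoint_Rankr} is in force on the clean event.

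Next I would reduce simple regret to the gap index. On the clean event, for any arm $a$ and time $t$, writing $i^\ast$ for the best arm at $x_t$ and using $f^\ast(x_t) \le U_{i^\ast,t}(x_t)$ together with $f_a(x_t) \ge L_{a,t}(x_t)$ yields $R_a(x_t) \le B_{a,t}(x_t)$ whenever $a \ne i^\ast$ (and $R_a(x_t)=0$ otherwise). Since the recommended arm satisfies $\Omega(x_t)=J_\iota(x_t)$ with $\iota = \argmin_\tau B_{J_\tau(x_t),t}(x_t)$, this gives $R_{\Omega(x_t)}(x_t) \le B_{\Omega(x_t),t}(x_t) = \min_\tau B_{J_\tau(x_t),t}(x_t)$. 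It therefore suffices to exhibit a single exploration round $\tau^\ast$ with $B_{J_{\tau^\ast}(x_t),t}(x_t) < \epsilon$.

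The core of the argument is a counting/pigeonhole step. Expanding $B_{J_\tau}(x_t) = U_{j_\tau}(x_t) - L_{J_\tau}(x_t)$ and bounding the two confidence bounds by the true rewards plus the half-widths, I would establish a per-round inequality showing that $B_{J_\tau(x_t),\tau}(x_t) \ge \epsilon$ forces the width of the pulled arm $a_\tau = \argmax_{a\in\{J_\tau,j_\tau\}} s_{a,\tau}(x_t)$ to remain above a threshold proportional to the hardness $H_{a_\tau,\epsilon}(x_t)$. Plugging in the monotone bound $s_{a,\tau}^2 \le 8(C_1\beta+C_2)^2/(\lambda + N_{a,\tau}\lambda_x/2)$ then caps how many times each arm can be pulled while its gap index stays $\ge \epsilon$; the cap for arm $a$ is proportional to $(C_1\beta+C_2)^2 H_{a,\epsilon}^{-2}/\lambda_x$. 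Summing over arms and invoking $H_\epsilon = \sum_a H_{a,\epsilon}^{-2}$ bounds the total number of Phase-II pulls by a multiple of $(C_1\beta+C_2)^2 H_\epsilon/\lambda_x$. The choice of $\beta$ in \eqref{eq:beta} is calibrated precisely so that this cap is strictly below the actual budget of Phase-II rounds, producing a contradiction unless some round $\tau^\ast$ achieves $B_{J_{\tau^\ast}(x_t),\tau^\ast}(x_t) < \epsilon$; since $N_{a,t} \ge N_{a,\tau^\ast}$ at exploitation time, the monotone width bound makes the index evaluated on the full data no larger, giving $B_{J_{\tau^\ast}(x_t),t}(x_t) < \epsilon$ as required.

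I expect the main obstacle to be the counting step: getting the per-round inequality to interact correctly with the context-dependent gaps $\Delta_a(x_t)$ and hardness $H_{a,\epsilon}(x_t)$, and pinning down the constants so that the budget arithmetic reproduces exactly the expression in \eqref{eq:beta}, including the $2A\lambda$ offset and the $N_\lambda(A-1)$ burn-in correction. A secondary subtlety is transferring the guarantee from the exploration-round index $B_{\cdot,\tau^\ast}$ to the exploitation-time index $B_{\cdot,t}$ used by the recommendation rule, which relies on the monotone width bound of Lemma \ref{lem:UCB_Bound_Self_Adjoint_Rankr} holding uniformly across $\tau \le t$; this is exactly where the burn-in condition $N_{a,\tau} > N_\lambda$ and the boundedness/separability assumptions \ref{AsX}--\ref{Alambda} enter.
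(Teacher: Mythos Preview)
Your proposal is essentially the paper's proof: define the clean event via Theorem~\ref{Thm:MartingaleKernelUCB} and Lemma~\ref{lem:UCB_Bound_Self_Adjoint_Rankr}, reduce $R_{\Omega(x_t)}$ to the gap index via $B_{a,t}\ge R_a$ for suboptimal $a$, then run the Bayes-Gap contradiction on the \emph{virtual} trajectory (what the algorithm would pull if it saw $x_t$ at every Phase-II round), using the per-round inequality $B_{J_\tau(x_t),\tau}(x_t)\le \min(0,s_{a_\tau,\tau}(x_t)-\Delta_{a_\tau}(x_t))+s_{a_\tau,\tau}(x_t)$ to force $s_{a_\tau,\tau}(x_t)>H_{a_\tau,\epsilon}(x_t)$, invert $g_{a,T}$ to cap virtual pull counts, and sum to contradict the budget. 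The paper packages the per-round inequality as a separate lemma (its Lemma on $B_{J_t}\le \min(0,s-\Delta)+s$, proved by a six-case analysis over $a_t\in\{j_t,J_t\}$ and the location of $a^\ast$), and sets $\beta$ via $\sum_a g_{a,T}^{-1}(H_{a,\epsilon})=T-N_\lambda(A-1)$, which is exactly the arithmetic you anticipate.

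One small correction to your ``secondary subtlety'': you propose to transfer $B_{J_{\tau^\ast}(x_t),\tau^\ast}(x_t)<\epsilon$ to $B_{J_{\tau^\ast}(x_t),t}(x_t)<\epsilon$ via the monotone width bound. This does not work as stated, because $B_{a,\cdot}$ depends on the point estimates $\hat f_{i,\cdot}$ as well as the widths, and those are not monotone. The paper avoids this transfer entirely: it applies the inequality $B_{a,\tau}\ge R_a$ (valid on $\sE$ for any $\tau$ and any suboptimal $a$) at the \emph{minimizing} round $\iota$, obtaining $R_{\Omega(x_t)}(x_t)\le B_{J_\iota(x_t),\iota}(x_t)=\min_\tau B_{J_\tau(x_t),\tau}(x_t)$ directly. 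Then the contradiction argument only needs some $\tau^\ast$ with $B_{J_{\tau^\ast}(x_t),\tau^\ast}(x_t)<\epsilon$, and no transfer to time $t$ is required. With this adjustment your outline matches the paper's proof.
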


Note that the term \( C_2 \) in \eqref{eq:beta} grows logarithmically in \( T\) (see supplementary material). For \( \beta \) to be positive, \( T \) should be greater than \(\frac{16 H_{\epsilon}C_2^2 -2 A\lambda}{\lambda_x} + N_{\lambda}(A-1) \). 
We compare the term \(e^{-\beta^2} \) in our bound with the uniform sampling technique in \citet{guan2018nonparametric} which leads to a bound that decay like \( Ce^{-c T^{\frac{2}{(d_1 + d)}}} \geq Ce^{-cT^{\frac{2}{(2+d)}}}\), where \( d_1 \geq 2 \), \( d \) is the context dimension, and \(C\) and \(c\) are constants. In our case, the decay rate has the form \( C^\prime Te^{-c^\prime T} \) for constants \(C^\prime, c^\prime\). Clearly, our bound is superior for \( \forall d \geq 1 \).
We can also compare Theorem \ref{thm:MainTheorem} with Bayes Gap  \citep{hoffman2014correlation} and UGapEb \citep{gabillon2012best} which provide simple regret guarantees in the multi-armed bandit setting. Bayes Gap and UGapEb have regret bounds of order \( O(AT e^{-\frac{T-A}{H_{\epsilon}}}) \) and we provide bounds of order \( O(A(T-AN_{\lambda}) e^{-\frac{T-AN_{\lambda}}{H_{\epsilon}}}) \). Ignoring other constants, our method has the additional term \( N_{\lambda}\) which is required because algorithm needs to see enough number of contexts to get information about context space and to become confident in the reward estimates in that context space.
The simple regret bound is also dependent on the gap between the arms. A larger gap quantity \( \Delta_a \) implies a larger \( H_{a, \epsilon}\) which implies that quantity \( e^{-\frac{1}{H_{\epsilon}}} \) is small. This means that a larger gap quantity leads to a faster rate.

Note that there are two choices for simple regret analysis: 1) bounding the simple regret uniformly (Theorem \ref{thm:MainTheorem}) and 2) average simple regret  \(\Big( \sum_{t >T} R_{\Omega(x_t)}(x_t) \Big)\). We bound the simple regret uniformly and it may require stronger distributional assumptions (e.g. Assumption \ref{Alambda} ) compared to average simple regret. 
Furthermore, we provide uniform bounds and not average simple regret bounds since our problem setting of simple regret minimization and the motivating application require performance guarantees for every time step during exploitation, as opposed to average simple regret guarantees.

\section{Experimental Results and Discussion} 
\label{sec:results_and_discussion}
We present results from two different experimental setups, first is synthetic data, and second from a lab generated non-i.i.d spacecraft magnetic field as described in Section 2. We present average simple regret comparisons of the Contextual-Gap algorithm against five baselines: Uniform sampling, Epsilon Greedy, kernel-UCB (\cite{valko2013finite}), Kernel-UCB-Mod (Kernel UCB for exploration but best estimated reward for exploitation, Kernel-TS (Thompson Sampling) \cite{chowdhury2017kernelized}

\begin{figure*}[htb]
\centering
        \includegraphics[width=1.0\textwidth]{./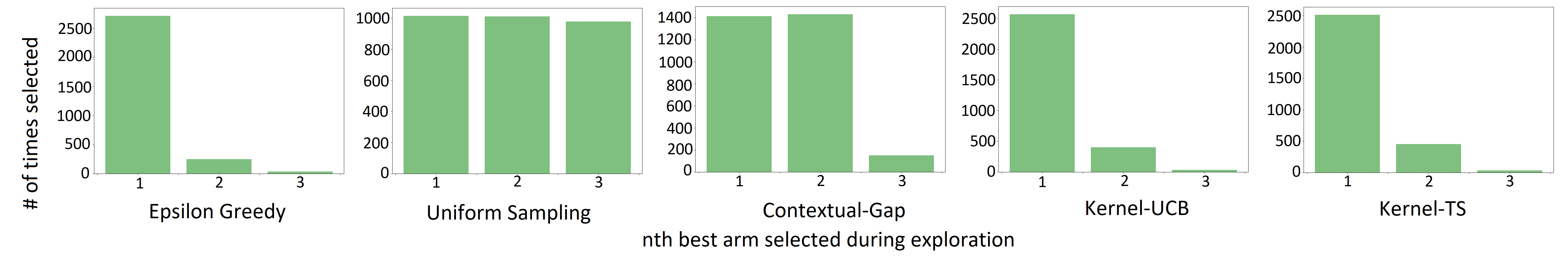}
        \caption{Histogram of Arm Selection during exploration}
        \label{fig:histogram_spacecraft_labexperimental}
\end{figure*}

For all the algorithms, we use the Gaussian kernel and tune the bandwidth of the kernel, and the regularization parameter (more details in supplementary material). The tuned parameters were used with the evaluation datasets to generate the plots. The code is available online to reproduce all results \footnote{The code to reproduce our results is available at \url{https://bit.ly/2Msgdfp}}.

The exploration parameter \( \alpha := C_1 \beta + C_2 \) is set to \(1\) for the results in this section. For spacecraft magnetic field dataset, we show results for different values of \( \alpha \) and worst case simple regret in the supplementary material. Note that one can not determine optimal $\alpha$ using theory because even if \( \lambda_x\) and \(N_\lambda \) are known, \(\alpha \) still needs to be tuned because of other constants involved and confidence bounds tend to be lose in practice. Also, Contextual-Gap is best no matter what \( \alpha \) is and it is not as sensitive to \( \alpha \) as other methods as shown in supplementary material.

\subsection{Synthetic Dataset}
We present results of contextual simple regret minimization for a synthetic dataset. At every time step, we observe a one dimensional feature vector \( x_t \sim \mathcal{U}[0, 2\pi] \), where \( \mathcal{U} \) is a uniform distribution. There are 20 arms and reward for each arm \(a \) is \( r_{a,t}  := \sin(a * x_t)\), where \( a = [1,2...,20] \).  The arm with the highest reward at time \( t \) is the best arm. At every time step, we only observe the reward for the arm that the algorithm selects. 

Since the dataset is i.i.d in nature, multiple simple regret evaluations are performed by shuffling the evaluation dataset, and the average curves are reported. Note that the algorithms have been cross validated for simple regret minimization. The plots are generated by varying the length of the exploration phase and keeping the exploitation dataset constant for evaluation of simple regret. It can be seen that the simple regret of the Contextual-Gap converges faster than the simple regret of other baselines. 

\begin{figure}[htb]
\vspace{-6pt}
\centering
        \includegraphics[width=0.44\textwidth]{./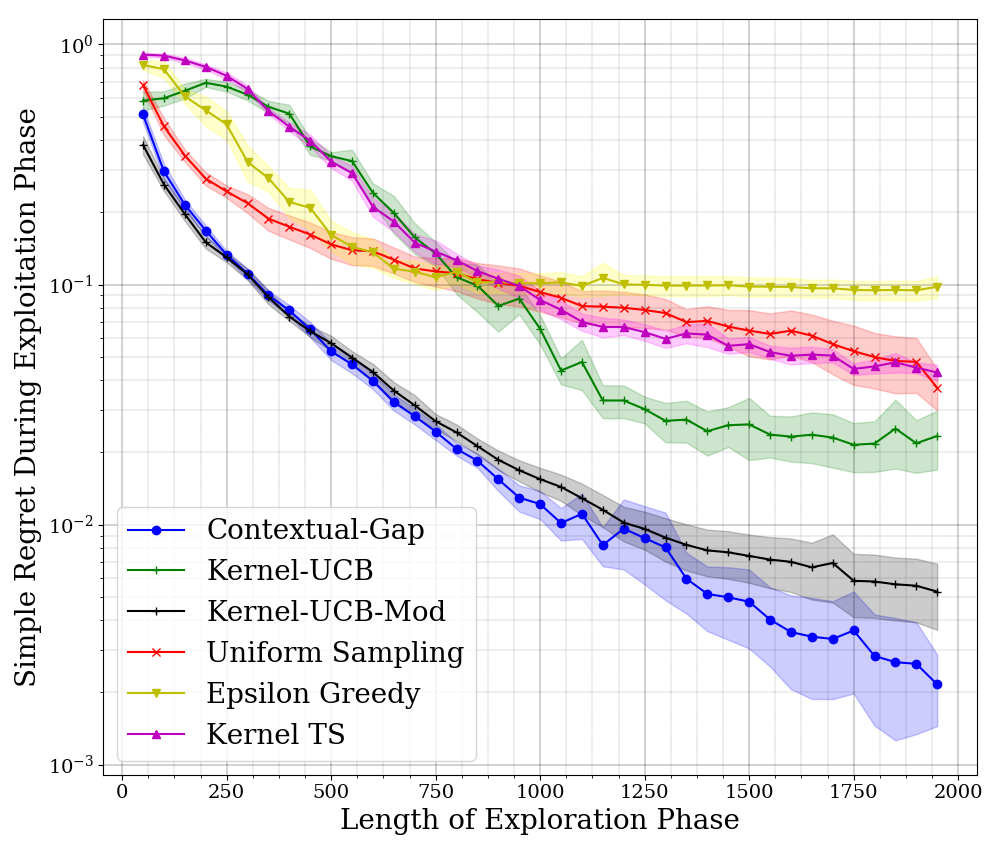}
        \caption{Avg. Simple Regret Evaluation on Synthetic Dataset}
        \label{fig:syntetic}
        \vspace{-6pt}
\end{figure}

\subsection{Experimental Spacecraft Magnetic Field Dataset}

We present the experimental setup and results associated with a lab generated, realistic spacecraft magnetic field dataset with \emph{non-i.i.d contexts}. In spacecraft magnetic field data, we are interested in identifying the least noisy sensor for every time step (see Section 2).

The dataset was generated with contexts \(x_t\) consisting of measured variables associated with the electrical behavior of the GRIFEX spacecraft \citep{norton2012spaceborne, cutler2015grfx}, and reward is the negative of the magnitude of the sensor noise measured at every time step.

Data were collected using 3 sensors (arms), and sensor readings were downloaded for all three sensors at all times steps, although the algorithm does not know these in advance and must select one sensor at each time step. The context information was used in conjunction with a realistic simulator to generate spacecraft magnetic field, and hence a realistic model of sensor noise, as a function of context. The true magnetic field was computed using models of the earth's magnetic field.

\begin{figure}[htbp]
\centering
        \includegraphics[width=0.44\textwidth]{./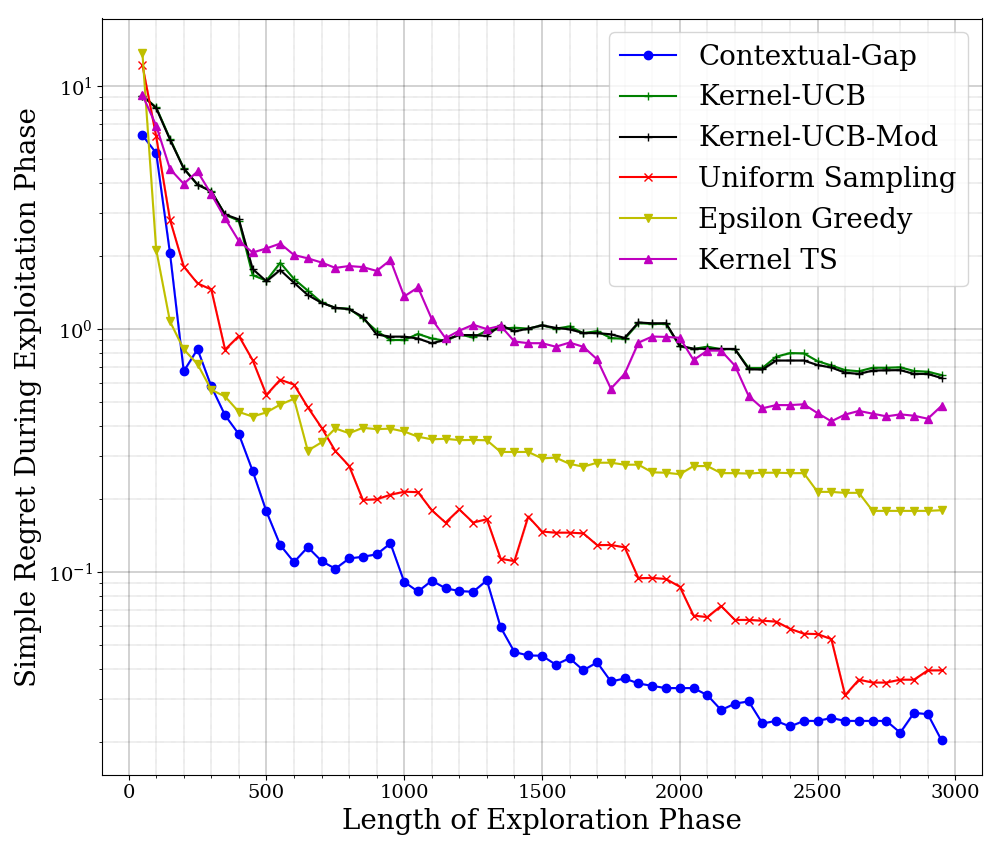}
        \caption{Average Simple Regret Evaluation on Spacecraft Magnetic Field Dataset}
        \label{fig:Spacecraft_alpha_1}
\end{figure}

Histogram of number times the best, second best and third best (worst) arms are selected during exploration is shown in Figure \ref{fig:histogram_spacecraft_labexperimental}. As expected, algorithms designed to minimize cumulative regret focus on the best arm more and Contextual-Gap explores best and second best arms.
Figure \ref{fig:Spacecraft_alpha_1} shows the simple regret minimization curves for the spacecraft data-set and even in this case Contextual-Gap converges faster compared to other algorithms. 

\section{Conclusion}
\label{sec:conclusion_future_work}
In this work, we present a novel problem: that of simple regret minimization in the contextual bandit setting. We propose the Contextual-Gap algorithm, give a regret bound for the simple regret, and show empirical results on three multiclass datasets and one lab-based spacecraft magnetometer dataset.  It can be seen that in this scenario persistent and efficient exploration of the best and second best arms with the Contextual-Gap algorithm provides improved results compared against algorithms designed to optimize cumulative regret.

\bibliographystyle{apalike}
\bibliography{references}

\section{Remarks}
\begin{itemize}
    \item In Section \ref{sec:filtration}, we formalize the notion of History \( H_{t}\) in the main paper.
    \item Detailed comments about constant \( C_2 \) (Theorem 4.1 of the main paper) are at the end of Subsection \ref{subsec:MainTheoremProof}.
    \item Detailed experiments and results about different \( \alpha \) are in Section \ref{sec:DetailedExperiments}.
\end{itemize}

\section{Probabilistic Setting and Martingale Lemma}
\label{sec:filtration}
For the theoretical results, the following general probabilistic framework is adopted, following \citet{abbasi2011improved} and \citet{durand2017streaming}. We formalize the notion of history \( H_{t}\) defined in the Section 3 of the main paper using filtration. A filtration is a sequence of \(\sigma\)-algebras \(\{\mathcal{F}_t\}_{t=1}^{\infty}\) such that \(\mathcal{F}_1 \subseteq \mathcal{F}_2 \subseteq \cdots \subseteq \mathcal{F}_n \subseteq \cdots \). Let \(\{\mathcal{F}_t\}_{t=1}^{\infty}\) be a filtration such that $x_t$ is $\mathcal{F}_{t-1}$ measurable, and $\zeta_t$ is $\mathcal{F}_{t}$ measurable. For example, one may take \(\mathcal{F}_t:= \sigma(x_1, x_2, \cdots, x_{t+1}, \zeta_1, \zeta_2, \cdots, \zeta_t)\), i.e., \( \mathcal{F}_t \) is the \( \sigma-\)algebra generated by  \( x_1, x_2, \cdots, x_{t+1}, \zeta_1, \zeta_2, \cdots, \zeta_t\).  

We assume that \(\zeta_t\) is a zero mean, \(\rho\)-conditionally sub-Gaussian random variable, i.e., \(\zeta_t\) is such that for some \(\rho > 0\) and \(\forall \gamma \in \RR \),
\begin{equation}
\EE[e^{\gamma\zeta_t}|\mathcal{F}_{t-1}] \leq \exp\bigg(\frac{\gamma^2 \rho^2}{2} \bigg).
\end{equation}

\begin{definition}[Definition 4.11 in \citet{motwani_raghavan_1995}]
Let \( (\Sigma,\mathcal{F}, Pr) \) be a probability space with filtration \( \mathcal{F}_0, \mathcal{F}_1, \dots \). Suppose that \( Z_0, Z_1, \dots \) are random variables such that for all \( i > 0\), \( Z_i \) is \( \mathcal{F}_i \) measurable. The sequence  \( Z_0, Z_1, \dots \) is a martingale provided for all \( i \geq 0 \),
\[
\EE[Z_{i+1}|\mathcal{F}_i] = Z_i.
\]
\end{definition}

 \begin{lemma}[Theorem 4.12 in \citet{motwani_raghavan_1995}]
 Any subsequence of a martingale is also a martingale (relative to the corresponding subsequence of the underlying filter).
 \end{lemma}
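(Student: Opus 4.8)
The plan is to verify the two defining properties of a martingale for the subsequence directly, reducing everything to the single-step property of the original martingale via the tower rule. Fix a strictly increasing sequence of indices \( i_0 < i_1 < i_2 < \cdots \) and consider the subsequence \( Z_{i_0}, Z_{i_1}, \dots \) together with the sub-filtration \( \mathcal{F}_{i_0}, \mathcal{F}_{i_1}, \dots \). The first thing to note is that this sub-filtration is genuinely a filtration: since the index sequence is increasing and the original filtration is nested, \( \mathcal{F}_{i_k} \subseteq \mathcal{F}_{i_{k+1}} \) for every \( k \). Measurability is immediate, because \( Z_{i_k} \) is \( \mathcal{F}_{i_k} \)-measurable by hypothesis, so the only substantive point is the martingale identity \( \EE[Z_{i_{k+1}} | \mathcal{F}_{i_k}] = Z_{i_k} \).

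First I would establish the multi-step version of the martingale property for the original sequence: for any \( n < m \), \( \EE[Z_m | \mathcal{F}_n] = Z_n \). This follows by induction on \( m - n \). The base case \( m = n+1 \) is exactly the definition. For the inductive step, since \( \mathcal{F}_n \subseteq \mathcal{F}_{m-1} \), the tower property gives \( \EE[Z_m | \mathcal{F}_n] = \EE[\EE[Z_m | \mathcal{F}_{m-1}] | \mathcal{F}_n] = \EE[Z_{m-1} | \mathcal{F}_n] \), where the inner conditional expectation collapses to \( Z_{m-1} \) by the one-step property; the induction hypothesis then reduces this to \( Z_n \).

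Applying this multi-step identity with \( n = i_k \) and \( m = i_{k+1} \) --- valid because \( i_k < i_{k+1} \) --- yields exactly \( \EE[Z_{i_{k+1}} | \mathcal{F}_{i_k}] = Z_{i_k} \), which, combined with the measurability and filtration observations above, completes the verification.

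I do not anticipate a genuine obstacle here: the result is elementary once one observes that passing to a subsequence only ever requires conditioning \( Z_{i_{k+1}} \) on an earlier \( \sigma \)-algebra \( \mathcal{F}_{i_k} \), and the nesting \( \mathcal{F}_{i_k} \subseteq \mathcal{F}_{i_{k+1}-1} \) makes the tower property applicable. The one place to be careful is to invoke the tower property in the correct direction (conditioning on the coarser \( \sigma \)-algebra last) and to keep the induction anchored on the single-step property rather than assuming the multi-step identity prematurely.
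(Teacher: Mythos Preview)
Your proof is correct and is the standard elementary argument via the tower property. The paper does not supply its own proof of this lemma at all; it simply quotes the result as Theorem~4.12 of \citet{motwani_raghavan_1995}, so there is nothing further to compare.
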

The above Lemma is important because we construct confidence intervals for each arm separately. Note that we define a subset of time indices ( \(D_{a,t}\) of each arm \(a\)), when the arm \( a \) was selected. Based on these indices we can form sub-sequences of the main context \(\{x_t\}_{t=1}^\infty\) and noise sequence \(\{\zeta_t\}_{t=1}^\infty\) such that the assumptions on the main sequence hold for subsequences.

\subsection{Theorem 4.1 in Main Paper}
Theorem 4.1 is a slight modification of Theorem 2.1 in \citet{durand2017streaming}. In the contextual bandit setting in \citet{durand2017streaming}, for any \(\delta \in (0,1] \), Theorem 2.1 in \citet{durand2017streaming} establishes that with probability at least \(1-\delta\), it holds simultaneously over all \(x \in \sX\) and \(t \geq 0\),
\begin{multline*}
|f_a(x) - \hat{f}_{a,t}(x)| \leq \\ \frac{\hat{\sigma}_{a,t}(x)}{\sqrt{\lambda}}\Bigg[ \sqrt{\lambda}\n{f_a}_{\sH} + \rho\sqrt{2\ln(1/\delta) + 2 \gamma_t(\lambda)} \Bigg],
\end{multline*}
where \( \gamma_t(\lambda) = \frac{1}{2} \sum_{\tau=1}^t \ln( 1+ \frac{1}{\lambda} \hat{\sigma}_{a,\tau-1}(x_\tau) )\)

For  \( T  \geq t \), one can replace \(t\) in the log terms with \(T\). Then \( \forall x, \forall t \geq 1 \), we have

\begin{multline*}
 1 - \delta \leq \PP \Bigg( |f_a(x) - \hat{f}_{a,t}(x)| \leq \\ \frac{\hat{\sigma}_{a,t}(x)}{\sqrt{\lambda}}\Big[ \sqrt{\lambda}\n{f_a}_{\sH} + \rho\sqrt{2\ln(1/\delta) + 2 \gamma_T(\lambda)} \Big]  \Bigg). 
\end{multline*}
Let \(\delta = e^{-\beta^2}\). In that case, 
\begin{multline*}
1 - e^{-\beta^2} \leq \PP \Bigg( |f_a(x) - \hat{f}_{a,t}(x)| \leq \\ \frac{\hat{\sigma}_{a,t}(x)}{\sqrt{\lambda}} \Big[ \sqrt{\lambda}\n{f_a}_{\sH} + \rho\sqrt{2 \beta^2 + 2 \gamma_T(\lambda)} \Big]  \Bigg).
\end{multline*}
Using triangle inequality \( \sqrt{p+ q} \leq \sqrt{p} + \sqrt{q} \) for any \( p, q \geq 0 \),
\begin{multline*}
 1 - e^{-\beta^2} \leq  \PP \Bigg( |f_a(x) - \hat{f}_{a,t}(x)| \leq \\ \frac{\hat{\sigma}_{a,t}(x)}{\sqrt{\lambda}} \Big[ \sqrt{\lambda}\n{f_a}_{\sH} + \rho\sqrt{2 \beta^2} + \rho\sqrt{ 2 \gamma_T(\lambda)} \Big]  \Bigg).
\end{multline*}
 Let \(C_1 = \rho\sqrt{2}  \) and \( C_2 = \sqrt{\lambda}\n{f_a}_{\sH}  + \rho\sqrt{ 2 \gamma_T(\lambda)}\). Hence, we have
 \begin{multline*}
 1 - e^{-\beta^2} \leq  \PP \Bigg( |f_a(x) - \hat{f}_{a,t}(x)| \leq \\ \frac{\hat{\sigma}_{a,t}(x)}{\sqrt{\lambda}} [ C_1\beta + C_2 ] \Bigg). 
\end{multline*}

\section{Lower Bound on \texorpdfstring{\(r^{\textnormal{th}}\)}{rth} Eigenvalue}

First we state the Lemmas that we use to prove Lemma 5.1 in main paper. 
\begin{lemma}[Lemma 9 in \citet{li2017online}] \label{lem:tGeqLogt}
		If \( a>0, b>0, ab\geq e \), then for all \( t \geq 2a\log(ab) \),
		\begin{equation}
			\label{eq:tGeqLogt}
			t \geq a\log(bt).
		\end{equation}
	\end{lemma}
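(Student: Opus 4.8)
The plan is to reduce the inequality to its value at the left endpoint and then exploit monotonicity. Define $g(t) := t - a\log(bt)$ for $t > 0$, so the claim is exactly that $g(t) \geq 0$ for all $t \geq t_0$, where $t_0 := 2a\log(ab)$. First I would differentiate: $g'(t) = 1 - a/t$, which is nonnegative precisely when $t \geq a$. Since $ab \geq e$ forces $\log(ab) \geq 1$, the threshold obeys $t_0 = 2a\log(ab) \geq 2a > a$, so $g$ is strictly increasing on $[t_0,\infty)$. Consequently it suffices to verify the single inequality $g(t_0) \geq 0$.

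Next I would evaluate $g$ at $t_0$. Substituting and dividing through by $a > 0$, the inequality $t_0 \geq a\log(bt_0)$ becomes $2\log(ab) \geq \log\!\big(2ab\log(ab)\big)$. Writing $u := ab$ and using $2\log u = \log(u^2)$, monotonicity of $\log$ turns this into $u^2 \geq 2u\log u$, i.e.\ (dividing by $u > 0$) into $u \geq 2\log u$. Thus the whole statement collapses to the scalar inequality $u \geq 2\log u$ for $u = ab \geq e$.

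Finally I would establish $u \geq 2\log u$. Let $h(u) := u - 2\log u$; then $h'(u) = 1 - 2/u$ vanishes only at $u = 2$, which is the global minimizer, and $h(2) = 2 - 2\log 2 > 0$. Hence $h(u) > 0$ for every $u > 0$, in particular for $u \geq e$. Tracing the equivalences back gives $g(t_0) \geq 0$, and the monotonicity of $g$ on $[t_0,\infty)$ then yields $g(t) \geq 0$ for all $t \geq t_0$, which is the claim.

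The argument is entirely elementary; the only place requiring care is the endpoint reduction, where one must correctly handle the nested logarithm $\log(bt_0)$ and confirm that the monotonicity threshold $a$ sits below $t_0$. Both of these hinge on the hypothesis $ab \geq e$ (equivalently $\log(ab) \geq 1$), so I expect the \emph{main obstacle}, such as it is, to be bookkeeping rather than any genuine analytic difficulty.
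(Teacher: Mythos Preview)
Your argument is correct. The paper itself does not supply a proof of this lemma; it is simply quoted as Lemma~9 of \citet{li2017online} and used as a black box, so there is no in-paper proof to compare against. Your reduction via monotonicity of $g(t)=t-a\log(bt)$ on $[t_0,\infty)$ together with the endpoint check $u\geq 2\log u$ for $u=ab\geq e$ is a clean self-contained verification.
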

 
 \begin{lemma}[Lemma 1.1 in \citet{zi2009schur}]\label{lem:zi2009schur}
Let $A \in \mathbb{R}^{n \times n}$ be a symmetric positive definite matrix partitioned according to	\[
	A =
	\left[
	\begin{array}{c|c}
	A_{11} & A_{12} \\
	\hline
	A_{12}^T & A_{22}
	\end{array}
	\right]
	,\] where $A_{11} \in  \mathbb{R}^{(n-1) \times (n-1)}, A_{12} \in \mathbb{R}^{(n-1)}$ and $A_{22} \in  \mathbb{R}^1$. Then $\det(A) = \det(A_{11}) (A_{22} - A_{12}^TA_{11}^{-1}A_{12}) $.
\end{lemma}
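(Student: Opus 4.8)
\emph{Proof proposal.} The plan is to establish the identity through the block triangular (block $LDU$) factorization of $A$, followed by multiplicativity of the determinant. The observation that makes the argument go through is that, since $A$ is symmetric positive definite, its leading principal submatrix $A_{11}$ is also symmetric positive definite and hence invertible; this is exactly what is required for the Schur complement $A_{22} - A_{12}^T A_{11}^{-1} A_{12}$ to be well defined.

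First I would exhibit the factorization $A = LDU$, in which $L$ is the block lower-unitriangular matrix whose only nonzero off-diagonal block is $A_{12}^T A_{11}^{-1}$ in the lower-left position, $U$ is the block upper-unitriangular matrix whose only nonzero off-diagonal block is $A_{11}^{-1} A_{12}$ in the upper-right position, and $D$ is the block-diagonal matrix with diagonal blocks $A_{11}$ and the scalar $A_{22} - A_{12}^T A_{11}^{-1} A_{12}$. I would verify $A = LDU$ by direct block multiplication: the upper-left block returns $A_{11}$, the off-diagonal blocks return $A_{12}$ and $A_{12}^T$, and in the lower-right scalar block the term $A_{12}^T A_{11}^{-1} A_{12}$ contributed by the triangular factors cancels the $-A_{12}^T A_{11}^{-1} A_{12}$ inside the Schur complement, recovering $A_{22}$.

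With the factorization in hand, I would take determinants and use $\det(A) = \det(L)\,\det(D)\,\det(U)$. The factors $L$ and $U$ are block-unitriangular with identity diagonal blocks, so $\det(L) = \det(U) = 1$. Since $D$ is block diagonal with lower-right block equal to the scalar $A_{22} - A_{12}^T A_{11}^{-1} A_{12}$, we have $\det(D) = \det(A_{11})\,(A_{22} - A_{12}^T A_{11}^{-1} A_{12})$, which is exactly the claimed identity.

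I do not anticipate a genuine obstacle: this is the standard determinant formula arising from one step of block Gaussian elimination. The only points deserving care are the justification that $A_{11}^{-1}$ exists (from positive definiteness) and the bookkeeping in the block product confirming that the off-diagonal cross terms cancel as stated.
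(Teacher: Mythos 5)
Your proposal is correct and complete: the block $LDU$ factorization with unitriangular outer factors, together with multiplicativity of the determinant and the observation that positive definiteness of $A$ makes the leading principal block $A_{11}$ invertible, is precisely the standard proof of this Schur complement determinant identity. Note that the paper itself offers no proof of this lemma --- it is imported verbatim as Lemma 1.1 of the cited reference --- so there is no in-paper argument to diverge from; your derivation is the canonical one underlying that citation and fills the gap correctly.
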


\begin{lemma}[Special case of extended Horn's inequality (Theorem 4.5 of \citet{bercovici2009horn})] \label{lem:Weyl_Infinite}
 Let \( A,B \) be compact self-adjoint operators. Then for any \( p \geq 1 \),
 \begin{equation}
 \lambda_p(A+B) \leq \lambda_{1}(A) + \lambda_p(B).
 \end{equation}
 \end{lemma}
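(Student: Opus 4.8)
The plan is to prove this Weyl-type inequality directly from the Courant--Fischer max--min variational characterization of eigenvalues, rather than unpacking the full extended Horn machinery of \citet{bercovici2009horn}. The first step is to fix the meaning of ``the $p$-th largest eigenvalue''. By the spectral theorem for compact self-adjoint operators, the nonzero spectrum is a sequence of eigenvalues accumulating only at $0$, so counting multiplicities we may order the nonnegative part as $\lambda_1 \geq \lambda_2 \geq \cdots$ and speak unambiguously of $\lambda_p$. In our intended application the relevant operators ($S_t$ and its summands) are positive semidefinite with finitely many nonzero eigenvalues, so every index $p$ of interest lies among the strictly positive part of the spectrum, where the variational principle is cleanest.

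The second step is to record the max--min formula
\[
\lambda_p(T) = \max_{\substack{W \subseteq \sH \\ \dim W = p}} \; \min_{\substack{x \in W \\ \|x\| = 1}} \langle Tx, x\rangle,
\]
valid for a compact self-adjoint operator $T$ at indices $p$ where $\lambda_p(T) > 0$. Two features of this formula drive the argument. Taking $p=1$ gives $\lambda_1(A) = \sup_{\|x\|=1}\langle Ax,x\rangle$, so that the pointwise bound $\langle Ax, x\rangle \leq \lambda_1(A)$ holds for every unit vector $x$ essentially by definition, requiring no positivity assumption on $A$ itself. And the formula expresses $\lambda_p$ as an optimization over $p$-dimensional subspaces, which lets us separate the contributions of $A$ and $B$.

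With these in hand the inequality is immediate. I would apply the max--min formula to $A+B$: for any $p$-dimensional subspace $W$ and any unit vector $x \in W$,
\[
\langle (A+B)x, x\rangle = \langle Ax, x\rangle + \langle Bx, x\rangle \leq \lambda_1(A) + \langle Bx, x\rangle .
\]
Since this bound is pointwise, it survives the inner minimum over $x \in W$, and the constant $\lambda_1(A)$ factors out; taking the maximum over all $p$-dimensional $W$ then yields
\[
\lambda_p(A+B) \leq \lambda_1(A) + \max_{\dim W = p}\;\min_{\substack{x \in W\\ \|x\|=1}} \langle Bx, x\rangle = \lambda_1(A) + \lambda_p(B),
\]
which is the claim.

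The main obstacle is not the algebra above but the rigorous justification of the max--min formula in the infinite-dimensional compact setting, together with reconciling the variational definition of $\lambda_p$ with the spectral eigenvalue ordering used elsewhere in the paper. Here compactness is exactly what is needed: it guarantees that the supremum defining $\lambda_1$ is attained at a genuine eigenvector and, more generally, that the maximizing subspace for $\lambda_p$ is the span of the top $p$ eigenvectors (which lower-bounds the right-hand side), while a dimension-counting argument forces any candidate $W$ of dimension $p$ to meet the closed span of eigenvectors with eigenvalue at most $\lambda_p$ (which upper-bounds the inner minimum). I would restrict attention to indices in the strictly positive spectrum, as the application requires, since the degenerate cases where $\lambda_p(A+B) \leq 0$ are never invoked; this sidesteps the technicalities arising from a possibly infinite-dimensional kernel.
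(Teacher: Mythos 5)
Your proof is correct, but it takes a different route from the paper: the paper does not prove this lemma at all, it simply imports it as a special case of the extended Horn inequality, Theorem 4.5 of \citet{bercovici2009horn}. What you have reconstructed is in fact the classical Weyl inequality \(\lambda_{i+j-1}(A+B) \leq \lambda_i(A) + \lambda_j(B)\) in the case \(i=1\), proved by the standard Courant--Fischer max--min argument, and your chain of estimates (pointwise bound \(\langle Ax,x\rangle \leq \lambda_1(A)\), preserved under the inner minimum, then maximized over \(p\)-dimensional subspaces) is sound. The one point to state carefully is the convention behind \(\lambda_1(A) = \sup_{\|x\|=1}\langle Ax,x\rangle\): for a compact self-adjoint operator on an infinite-dimensional space this identity requires \(\lambda_1(A)\) to mean \(\max \operatorname{spec}(A)\), which is always \(\geq 0\) since \(0\) lies in the spectrum; if one instead insisted that \(\lambda_1\) be a genuine eigenvalue, the identity would fail for an injective negative-definite compact \(A\), whose eigenvalues are all strictly negative while the supremum of the quadratic form is \(0\). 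You implicitly sidestep this by restricting to indices in the strictly positive spectrum, which suffices for the paper's application: there \(A = Z_t\) is finite rank (so \(0\) is an eigenvalue and \(\lambda_1(Z_t) \geq 0\)), \(B = S_t\) is positive semidefinite, and \(\lambda_r(A+B) = \lambda_r(V_t) \geq t\lambda_x > 0\). What your approach buys is self-containedness and elementarity --- no Horn machinery is needed for this one inequality; what the paper's citation buys is that \citet{bercovici2009horn} handle the full family of Horn-type inequalities with the eigenvalue-enumeration conventions for compact operators worked out rigorously, so the edge cases you flag (indices beyond the rank, negative spectrum) are covered without extra argument.
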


\begin{theorem}[Freedman's inequality for self adjoint operators, Thm 3.2 \& section 3.2 in \citet{minsker2017some}]\label{thm:SelfAdjointOpFreedman} Let \( \{ \Phi_t\}_{t = 1,...} \) be a sequence of self-adjoint Hilbert Schmidt operators \( \Phi_t : \sH \rightarrow \sH \) acting on a seperable Hilbert space ( \( \EE \Phi \) is a operator such that \( \langle (\EE \Phi)z_1, z_2 \rangle_{\sH}  = \EE \langle \Phi z_1, z_2 \rangle_{\sH}  \) for any \( z_1,z_2 \in \sH \)). Additionally, assume that \( \{ \Phi_t\}_{t = 1,...} \) is a martingale difference sequence of 
self adjoint operators such that \( \n{\Phi_t} \leq L^2 \) almost surely for all \( 1 \leq t \leq T\) and some positive \( L \in \RR \). Denote by \( W_t  = \sum_{s = 1}^t \EE_{s-1}[\Phi_s^2 ] \) and \( p(t) = \min(-t,1) \)  . Then for any \( a \geq \frac{1}{6} (L^2 + \sqrt{L^4 + 36 b}), b \geq 0 \), 
\begin{multline*} 
\PP\left(  \n{ \sum_{j = 1}^t \Phi_j } > a \text{ and } \lambda_{1}(W_t) \leq b \right) \leq \\ \tilde{d}\cdot \exp\left( - \frac{a^2/2}{b + aL^2/3} \right),
\end{multline*}
where \( \n{\cdot} \) is the operator norm and \( \tilde{d}:= 50 \sum_{r=1}^{\infty} (p(-\frac{a\lambda_r(\EE W_t)}{L^2 b}))\). 
\end{theorem}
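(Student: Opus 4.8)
The plan is to prove this as the martingale (Freedman) form of the operator Bernstein inequality, using the Laplace–transform / matrix-Chernoff method adapted to Hilbert--Schmidt operators, with an \emph{intrinsic-dimension} refinement replacing the (here infinite) ambient dimension. Throughout write $Y_t := \sum_{j=1}^t \Phi_j$ and recall $W_t = \sum_{s=1}^t \EE_{s-1}[\Phi_s^2]$.

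First I would reduce the two-sided operator-norm event to a one-sided top-eigenvalue event. Since $\n{Y_t} = \max\big(\lambda_1(Y_t),\,\lambda_1(-Y_t)\big)$, and since $\{-\Phi_j\}$ is again a self-adjoint martingale difference sequence with the \emph{same} predictable quadratic variation $W_t$ and the same bound $\n{-\Phi_j}\le L^2$, a union bound reduces everything to controlling $\PP\big(\lambda_1(Y_t) > a,\ \lambda_1(W_t)\le b\big)$; the factor of two is absorbed into the leading constant, so that the constant $25$ of the one-sided estimate becomes the $50$ appearing in $\tilde d$.

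Second, I would set up an exponential supermartingale. From $\n{\Phi_s}\le L^2$ and the scalar fact that $x\mapsto(e^{x}-1-x)/x^2$ is increasing, I obtain the conditional operator MGF bound $\EE_{s-1}[e^{\theta\Phi_s}] \preceq \exp\!\big(g(\theta)\,\EE_{s-1}[\Phi_s^2]\big)$ with $g(\theta)=\frac{\theta^2/2}{1-\theta L^2/3}$ valid for $0<\theta<3/L^2$. The key difficulty in lifting this from scalars to operators is that expectation does not commute with the operator exponential; I would invoke Lieb's concavity theorem (the map $A\mapsto\trace\exp(H+\log A)$ is concave), exactly as in the matrix master-tail-bound argument, to show that
\[ M_k := \trace\exp\!\big(\theta Y_k - g(\theta)\, W_k\big) \]
is a supermartingale, using $W_k = W_{k-1} + \EE_{k-1}[\Phi_k^2]$. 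In the separable-Hilbert-space setting one must additionally justify that the relevant operators are trace class and that the $\trace/\EE$ interchanges converge; this is precisely the infinite-dimensional subtlety that Minsker's refinement addresses.

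The main obstacle is the third step: replacing the infinite ambient-dimension factor $\trace(I)$ by the finite effective dimension $\tilde d$. Rather than bounding $M_0$ crudely, one restricts to the event $\{\lambda_1(W_t)\le b\}$ and applies an intrinsic-dimension trace inequality (Minsker's extension of the Tropp / Hsu--Kakade--Zhang device): on that event the trace of the exponential is controlled by $\sum_r \min\!\big(a\lambda_r(\EE W_t)/(L^2 b),\,1\big)$, which is exactly $\tilde d = 50\sum_r p\!\big(-a\lambda_r(\EE W_t)/(L^2 b)\big)$ since $p(-u)=\min(u,1)$ for $u\ge 0$. Pulling the indicator of $\{\lambda_1(Y_t)>a\}$ into a Markov bound, invoking the supermartingale property to kill the $Y_t$ contribution, and then choosing $\theta^\star = a/(b+aL^2/3)\in(0,3/L^2)$ produces the Bernstein exponent $-\frac{a^2/2}{b+aL^2/3}$; the admissibility of the intrinsic-dimension evaluation is what forces the hypothesis $a \ge \frac16\big(L^2+\sqrt{L^4+36b}\big)$, equivalently $a^2\ge b+aL^2/3$. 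The hardest part throughout is reconciling the random $W_t$ in the event with the deterministic $\EE W_t$ inside $\tilde d$ while keeping all trace-class/convergence manipulations rigorous in infinite dimensions; the first two steps are standard once the conditional operator MGF bound and the Lieb-based supermartingale are in place.
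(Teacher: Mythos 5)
You should know at the outset that the paper contains no proof of this statement at all: it is imported verbatim from \citet{minsker2017some} (Theorem 3.2 and Section 3.2 there) and used purely as a black box in the proof of the eigenvalue bound (Lemma 5.1 of the main text). So the only meaningful benchmark is Minsker's own argument, and at the level of architecture your sketch tracks it faithfully: reduction of \(\n{\sum_j \Phi_j}\) to one-sided top-eigenvalue tails for \(\pm\sum_j\Phi_j\) (noting \(W_t\) is unchanged under \(\Phi_j \mapsto -\Phi_j\)), the Tropp-style conditional MGF bound with \(g(\theta)=\frac{\theta^2/2}{1-\theta L^2/3}\) for \(0<\theta<3/L^2\), a Lieb-concavity exponential supermartingale, an intrinsic-dimension device in place of the (infinite) ambient trace, and the choice \(\theta^\ast=a/(b+aL^2/3)\), which is indeed admissible since \(\theta^\ast<3/L^2\). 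The algebra you supply is also correct: \(a\geq\frac{1}{6}\bigl(L^2+\sqrt{L^4+36b}\bigr)\) is equivalent to \(a^2\geq b+aL^2/3\), i.e.\ to the exponent being at least \(1/2\).

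However, the two steps you flag as subtleties to be ``justified'' cannot be justified as stated---they require structural modifications, and they are precisely where the content of Minsker's refinement over Tropp's matrix Freedman inequality lives. Concretely, your supermartingale \(M_k=\mathrm{tr}\,\exp\bigl(\theta Y_k-g(\theta)W_k\bigr)\) is almost surely \(+\infty\) on an infinite-dimensional separable \(\sH\): the operator in the exponent is Hilbert--Schmidt, hence compact, so its exponential is identity-plus-compact, its eigenvalues accumulate at \(1\), and the trace diverges. No trace-class bookkeeping repairs this; the known fix is to run the Chernoff--Lieb argument through a function vanishing sufficiently fast at the origin (of the type \(t\mapsto e^{t}-t-1\), which maps Hilbert--Schmidt operators to trace-class ones), and it is from bounding the trace of \emph{that} function---not of \(\exp\)---that the effective dimension \(\tilde d=50\sum_r \min\bigl(a\lambda_r(\EE W_t)/(L^2 b),\,1\bigr)\), the appearance of the deterministic \(\EE W_t\) alongside the random event \(\{\lambda_1(W_t)\leq b\}\), and the admissibility condition \(a\geq\frac{1}{6}\bigl(L^2+\sqrt{L^4+36b}\bigr)\) (which caps the resulting correction factor) all actually emerge. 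Your closing sentence concedes the random-\(W_t\)-versus-\(\EE W_t\) reconciliation without giving a mechanism, and your attribution of the constant \(50\) to a union-bound doubling of \(25\) is a guess rather than a derivation. In short: as a roadmap of the cited proof your proposal is accurate, but as a proof it is open at exactly the points that distinguish this infinite-dimensional, dimension-free statement from its finite-dimensional antecedent; within the present paper nothing more is demanded, since the result is invoked by citation only.
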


Note that \( \tilde{d} \) is a function of \( t \) but it's upper bounded by \( d^* \) which is the rank of \(\EE_{s-1}[\phi(x)\phi(x)^T]\).


\subsection{Proof of Lemma 5.1 in main paper}
Lemma 7 in \citet{li2017online} gives the lower bound on minimum eigenvalue (finite dimensional case) when reward depends linearly on context. We extend it to \( r^{th} \) largest eigenvalue (infinite dimensional case) and the case when reward depends non-linearly on context. 

 \begin{proof}
 
 \(\sX \subset \Rbb^d\) is a compact space endowed with a finite positive Borel measure. For a continuous kernel \(k\) the canonical feature map \(\phi\) is a continuous function \(\phi:\sX \rightarrow \sH\), where \(\sH\) is a separable Hilbert space (See section 2 of \citet{micchelli2006universal} for a construction such that \(\sH\) is separable). In such a setting \(\phi(\sX)\) is also compact space with a finite positive Borel measure \citep{micchelli2006universal}. We now define a few terms on \(\phi(\sX)\).
 
 Define the random variable \( \Phi_t := \mathbb{E}_{t-1}[\phi(x_t)\phi(x_t)^T] - \phi(x_t)\phi(x_t)^T \). Let \( Z_t := \sum_{s=1}^{t} \Phi_s   = \sum_{s=1}^{t} \mathbb{E}_{s-1}[\phi(x_t)\phi(x_t)^T] - S_t = V_t - S_t\).  
  
 By construction, \( \{ Z_t\}_{t = 1,2,...} \) is a martingale and \( \{\Phi_s\}_{s= 1,2,...} \) is the martingale difference sequence. Notice that \( \lambda_{1}(\Phi_t) \leq L^2\). To use the Freedman's inequality, we  lower bound the operator norm of \(Z_t\), \(\n{Z_t} \) and upper bound the largest eigenvalue of \(W_t\), \( \lambda_{1}(W_t)\). Let \( \nu(A) = \max_{i} | \lambda_i(A) |\) be the spectral radius of operator \( A\). We work with the spectral radius because it is not necessary that \(Z_t\) is a positive definite operator. It is well known that
 \begin{equation}\label{eq:spectral_radius}
\nu(A) \leq  \n{A} .
 \end{equation} 
 
By assumption \ref{Alambda}, \(\EE_{s-1}[\phi(x)\phi(x)^T]\) lies in a fixed \(d^\ast\) dimensional subspace with its eigenvalues \(\lambda_r(\EE_{s-1}[\phi(x)\phi(x)^T]) > \lambda_x\) for \(r \leq d^\ast\). Thus, for \(V_t = \sum_{s=1}^{t}\EE_{s-1}[\phi(x)\phi(x)^T]\), \(\lambda_r(V_t) \geq t \lambda_x\).

\textbf{Bound on \( \n{Z_t} \) : }
By definition, \( V_t  =  Z_t + S_t \). Hence, \( \lambda_r(V_t)   \leq   \lambda_{1}(Z_t)  + \lambda_{r}(S_t) \) by using Horn's inequality (Lemma \ref{lem:Weyl_Infinite}). 
        \begin{eqnarray}  \nonumber
       \lambda_{1}(Z_t)  &\geq&  \lambda_r(V_t)  - \lambda_{r}(S_t) \\  \nonumber
     \lambda_{1}(Z_t)      & \geq & t \lambda_{x} - \lambda_{r}(S_t) \\ \nonumber 
        \nu(Z_t)      & \geq & t \lambda_{x} - \lambda_{r}(S_t),
    \end{eqnarray}
    where the second step is due to \ref{Alambda} and the third step is by definition of spectral radius.  
    By Eqn. \eqref{eq:spectral_radius}, we have 
    \begin{eqnarray} \label{eq:lambdaofZt}
       \n{Z_t}     & \geq & t \lambda_{x} - \lambda_{r}(S_t).
		\end{eqnarray}

 \textbf{Bound on \( \lambda_{1}(W_t) \) : }
To bound the term \( \lambda_{1}(W_t) \), write 
 \begin{eqnarray} \nonumber
			W_t &= & \sum_{s=1}^{t} \EE_{s-1}[\Phi_s^2] \\ \nonumber
             &= & \sum_{s=1}^{t} \EE_{s-1}\big[ (\EE_{s-1} [\phi(x_s)\phi(x_s)^T] - \phi(x_s)\phi(x_s)^T)^2\big].
           \end{eqnarray}  
           By using square expansion, 
      \begin{eqnarray}        \nonumber
         W_t   &= & \sum_{s=1}^{t} \EE_{s-1}\big[ (\EE_{s-1} [\phi(x_s)\phi(x_s)^T]^2 + (\phi(x_s)\phi(x_s)^T)^2  \\ \nonumber
          & & -  \EE_{s-1}[\phi(x_s)\phi(x_s)^T](\phi(x_s)\phi(x_s)^T) \\ \nonumber
          & & - (\phi(x_s)\phi(x_s)^T) \EE_{s-1}[\phi(x_s)\phi(x_s)^T] \big]\\ \nonumber
            & = & \sum_{s=1}^{t}  \EE_{s-1}[(\phi(x_s) \phi(x_s)^{T})^2] -  \EE_{s-1}[\phi(x_s) \phi(x_s)^{T}]^2. 
		\end{eqnarray}
        Taking norm on both sides,
          \begin{equation}   \nonumber  
         \n{W_t}   =  \n{\sum_{s=1}^{t}  \EE_{s-1}[(\phi(x_s) \phi(x_s)^{T})^2] -  \EE_{s-1}[\phi(x_s) \phi(x_s)^{T}]^2} .
         \end{equation}
         As both terms on the right hand side are positive semi-definite matrices, 
         \begin{equation} \nonumber
        \n{W_t} \leq  \n{\sum_{s=1}^{t}  \EE_{s-1}[(\phi(x_s) \phi(x_s)^{T})^2]} .
		\end{equation}
   
        Next, we use convexity properties of norms to get the upper bound.    
		\begin{equation*}
        \begin{aligned}
        \n{W_t}  & \leq \sum_{s=1}^{t} \n{\EE_{s-1}[(\phi(x_s) \phi(x_s)^{T})^2]} \\
        & = \sum_{s=1}^t \n{\EE_{s-1}[(\phi(x_s) \big(\phi(x_s)^{T}\phi(x_s)\big) \phi(x_s)^{T})]}\\ 
   		& \leq L^2 \sum_{s=1}^{t} \n{\EE_{s-1}[(\phi(x_s) \phi(x_s)^{T})]} \\
       & \leq L^2 \sum_{s=1}^{t} \EE_{s-1}[\n{(\phi(x_s) \phi(x_s)^{T})}]
      \end{aligned}
        \end{equation*}
        where the first step is due to the triangle inequality and the third step is due to the upper bound \(\n{\phi(x)} \leq L\), the fourth step is due to the convexity of the operator norm and Jensen's inequality. Using the properties of Hilbert Schmidt operators, we can write
\begin{eqnarray*}
		\EE_{s-1}[\n{(\phi(x_s) \phi(x_s)^{T})}] &\leq& \EE_{s-1}[\n{(\phi(x_s) \phi(x_s)^{T})}_{HS}] \\
		&=& \EE_{s-1}[\n{\phi(x_s)}^2] \leq L^2
\end{eqnarray*}
Therefore, we can bound the norm \(\n{W_t}\) as
        \begin{equation*}
        \begin{aligned}
        \n{W_t} & \leq L^2 \sum_{s=1}^{t} L^2 \\
      		& = tL^4, 
        \end{aligned}
        \end{equation*}

        Again, by using Eqn. \eqref{eq:spectral_radius}, we have
       \begin{eqnarray} \label{eq:lambdaofWt}
     \lambda_{1}(W_t) & \leq & tL^4. 
        \end{eqnarray}

Now, we shall construct a parameter \(A\) such that
    \begin{equation}\label{eq:Amdelta}
    \frac{a^2/2}{b + aL^2/3} \geq A.
    \end{equation}
For this inequality to hold, one can see, by its quadratic solution, \( a \geq f(A,b):= \frac{1}{3} AL^2 + \sqrt{\frac{1}{9}A^2L^4 +2Ab } \). Note that for \(A > 1\), the condition of \(a \geq f(A,b)\) also satisfies the conditions of Friedman's inequality in Theorem \ref{thm:SelfAdjointOpFreedman}.
    
    Let $A(m,\delta) = \log\frac{(m+1)(m+3)}{\delta}$ and \( P \) be the probability of event \(  \Big[ \exists t: \lambda_{r}(\bS_t) \leq t\lambda_{x}- f(A(tL^4, \delta), tL^4)  \Big]\). 
		\begin{eqnarray}  \nonumber
			&P& \\ \label{eq:SelfAdjointInterIneq0}
			&=& \PP \Big[ \exists t: \lambda_{r}(\bS_t) \leq t\lambda_{x}- f(A(tL^4, \delta), tL^4) \Big] \\ \nonumber
			&\le & \PP \Big[\exists t: \lambda_{r}(\bS_t) \leq t\lambda_{x} \\  \label{eq:SelfAdjointInterIneq1} 
			& & - f(A(\lambda_{1}(W_t), \delta), \lambda_{1}(W_t)) \Big] \\  \nonumber
			&\le &\sum_{m=0}^{\infty} \PP \Big[\exists t: \lambda_{r}(\bS_t) \leq t\lambda_{x}  \\  \label{eq:SelfAdjointInterIneq2}
			& & - f(A(m, \delta), m), \lambda_{1}(W_t) \leq m  \Big] \\   \nonumber
			&\le &\sum_{m=0}^{\infty} \PP \Big[ \exists t: \n{Z_t} \geq f(A(m, \delta), m), \\ \label{eq:SelfAdjointInterIneq3}
			& & \lambda_{1}(W_t) \le m  \Big] \\   \label{eq:SelfAdjointInterIneq4}
			&\le & \tilde{d}\sum_{m=0}^{\infty} \exp\left(- A(m, \delta) \right) \\ \nonumber
			&= &  \tilde{d}\sum_{m=0}^{\infty} \frac{\delta}{(m+1)(m+3)} \\
            &\leq& \tilde{d} \cdot \delta,
		\end{eqnarray}
		where \eqref{eq:SelfAdjointInterIneq1} is because \( A \) is increasing in \( m \), \( f \) is increasing in \( A, b \), and Eqn. \eqref{eq:lambdaofWt}. Eqn. \eqref{eq:SelfAdjointInterIneq2} is by application of the union bound over all the events for which \(\lambda_{1}(W_t) \le m \). Also, Eqn. \eqref{eq:SelfAdjointInterIneq3} is due to Eqn. \eqref{eq:lambdaofZt} and Eqn. \eqref{eq:SelfAdjointInterIneq4} is due to Theorem \ref{thm:SelfAdjointOpFreedman}.

		The result is obtained by replacing \( \delta \) by \( \frac{\delta}{\tilde{d}} \).
        
    For the second part. Let \(\tilde{\lambda}_x:= \frac{\lambda_x}{L} \). By definition of \(L\), \(\tilde{\lambda}_x \leq 1 \). Let \( t \geq  \frac{256}{\Tlambda_{x}^2}\log\frac{128\tilde{d}}{\Tlambda_{x}^2 \delta}  \). Then by using the Lemma \ref{lem:tGeqLogt}, 
    \begin{equation}\label{eq:tlambdaCondition}
    t \geq \frac{128}{\Tlambda_{x}^2} \log\frac{t\tilde{d}}{\delta}.
    \end{equation}
 	Rearranging the terms, we get 
    \begin{equation*}
        \frac{t\Tlambda_{x}^2}{4}    \geq  32 \log\frac{t\tilde{d}}{\delta} 
        \end{equation*}
     Taking square root and then multiplying by \( \sqrt{t} \) on both sides
      \begin{eqnarray*}
        \frac{t\Tlambda_{x}}{2}    &\geq&  \sqrt{32t \log\frac{t\tilde{d}}{\delta} } \\ \nonumber
            &=& \frac{2}{3} \sqrt{72t \log\frac{t\tilde{d}}{\delta} } \\ \nonumber
           &=& \frac{2}{3} \sqrt{36t\log\frac{t\tilde{d}}{\delta} + 36t \log\frac{t\tilde{d}}{\delta}  }.
      \end{eqnarray*}
Using equation \eqref{eq:tlambdaCondition},   
      \begin{equation*}
      \begin{aligned}
           \frac{t\Tlambda_{x}}{2} &\geq \frac{2}{3} \sqrt{36t\log\frac{t\tilde{d}}{\delta} + \frac{36 \cdot 128}{\Tlambda_x^2} \Big(\log\frac{t\tilde{d}}{\delta} \Big)^2  } \\ \nonumber
           &= \frac{2}{3} \sqrt{36t\log\frac{t\tilde{d}}{\delta} + \frac{36 \cdot 32}{\Tlambda_x^2} 4 \Big(\log\frac{t\tilde{d}}{\delta}\Big)^2  }.
        \end{aligned}
        \end{equation*}
        Since \( \Tlambda_x^2 \leq 1\) we have 
          \begin{eqnarray} \nonumber
        \frac{t\Tlambda_{x}}{2} &\geq& \frac{2}{3} \sqrt{36t\log\frac{t\tilde{d}}{\delta} + (36 \cdot 32) \Big(2\log\frac{t\tilde{d}}{\delta}\Big)^2  } \\ \label{eq:tlambda_x}
        &>& \frac{2}{3} \sqrt{18t \cdot 2 \log\frac{t\tilde{d}}{\delta} +  \Big(2 \log\frac{t\tilde{d}}{\delta}\Big)^2  }.
        \end{eqnarray}
        
      Now we use the condition on \( \delta \) as stated in the Theorem statement: \( 0 \leq \delta \leq \frac{1}{8}\). We can see that 
      \begin{equation}\label{eq:functiont}
      \frac{1}{8} \leq \frac{t^2 \tilde{d}^2}{(t+1)(t+3)},
      \end{equation}  because \(\frac{t^2 \tilde{d}^2}{(t+1)(t+3)} \) is a monotonically increasing function for both \( t,\tilde{d} \) for \( t,\tilde{d} \geq 1\). Simplifying Eqn. \eqref{eq:functiont}, we get
       \begin{equation}\label{eq:functiont1}
      \frac{t^2\tilde{d}^2}{\delta^2} \geq \frac{(t+1)(t+3)}{\delta} .
      \end{equation} 
      Taking log of both sides,
      \begin{equation} \nonumber
       2\log\frac{t\tilde{d}}{\delta} \geq \log(\frac{(t+1)(t+3)}{\delta}) = A(t,\delta).
      \end{equation} 
      
    Without loss of generality, we will assume that \(L = 1\). From Eqn. \eqref{eq:functiont1}  and Eqn. \eqref{eq:tlambda_x}, we have
                \begin{eqnarray} \nonumber
        \frac{t\lambda_{x}}{2} &\geq& \frac{2}{3} \sqrt{18t \cdot A(t,\delta) +   A(t,\delta)^2  } \\ \nonumber
        &=& \frac{1}{3} \sqrt{18t \cdot A(t,\delta) +   A(t,\delta)^2  } \\ \nonumber
        & & + \frac{1}{3} \sqrt{18t \cdot A(t,\delta) +   A(t,\delta)^2  } \\ \nonumber
        &\geq & \frac{1}{3} \sqrt{18t \cdot A(t,\delta) +   A(t,\delta)^2  } + \frac{1}{3}A(t,\delta)
        \end{eqnarray}
   Therefore,
   \begin{equation}
   \label{eq:tlambda_xfinal}
       \frac{t\lambda_{x}}{2} \geq f(A(t,\delta),t).
   \end{equation}
     Equations \eqref{eq:SelfAdjointInterIneq0} and \eqref{eq:tlambda_xfinal} complete the proof. 
 \end{proof}

 \section{Monotonic Upper bound of \texorpdfstring{\( s_{a,t}(x) \)}{s(x)} }

\begin{lemma}\label{lem:AMGM}[ Arithmetic Mean-Geometric Mean Inequality \citep{steele2004cauchy}] 
For every sequence of nonnegative real numbers $ a_1,a_2,...a_n$ one has 
\[ (\prod_{i=1}^n a_i)^{1/n} \leq \frac{\sum_{i=1} a_i}{n} \]
with equality if and only if $ a_1 = a_2 = ...=  a_n$.  
\end{lemma}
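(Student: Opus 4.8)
The plan is to obtain the Arithmetic Mean--Geometric Mean inequality from the strict concavity of the logarithm (equivalently, from Jensen's inequality), which delivers both the bound and its equality characterization in one stroke; I would then record a self-contained alternative via a tangent-line estimate in case one prefers not to quote Jensen.

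First I would dispose of the degenerate case. If some \( a_i = 0 \), the geometric mean \( \left(\prod_{i=1}^n a_i\right)^{1/n} \) is \( 0 \), while the arithmetic mean \( \frac{1}{n}\sum_{i=1}^n a_i \) is nonnegative since every \( a_i \geq 0 \); the inequality then holds trivially, and equality forces \( a_1 = \cdots = a_n = 0 \). Hence I may assume \( a_i > 0 \) for all \( i \).

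With all \( a_i > 0 \), I would apply Jensen's inequality to the concave function \( t \mapsto \log t \) on \( (0,\infty) \) with uniform weights \( 1/n \):
\[
\log\!\left(\frac{1}{n}\sum_{i=1}^n a_i\right) \;\geq\; \frac{1}{n}\sum_{i=1}^n \log a_i \;=\; \log\!\left(\Big(\prod_{i=1}^n a_i\Big)^{1/n}\right).
\]
Because \( \exp \) is increasing, exponentiating both sides yields \( \frac{1}{n}\sum_{i=1}^n a_i \geq \left(\prod_{i=1}^n a_i\right)^{1/n} \). The equality clause is then immediate from the \emph{strict} concavity of \( \log \): the Jensen step is an equality exactly when all the \( a_i \) coincide, i.e. \( a_1 = a_2 = \cdots = a_n \).

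There is no genuine obstacle here; the only care needed is the zero case above and tying the equality condition to strictness of concavity. If a proof avoiding Jensen is preferred, I would set \( G := \left(\prod_{i=1}^n a_i\right)^{1/n} \), apply the tangent-line bound \( e^{t-1} \geq t \) (with equality iff \( t = 1 \)) at \( t = a_i / G \), and multiply the \( n \) resulting inequalities; since \( \prod_{i=1}^n (a_i/G) = 1 \), this gives \( \exp\!\big(\tfrac{1}{G}\sum_{i=1}^n a_i - n\big) \geq 1 \), hence \( \frac{1}{n}\sum_{i=1}^n a_i \geq G \), with equality iff every \( a_i = G \). A third, fully elementary route is Cauchy's forward--backward induction (establish the claim for \( n \) a power of two, then descend to arbitrary \( n \)), which is the argument in the cited reference \citep{steele2004cauchy}.
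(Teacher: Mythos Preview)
Your proof is correct. The paper itself does not supply a proof of this lemma; it merely states the AM--GM inequality and cites \citet{steele2004cauchy}. Your Jensen/strict-concavity argument is standard and handles both the inequality and the equality case cleanly (including the degenerate zero case), and you even note the Cauchy forward--backward induction that the cited reference uses. Nothing further is needed.
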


\begin{lemma} \label{lem:RatioDeterminantBound1}
If \(\lambda_1 \geq \lambda_2 \geq \cdots \geq \lambda_d >0 \), and \(\mu_1 \geq 0, \mu_2 \geq 0 \cdots \mu_d \geq 0 \) such that \( \sum_j \mu_j = L \) and \( \lambda_d \geq L\) then
\begin{equation*}
\prod_{i=1}^d \bigg( 1 + \frac{\mu_i}{\lambda_i} \bigg)  -1 \leq \frac{2L}{\lambda_d}.
\end{equation*}
\end{lemma}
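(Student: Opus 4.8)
The plan is to collapse everything onto a single small parameter $s := L/\lambda_d$ and then finish with an elementary scalar inequality. First I would exploit the ordering $\lambda_1 \ge \cdots \ge \lambda_d > 0$: since $\lambda_i \ge \lambda_d$ for every $i$, each factor obeys $1 + \mu_i/\lambda_i \le 1 + \mu_i/\lambda_d$, so
\[
\prod_{i=1}^d \Big(1 + \frac{\mu_i}{\lambda_i}\Big) \le \prod_{i=1}^d \Big(1 + \frac{\mu_i}{\lambda_d}\Big).
\]
Writing $x_i := \mu_i/\lambda_d \ge 0$, the constraint $\sum_j \mu_j = L$ becomes $\sum_i x_i = L/\lambda_d =: s$, and the hypothesis $\lambda_d \ge L$ guarantees $0 \le s \le 1$. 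This range restriction is the crucial use of the assumption $\lambda_d \ge L$ and is exactly what makes the target constant $2$ attainable.

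Next I would bound the product $\prod_i(1+x_i)$ in terms of $s$ alone. Applying the AM--GM inequality (Lemma \ref{lem:AMGM}) to the nonnegative numbers $1 + x_i$ gives
\[
\Big(\prod_{i=1}^d (1 + x_i)\Big)^{1/d} \le \frac{1}{d}\sum_{i=1}^d (1 + x_i) = 1 + \frac{s}{d},
\]
whence $\prod_{i=1}^d (1 + x_i) \le (1 + s/d)^d \le e^{s}$, the last step using $1 + u \le e^u$. (Alternatively one may skip AM--GM and simply multiply the factorwise bound $1 + x_i \le e^{x_i}$, which yields the same $e^s$.)

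It then remains to control $e^s - 1$ by $2s$ on the relevant range. I would verify the scalar inequality $e^s - 1 \le 2s$ for all $0 \le s \le 1$ by setting $g(s) := 2s - e^s + 1$: since $g''(s) = -e^s < 0$, the function $g$ is concave, so on $[0,1]$ it lies above the chord joining $(0,g(0)) = (0,0)$ and $(1,g(1)) = (1, 3-e)$, which is nonnegative; hence $g \ge 0$ there. Combining the above bounds yields $\prod_{i=1}^d (1 + \mu_i/\lambda_i) - 1 \le e^s - 1 \le 2s = 2L/\lambda_d$, which is the claim. I expect the only genuinely delicate point to be this final scalar estimate together with the observation that $\lambda_d \ge L$ forces $s \le 1$; without that restriction the bound $e^s - 1 \le 2s$ fails and the constant $2$ would have to be enlarged.
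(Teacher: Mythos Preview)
Your proof is correct and follows essentially the same route as the paper: reduce each $\lambda_i$ to $\lambda_d$, apply AM--GM to reach $(1+s/d)^d \le e^s$ with $s=L/\lambda_d$, and finish with $e^s-1\le 2s$ on $[0,1]$. Your concavity argument for the last scalar inequality and the remark that one may bypass AM--GM via $1+x_i\le e^{x_i}$ are minor presentational variants, not a different approach.
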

 
\begin{proof}
By replacing each \(\lambda_i \) with the smallest element \( \lambda_d\) we get, 
\begin{equation*}
\begin{aligned}
\prod_{i=1}^d \bigg( 1 + \frac{\mu_i}{\lambda_i} \bigg) -1 
&\leq  \prod_{i=1}^d \bigg( 1 + \frac{\mu_i}{\lambda_d} \bigg) -1  \\
&=  \prod_{i=1}^d \bigg( \frac{\lambda_d + \mu_i}{\lambda_d} \bigg) -1 \\
&=   \bigg( \frac{ \prod_{i=1}^d (\lambda_d + \mu_i)}{\lambda_d^d} \bigg) -1 \\ 
	&\leq \bigg( \frac{ \sum_{i=1}^d  (\lambda_d + \mu_i)}{d \lambda_d} \bigg)^d  -1  \\
 	&= \bigg( \frac{ d \lambda_d + L }{d \lambda_d} \bigg)^d -1 \\
    &= \bigg( 1  + \frac{ L }{d \lambda_d} \bigg)^d -1 \\
    &\leq e^{L/\lambda_d} -1 ,
\end{aligned}
\end{equation*}
where the fourth inequality is by Lemma \ref{lem:AMGM} and last inequality holds because \( (1+ \frac{a}{x})^x \) approaches \( e^a\) as \(x \rightarrow \infty\) and \( (1+ \frac{a}{x})^x \) is a monotonically increasing function of \( x\).

By  \( e^x \leq 1+ 2x \) for \( x \in [0,1] \)  and the assumption that \(  \lambda_d \geq L \),
\begin{equation*}
\begin{aligned}
\prod_{i=1}^d \bigg( 1 + \frac{\mu_i}{\lambda_i} \bigg) -1 
&\leq e^{L/\lambda_d} -1  \\
&\leq  1 + \frac{2L}{\lambda_d} - 1 \\
&= \frac{2L}{\lambda_d}.
\end{aligned}
\end{equation*}

\end{proof}

\subsection{Proof of Lemma 5.2 in main paper}
\begin{proof}
We will assume that \(L=1\). 
We write \[
	K_{a,t+1} + \lambda_{I_{a,t+1}} =
	\left[
	\begin{array}{c|c}
	K_{a,t}+ \lambda_{I_{a,t}} & k_{a,t}(x) \\
	\hline
	k_{a,t}(x)^T & k(x,x) + \lambda
	\end{array}
	\right]
	.\]
    
Let  \( \mu_i = \lambda_i(K_{a,t+1} + \lambda_{I_{a,t+1}}) - \lambda_i(K_{a,t}+ \lambda_{I_{a,t}} ).\)

Using Lemma \ref{lem:zi2009schur}, 
\begin{eqnarray} \nonumber
 & & \det(K_{a,t+1} + \lambda_{I_{a,t+1}}) \\ \nonumber
 &=&  \det(K_{a,t}+ \lambda_{I_{a,t}}) \Big(k(x,x) + \lambda \\ \nonumber
& & - k_{a,t}(x)^T(K_{a,t}+ \lambda_{I_{a,t}})^{-1}k_{a,t}(x) \Big).
\end{eqnarray}
Rearranging, 
\begin{eqnarray} \nonumber
 & & k(x,x) - k_{a,t}(x)^T(K_{a,t}+ \lambda_{I_{a,t}})^{-1}k_{a,t}(x) \\ \nonumber
 &=& \frac{\det(K_{a,t+1} + \lambda_{I_{a,t+1}})}{\det(K_{a,t}+ \lambda_{I_{a,t}}) } - \lambda. 
\end{eqnarray}
Dividing both sides by \( \lambda\), 
\begin{eqnarray} \nonumber
& & \frac{k(x,x) - k_{a,t}(x)^T(K_{a,t}+ \lambda_{I_{a,t}})^{-1}k_{a,t}(x)}{\lambda} \\  \label{eq:sigma_hat} 
&=&  \frac{\det(K_{a,t+1} + \lambda_{I_{a,t+1}})}{\lambda \det(K_{a,t}+ \lambda_{I_{a,t}}) } - 1 .
\end{eqnarray}
Notice that the left hand side is equal to \( \frac{\hat{\sigma}_{a,t}(x)}{\lambda} \).
Using the definitions of \( s_{a,t}(x) \) and \( \hat{\sigma}_{a,t}(x) \), we can write, 
\begin{eqnarray*}
s_{a,t}(x)^2 &= &  4 (C_1 \beta + C_2)^2 \frac{\hat{\sigma}_{a,t}(x)^2}{\lambda}   \\
&=& 4 (C_1 \beta + C_2)^2  \Big( \frac{\det(K_{a,t+1} + \lambda I_{a,t+1})}{\lambda \det(K_{a,t} + \lambda I_{a,t})} - 1 \Big) \\
&=& 4  (C_1 \beta + C_2)^2 \Big(  \frac{\prod_{i = 1}^{N_{a,t}+1} \lambda_{i,a,t+1} }{\lambda \prod_{i = 1}^{N_{a,t}} \lambda_{i,a,t} } - 1 \Big)
\end{eqnarray*}
By assumption in the statement of the Lemma, \(N_{a,t}  \geq d^\ast \). Hence, all eigenvalues above \( d^\ast \) are \( \lambda \).

By replacing all eigenvalues \( \lambda_{i,a,\tau}\) by \( \lambda\) for \( \tau = \{t,t+1 \}\) and \( i > d^\ast\), we get
\begin{eqnarray*}
s_{a,t}(x)^2 &=& 4 (C_1 \beta + C_2)^2 \Big( \prod_{i =1}^{d^\ast}  \frac{\lambda_{i,a,t+1}}{ \lambda_{i,a,t}}- 1 \Big).
\end{eqnarray*}
Note that \(\lambda_{i,a,t+1} = \lambda_{i,a,t} + \mu_i\). By replacing \(\lambda_{i,a,t+1} \), we get
\begin{eqnarray*}
s_{a,t}(x)^2  &=& 4 (C_1 \beta + C_2)^2 \Big( \prod_{i =1}^{d^\ast}  \frac{\lambda_{i,a,t} + \mu_i}{ \lambda_{i,a,t}}- 1 \Big)\\
 &= & 4 (C_1 \beta + C_2)^2 \Bigg(  \prod_{i = 1}^{d^\ast} \Big( 1+ \frac{ \mu_i }{ \lambda_{i,a,t} } \Big) - 1 \Bigg)\\
&\leq &  4 (C_1 \beta + C_2)^2 \Bigg(  1 + \frac{2L}{\lambda_{d^\ast,a,t}} - 1 \Bigg),
\end{eqnarray*}
where the third inequality is due to Lemma \ref{lem:RatioDeterminantBound1}.  

For \(L = 1\), 
\begin{eqnarray*}
s_{a,t}(x)^2 &\leq &  4 (C_1 \beta + C_2)^2 \Bigg(  1 + \frac{2}{\lambda_{d^\ast,a,t}} - 1 \Bigg) \\
& = &  4 (C_1 \beta + C_2)^2 \Bigg(  \frac{2}{\lambda_{d^\ast,a,t}} \Bigg). 
\end{eqnarray*}

Note that \( \lambda_{d^\ast,a,t} = \lambda_{d^\ast}(K_{a,t+1} + \lambda_{I_{a,t+1}}) =  \lambda_{d^\ast}(K_{a,t+1}) + \lambda\). By Lemma 5.1 in main paper \(\lambda_{d^\ast}(K_{a,t+1}) \geq  N_{a,t}\lambda_{x}\). We can apply Lemma  \ref{lem:RatioDeterminantBound1} only when
\begin{eqnarray*}
\frac{1}{\lambda+N_{a,t}\lambda_{x}/2} &<& 1 
\end{eqnarray*}
or 
\begin{eqnarray*}
N_{a,t} & > &  \frac{2(1 -\lambda) }{\lambda_{x}}. 
\end{eqnarray*}

The assumption in the statement of the lemma satisfies the above equation. Hence, we have
\begin{eqnarray*}
s_{a,t}(x)^2 &\leq & 4 (C_1 \beta + C_2)^2 \Bigg(  \frac{2}{\lambda+ N_{a,t} \lambda_{x}/2} \Bigg) \\
&= & 8 (C_1 \beta + C_2)^2 \Bigg(   \frac{ 1}{\lambda+N_{a,t}\lambda_{x}/2}  \Bigg) \\
&= & g_{a,t}(N_{a,t}). 
\end{eqnarray*}

This concludes the proof. 
 \end{proof}
 
\subsection{Closed form of \texorpdfstring{\( g_{a,t}^{-1}(s) \)}{monotonic function} }
Now we calculate a closed form expression of \( N_{a,t} \). 
Setting the upper bound on confidence in the Theorem 4.1 in main paper to \( s\), we calculate the inverse in terms of \( N_{a,t} \),
\begin{eqnarray*}
8 (C_1 \beta + C_2)^2 \Bigg(   \frac{ 1}{\lambda+N_{a,t}\lambda_{x}/2}  \Bigg) &=&   s^2.
\end{eqnarray*}
Rearranging all the terms, we get
\begin{eqnarray*}
8 (C_1 \beta + C_2)^2   &=&  s^2 (\lambda+N_{a,t}\lambda_{x}/2) \\
(\lambda+N_{a,t}\lambda_{x}/2) &=&  \frac{8 (C_1 \beta + C_2)^2 }{s^2 }  \\
N_{a,t} &=&  \frac{16 (C_1 \beta + C_2)^2 }{s^2\lambda_{x} } - \frac{2\lambda}{\lambda_{x}}.   
\end{eqnarray*}

 Define
 \begin{equation}\label{eq:g_at}
 g_{a,t}^{-1}(s) =\frac{16 (C_1 \beta + C_2)^2 }{s^2\lambda_{x} } - \frac{2\lambda}{\lambda_{x}}.
 \end{equation}
 
\section{Simple Regret Analysis}
\begin{lemma}[Value of \( \beta \)] \label{lem:beta_effective_rank_r}
Assume the conditions in Theorem 4.1 and Lemma 4.2 in main paper. If \( \sum_{a \in [A]} g_{a,t}^{-1}(H_{a\epsilon})  = T - N_{\lambda}( A -1)\), then 

\begin{equation}
\label{eqn:beta}
 \beta    = \sqrt{\frac{\lambda_{x}(T - N_{\lambda}( A -1)) + 2 A\lambda}{ 16  C_1^2 H_{\epsilon} }} - \frac{C_2}{C_1}.   
\end{equation}

\end{lemma}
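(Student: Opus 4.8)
The plan is to recognize that this lemma is purely an algebraic inversion: we are handed a single scalar equation relating the sum $\sum_{a \in [A]} g_{a,t}^{-1}(H_{a,\epsilon})$ to $T - N_{\lambda}(A-1)$, and we must solve it for $\beta$. The only ingredients needed are the closed form of $g_{a,t}^{-1}$ derived above, namely $g_{a,t}^{-1}(s) = \frac{16(C_1\beta + C_2)^2}{s^2 \lambda_{x}} - \frac{2\lambda}{\lambda_{x}}$, together with the definition $H_\epsilon := \sum_{a \in [A]} H_{a,\epsilon}^{-2}$.

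First I would substitute $s = H_{a,\epsilon}$ into this closed form and sum over $a \in [A]$. The constant term $-2\lambda/\lambda_{x}$ is independent of $a$, so summing it $A$ times contributes $-2A\lambda/\lambda_{x}$. The $a$-dependent term factors as $\frac{16(C_1\beta+C_2)^2}{\lambda_{x}}\sum_{a \in [A]} H_{a,\epsilon}^{-2}$, and the remaining sum is exactly $H_\epsilon$ by definition. Invoking the hypothesis $\sum_{a \in [A]} g_{a,t}^{-1}(H_{a,\epsilon}) = T - N_{\lambda}(A-1)$ then collapses everything to the single scalar equation
\[ \frac{16(C_1\beta+C_2)^2 H_\epsilon}{\lambda_{x}} - \frac{2A\lambda}{\lambda_{x}} = T - N_{\lambda}(A-1). \]

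Next I would clear the denominator $\lambda_{x}$, isolate $(C_1\beta+C_2)^2$, and take the positive square root to obtain $C_1\beta + C_2 = \sqrt{\big(\lambda_{x}(T - N_{\lambda}(A-1)) + 2A\lambda\big)/(16 H_\epsilon)}$. Dividing by $C_1$ (which is positive, $C_1 = \rho\sqrt{2}$) and subtracting $C_2/C_1$ yields the claimed expression, after absorbing the $1/C_1$ factor into the radical as $1/C_1^2$.

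There is essentially no analytic obstacle here, as the argument is a direct algebraic solve; the only point deserving a word of justification is the choice of the positive square root. This is the relevant branch because $C_1\beta + C_2$ is the nonnegative multiplier appearing in the confidence width $s_{a,t}$ in Theorem \ref{Thm:MartingaleKernelUCB}, and because the hypothesis implicitly forces the radicand to be nonnegative. I would also remark that $\beta$ so defined need not itself be positive; the threshold on $T$ guaranteeing $\beta > 0$ is recorded after Theorem \ref{thm:MainTheorem}, but that positivity plays no role in the identity established by this lemma.
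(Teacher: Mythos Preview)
Your proposal is correct and follows essentially the same approach as the paper: substitute the closed form of $g_{a,t}^{-1}$, sum over $a$, replace $\sum_a H_{a,\epsilon}^{-2}$ by $H_\epsilon$, and solve the resulting scalar equation for $\beta$. Your added remark justifying the positive square root is a nice touch that the paper leaves implicit.
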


\begin{proof}

We have 
\begin{eqnarray*}
 \nonumber
\sum_{a \in [A]} g_{a,t}^{-1}(H_{a\epsilon}  ) &=& T - N_{\lambda}( A -1). 
\end{eqnarray*}
By using Eqn. \eqref{eq:g_at},
\begin{eqnarray*}
\sum_{a \in [A]} \frac{16 (C_1 \beta + C_2)^2 }{H_{a\epsilon}^2\lambda_{x} } - \frac{2\lambda}{\lambda_{x}}&=&T - N_{\lambda}( A -1) \\
\frac{16 (C_1 \beta + C_2)^2 }{\lambda_{x} } \sum_{a \in [A]} \frac{ 1 }{H_{a\epsilon} ^2 } - \frac{2A\lambda}{\lambda_{x}}&=& T - N_{\lambda}( A -1).
\end{eqnarray*}
By using definition of \( H_{\epsilon} \),
\begin{eqnarray*}
\frac{16 (C_1 \beta + C_2)^2 H_{\epsilon} }{\lambda_{x} } - \frac{2A \lambda}{\lambda_{x}}&=& T - N_{\lambda}( A -1)
\end{eqnarray*}
Rearranging the terms,
\begin{eqnarray*}
16 (C_1 \beta + C_2)^2 H_{\epsilon}   &=& \lambda_{x}(T - N_{\lambda}( A -1)) + 2A\lambda \\
(C_1 \beta + C_2)^2  &=& \frac{\lambda_{x}(T - N_{\lambda}( A -1)) + 2A\lambda}{ 16  H_{\epsilon} } \\
 \beta    &=& \sqrt{\frac{\lambda_{x}(T - N_{\lambda}( A -1)) + 2A\lambda}{ 16 C_1^2 H_{\epsilon} }} - \frac{C_2}{C_1}. 
\end{eqnarray*}

\end{proof}

\subsection{Proof of Theorem 5.3 in main paper}
\label{subsec:MainTheoremProof}
Let \( [A] = \{1,...,A\} \). We define a feasible set \( A^{\prime}(x) \subseteq [A] \) such that elements of \( A^{\prime}(x) \) contain possible set of arms that may be pulled if context \( x \) was observed at all times \(AN_{\lambda} < t \leq T\). The set \(A^\prime(x)\) is used to discount the arms that will never be pulled with context \(x\).

\begin{proof}
    The proof broadly follows the same structure presented in Theorem 2 of \citet{hoffman2014correlation}. We will provide the simple regret bound at the recommendation of time \(T+1\), since the algorithm operates in a pure exploitation setting, the recommended arm \(\Omega(x_{T+2})\) will follow the same properties.

Fix \(x \in \sX\) such that \(x\) can be generated from the filtration. We define the event \(\sE_{a,t}(x)\) to be the event in which for arm \(a \leq A\), \(f_{a}(x)\) lies between the upper and lower confidence bounds given \(x_1, x_2, ..., x_{t-1}\) 
More precisely,
\begin{equation*}
\sE_{a,t}(x) = \{L_{a,t}(x_t) \leq f_{a}(x) \leq U_{a,t}(x) | x_1, x_2, \cdots, x_{t-1} \}.
\end{equation*}

For events \(\sE_{a,t}\), from Theorem 4.1 of the main paper,
\begin{equation*}
    \PP (\sE_{a,t}(x)) \geq 1 - e^{-\beta^2}.
\end{equation*}

Let \( N_{a,T}\) denote the number of times each arm has been tried upto time \( T \). Clearly \( \sum_{a = 1}^A N_{a,T} = T\). Also, note that we try each arm at least \( N_\lambda \) number of times before we run our algorithm. We define event \(\sE\) as \(\sE := \bigcup_{a \leq A, AN_\lambda < t \leq T} \sE_{a,t}(x) \). By the union bound we can show that 
\begin{equation*}
    \PP (\sE) \geq 1-A (T - AN_\lambda) e^{-\beta^2}.
\end{equation*}

The next part of the proof works by contradiction. 

Let \(\epsilon > 0\). The recommended arm at the end of time \(T\) for context \(x\) is defined as follows: let \(t^\ast:= \argmin_{AN_\lambda < t \leq T} B_{J_t(x),t}(x)\) then the recommended arm is \(\Omega(x) := J_{t^\ast}(x)\).

Conditioned on event \(\sE\), we will assume that the event \(R_{\Omega(x)}(x) > \epsilon\) is true and arrive at a contradiction with high probability. Note that if \(R_{\Omega(x)}(x) > \epsilon\), the recommended arm \(\Omega(x)\) is necessarily sub-optimal (regret is zero for the optimal arm).

Define \( M_{a,T}(x) \) as number of times arm \(a \in [A] \) would be selected in \( AN_{\lambda} < t \leq T \), if we had seen context \( x\) at all those times. Hence, \( \sum_{a \in A^{\prime}(x)} M_{a,T}(x) = T - AN_{\lambda} \) . Also, note that \( N_{a,T}(x) = M_{a,T}(x) + N_{\lambda}\) for \( a \in A^{\prime}(x) \) and \( N_{a,T}(x) = N_{\lambda}\) otherwise.  Let \(t_a=t_a(x)\) be the last time instant for which arm \( a \in  A^{\prime}(x)\) may have been selected using the Contextual-Gap algorithm if context \(x \) was observed throughout. 

The following holds for the recommended arm \(\Omega(x)\) with context \(x\):
\begin{equation*}
\begin{aligned}
    \min(0, s_{a,t_a}(x) - \Delta_a(x)) + s_{a,t_a}(x) &\geq B_{J_{t_a}(x),t_a}(x) \\
    	& \geq B_{\Omega(x), T+1}(x) \\
    	& \geq R_{\Omega(x)}(x) \\
        & > \epsilon.
\end{aligned}
\end{equation*}
Where the first inequality holds due to Lemma \ref{lem:mingapineq}, the second inequality holds by definition of \(B_{\Omega(x), T+1}\), the third inequality holds due to Lemma \ref{lem:BgeqR} and the last inequality holds due to the event \(R_{\Omega(x)} > \epsilon\).
The preceding inequality can also be written as 
\begin{equation*}
\begin{aligned}
s_{a,t_a}(x) > 2 s_{a,t_a}(x) - \Delta_a(x) > \epsilon, &\qquad \text{ if } \Delta_a(x) > s_{a,t_a}(x). \\
2 s_{a,t_a}(x) - \Delta_a(x) > s_{a,t_a}(x) >  \epsilon, &\qquad \text{ if } \Delta_a(x) < s_{a,t_a}(x).
\end{aligned}
\end{equation*}
This leads to the following bound on the confidence diameter of \(a \in [A] \),
\begin{equation*}
s_{a,t_a}(x) > \max(\frac{1}{2}(\Delta_a(x) + \epsilon), \epsilon) =: H_{a\epsilon}(x).
\end{equation*}
For any arm \(a\), we consider the final number of arm pulls \(M_{a,T}(x) + N_{\lambda}\). From Lemma 5.2 of the main paper we can write, using the strict monotonicity and there by invertibility of \(g_{a,T}\), with probability at least \(1-\delta\) as

\begin{equation*}
\begin{aligned}
M_{a,T}(x) + N_{\lambda} & \leq g_{a,T}^{-1}(s_{a,t_a}(x))  \\
	& < g_{a,T}^{-1}(H_{a\epsilon}(x)) \\
    & \leq g_{a,T}^{-1}(H_{a\epsilon}),
\end{aligned}
\end{equation*}

where \( H_{a\epsilon} = \inf_{x} H_{a\epsilon}(x)\). 
Last two equations hold as \(g_{a,T}\) is a monotonically decreasing function. By summing both sides with respect to \(a \in A^{\prime}(x)\) we can write
\begin{eqnarray*}
T - AN_{\lambda} + |A^\prime(x)|N_{\lambda}  &<& \sum_{a \in A^{\prime}(x)} g_{a,T}^{-1}(H_{a\epsilon}),
\end{eqnarray*}
We can make RHS even bigger by adding terms \( a \in [A] \)\textbackslash \( A^{\prime}(x) \). Hence, we get
\begin{eqnarray*}
T - (A - |A^\prime(x)|)N_{\lambda}  &<& \sum_{a \in [A]} g_{a,T}^{-1}(H_{a\epsilon}).
\end{eqnarray*}
We can make LHS even smaller by noting that minimum value of \(|A^\prime(x)| \) is one.
\begin{eqnarray*}
T - AN_{\lambda}  + N_{\lambda}  &<& \sum_{a \in [A]} g_{a,T}^{-1}(H_{a\epsilon}).
\end{eqnarray*}
Rearranging the terms, we get
\begin{eqnarray*}
T - AN_{\lambda}  + N_{\lambda}  &<& \sum_{a \in [A]} g_{a,T}^{-1}(H_{a\epsilon}) \\
T - N_{\lambda}( A -1)  &<& \sum_{a \in [A]} g_{a,T}^{-1}(H_{a\epsilon}).
\end{eqnarray*}
which contradicts our definition of \(g_{a,T}\) in the theorem statement. Therefore \(R_{\Omega(x)}(x) \leq \epsilon\).

From the preceding argument we have that if \(\sum_{a \in [A]} g_{a,T}^{-1}(H_{a\epsilon}) \leq T - N_{\lambda}( A -1) \), then for any \(x \in \sX\) generated from the filtration,
    
\begin{equation*}
   \PP (R_{\Omega(x)} < \epsilon | x) \geq 1-A (T - AN_\lambda) e^{-\beta^2} - A\delta.
\end{equation*}

In the above equation, \(1-A (T - AN_\lambda) e^{-\beta^2}\) is from the event \(\sE\) and \(1 - A\delta\) is due to the fact that the monotonic upper bounds holds only with probability \(1-\delta\) for each of the arms.
Setting \(\beta\) such that \(\sum_{a \in [A]} g_{a,T}^{-1}(H_{a\epsilon}) = T- N_{\lambda}( A -1) \) (See Lemma \ref{lem:beta_effective_rank_r}), we have for

\begin{equation*}
 \beta    = \sqrt{\frac{\lambda_{x}(T - N_{\lambda}( A -1)) + 2A\lambda}{ 16 C_1^2 H_{\epsilon} }} - \frac{C_2}{C_1},
\end{equation*}
that
\begin{equation*}
    \PP (R_{\Omega(x)} < \epsilon | x) \geq 1-A (T - AN_\lambda) e^{-\beta^2} - A\delta,
\end{equation*}

for \(C_1 = \rho \sqrt{2}\) and \(C_2 = \rho \sqrt{\sum_{\tau=2}^T \ln(1+ \frac{1}{\lambda} \hat{\sigma}_{a,\tau-1}(x_\tau) )} + \sqrt{\lambda} \n{f_a}_\sH\).

Since \(C_2\) depends on \(T\), to complete the proof and validity of the bound, we will show that \(C_2\) grows logarithmically in \(T\). When assumption \ref{Alambda} holds and \(\n{\phi(x)} \leq 1\), similar to the analysis in \citet{abbasi2011improved,durand2017streaming}, we have
\begin{equation*}
\begin{aligned}
	C_2 &= \rho \sqrt{\sum_{\tau=2}^T \ln(1+ \frac{1}{\lambda} \hat{\sigma}_{a,\tau-1}(x_\tau) )} + \sqrt{\lambda} \n{f_a}_\sH\\
    &= \rho \sqrt{\sum_{\tau=2}^T \ln(1+ \frac{1}{\lambda} \phi(x_\tau)^T(I + \frac{1}{\lambda} K_{a,\tau-1} )^{-1}\phi(x_\tau) ) } \\
    &  + \sqrt{\lambda} \n{f_a}_\sH \\
    &= \rho \sqrt{\ln(\det(I + \frac{1}{\lambda} K_{a,T} ) ) } + \sqrt{\lambda} \n{f_a}_\sH \\
    & \leq \rho \sqrt{d^\ast \ln \bigg(\frac{1}{d^\ast}\bigg(1+\frac{T}{\lambda}\bigg) \bigg)} + \sqrt{\lambda} \n{f_a}_\sH.
\end{aligned}
\end{equation*}
Since \(C_2\) depends on \(\sqrt{\ln(T)}\), we fix  \(C_2 = O(\rho  \sqrt{\ln(T)}) \). As \(T \rightarrow \infty\) the RHS of the probability bound goes to unity and we have the resulting theorem.

\end{proof}

\subsection{Lemmas over event \texorpdfstring{\(\sE\)}{E}}
For arm \(a\) at time \(t\), we define event \(\sE_{a,t}\) as
\begin{equation*}
\sE_{a,t}(x) = \{L_{a,t}(x_t) \leq f_{a}(x) \leq U_{a,t}(x) | x_1, x_2, \cdots, x_{t-1} \}.
\end{equation*}
We define event \(\sE\) as \(\sE := \bigcup_{a \leq A, AN_\lambda < t \leq T} \sE_{a,t}(x) \)

The following theorems operate under the assumption the event \(\sE\) holds. We provide two properties of the terms in the algorithm that will be of help in the proofs:
\begin{itemize}
    \item \(B_{J_t(x)} = U_{j_t(x),t}(x) - L_{J_t(x),t}(x)\)
    \item \(U_{a,t}(x) = L_{a,t}(x) + s_{a,t}(x)\)
\end{itemize}

\begin{lemma}
\label{lem:BgeqR}
Over event \(\sE\), for any sub-optimal arm \(a(x) \neq a^\ast(x)\) at any time \(t \leq T\), the simple regret of pulling that arm is upper bounded by the \(B_{a,t}(x)\), 
\end{lemma}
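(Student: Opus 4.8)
The plan is to unwind the definition of the simple regret and then apply, on the event $\sE$, the two one-sided confidence bounds that are relevant here. First I would write $R_a(x) = f^\ast(x) - f_a(x) = f_{a^\ast(x)}(x) - f_a(x)$, where $a^\ast(x) = \argmax_{i \in [A]} f_i(x)$ is the optimal arm for context $x$; this is just definition \eqref{eq:SimpleRegretDefinition} together with $f^\ast(x) = f_{a^\ast(x)}(x)$.

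Next, I would condition on $\sE$, under which all of the confidence intervals $\sE_{b,t}(x)$ hold simultaneously. In particular, the upper bound for the optimal arm gives $f_{a^\ast(x)}(x) \leq U_{a^\ast(x),t}(x)$, and the lower bound for arm $a$ gives $f_a(x) \geq L_{a,t}(x)$. Substituting these two inequalities into the previous expression yields $R_a(x) \leq U_{a^\ast(x),t}(x) - L_{a,t}(x)$.

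The final step exploits the hypothesis that $a$ is sub-optimal, i.e. $a \neq a^\ast(x)$. This means $a^\ast(x)$ is one of the indices competing in $\max_{i \neq a} U_{i,t}(x)$, so $U_{a^\ast(x),t}(x) \leq \max_{i \neq a} U_{i,t}(x)$. Chaining this with the bound from the previous paragraph gives $R_a(x) \leq \max_{i \neq a} U_{i,t}(x) - L_{a,t}(x)$, and the right-hand side is exactly $B_{a,t}(x)$ by definition, which is the claim.

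There is essentially no technical obstacle in this lemma; it is a short consequence of the confidence bounds. The only points that require care are (i) invoking precisely the right two bounds supplied by $\sE$ — the \emph{upper} bound for the optimal arm and the \emph{lower} bound for $a$ — and (ii) recognizing that the sub-optimality assumption $a \neq a^\ast(x)$ is exactly what makes $a^\ast(x)$ admissible in the maximum $\max_{i \neq a}$ that appears in $B_{a,t}(x)$. The restriction $t \leq T$ merely guarantees that $\sE_{a,t}(x)$ is part of the event $\sE$ for that time index.
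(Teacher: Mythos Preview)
Your proof is correct and follows essentially the same approach as the paper's: on $\sE$, replace $f_{a^\ast(x)}(x)$ and $f_a(x)$ by their respective upper and lower confidence bounds, then use $a \neq a^\ast(x)$ so that the optimal arm appears in $\max_{i\neq a}U_{i,t}(x)$. The paper writes the chain starting from $B_{a,t}(x)$ and bounding downward to $R_a(x)$, but the ingredients and logic are identical.
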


\begin{proof}
\begin{equation*}
\begin{aligned}
B_{a,t}(x) &= \max_{i \neq a} U_{i,t}(x) - L_{a,t}(x) \\
& \geq \max_{i \neq a} f_{i}(x) - f_{a,t}(x) = f^{\ast}(x) - f_{a}(x) = R_{a}(x).
\end{aligned}
\end{equation*}
The first inequality holds due to the definition of event \(\sE\) and the equality holds since we are only considering sub-optimal arms.
\end{proof}

Note that the preceding lemma need not hold for the optimal arm, for which \(R_{a} (x) = 0\) and it is not necessary that \(B_{a,t}(x) \geq 0\).

\begin{lemma}
    \label{lem:ljlJujuJinequality}
Consider the contextual bandit setting proposed in the main paper. Over event \(\sE\), for any time \(t\) and context \(x \in \sX\), the following statements hold for the arm \(a = a_t\) to be selected:
\begin{equation*}
\begin{aligned}
\text{if } a = j_t(x), &\text{ then } L_{j_t(x),t}(x) \leq L_{J_t(x),t}(x), \\
\text{if } a = J_t(x), &\text{ then } U_{j_t(x),t}(x) \leq U_{J_t(x),t}(x).
\end{aligned}
\end{equation*}
\end{lemma}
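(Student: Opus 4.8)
The plan is to derive both implications from three facts that hold over the event $\sE$ (indeed they are purely algebraic once the confidence bounds are fixed): the optimality of $J_t(x)$, which gives $B_{J_t(x),t}(x) \le B_{a,t}(x)$ for every arm $a$ and in particular for $a = j_t(x)$; the identity $B_{J_t(x),t}(x) = U_{j_t(x),t}(x) - L_{J_t(x),t}(x)$; and the relation $U_{a,t}(x) = L_{a,t}(x) + s_{a,t}(x)$. Throughout I abbreviate $J = J_t(x)$ and $j = j_t(x)$ and suppress the arguments, writing $U_a, L_a, s_a, B_a$.

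First I would control the quantity $B_j = \max_{i \neq j} U_i - L_j$. The crucial observation is that, because $j$ maximizes the upper confidence bound among all arms other than $J$, every arm $i \notin \{J,j\}$ obeys $U_i \le U_j$; hence the only arm that can exceed $U_j$ is $J$ itself, and so $\max_{i \neq j} U_i \le \max(U_J, U_j)$. Substituting this upper bound into $B_J \le B_j$ and using $B_J = U_j - L_J$ produces the single master inequality $U_j - L_J \le \max(U_J, U_j) - L_j$. I would then split on the sign of $U_J - U_j$. When $U_J \ge U_j$, the master inequality reads $U_j + L_j \le U_J + L_J$, which after substituting $U_a = L_a + s_a$ rearranges to $2(L_j - L_J) \le s_J - s_j$. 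When $U_J < U_j$, the master inequality collapses to $U_j - L_J \le U_j - L_j$, i.e.\ $L_j \le L_J$ directly.

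With these two conditional facts in hand, both statements follow by reading off the selection rule. For the first statement, $a_t = j$ means $s_j \ge s_J$, so $s_J - s_j \le 0$; in the case $U_J \ge U_j$ this gives $2(L_j - L_J) \le s_J - s_j \le 0$, hence $L_j \le L_J$, while in the case $U_J < U_j$ the inequality $L_j \le L_J$ holds outright. For the second statement, $a_t = J$ means $s_J \ge s_j$, and I would argue by contradiction that the case $U_J < U_j$ is impossible: that case forces $L_j \le L_J$ from the master inequality, whereas $U_J < U_j$ together with $s_J \ge s_j$ and $U_a = L_a + s_a$ gives $L_J + s_j \le L_J + s_J = U_J < U_j = L_j + s_j$, hence $L_J < L_j$, a contradiction. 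Therefore $U_J \ge U_j$, which is exactly the claim.

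The step I expect to demand the most care is establishing $\max_{i \neq j} U_i \le \max(U_J, U_j)$, as this is the sole place where the precise definition of $j$ (the maximizer of the upper bound over arms distinct from $J$) enters, and it is what makes the clean two-case dichotomy possible. Everything downstream is elementary manipulation with $U_a = L_a + s_a$ and a careful reading of the $\argmax$ in the selection rule (including ties, where $s_J = s_j$ still yields both conclusions); no concentration estimates or probabilistic arguments are required beyond having conditioned on $\sE$.
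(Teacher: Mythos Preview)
Your proof is correct and relies on exactly the same three ingredients as the paper's argument: the optimality of $J_t(x)$ (so $B_{J_t(x),t} \le B_{j_t(x),t}$), the fact that $j_t(x)$ maximizes the upper bound among arms other than $J_t(x)$ (which controls $\max_{i\neq j}U_i$), and the selection rule together with $U_a = L_a + s_a$. The paper establishes each implication by a separate contradiction argument (assuming the conclusion fails, deducing $U_j \ge U_J$, and then showing $B_j < B_J$), whereas you extract a single ``master inequality'' $U_j - L_J \le \max(U_J,U_j) - L_j$ up front and read off both conclusions from the two branches of the $\max$; this is a mild reorganization rather than a genuinely different route, though your version makes the purely algebraic nature of the lemma (no use of $\sE$ is actually needed) more transparent.
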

\begin{proof}
We consider two cases based on which of the two candidate arms \(j_t(x), J_t(x)\) is selected.

\textbf{Case 1:} \(a = j_t(x)\) is selected. The proof works by contradiction.
Assume that \(L_{j_t(x),t}(x) > L_{J_t(x),t}(x)\). From the arm selection rule we have \(s_{j_t(x),t}(x) \geq s_{J_t(x),t}(x)\). Based on this we can deduce that \(U_{j_t(x),t}(x) \geq U_{J_t(x),t}(x)\). As a result, 
\begin{equation*}
\begin{aligned}
B_{j_t(x),t}(x) &  = \max_{i \neq j_t(x)} U_{i,t}(x) - L_{j_t(x),t}(x) \\ 
& < \max_{i \neq J_t(x)} U_{i,t}(x) - L_{J_t(x),t}(x) = B_{J_{t}(x),t}(x).
\end{aligned}
\end{equation*}
The above inequality holds because the arm \(j_t(x)\) must necessarily have the highest upper bound over all the arms. However, this contradicts the definition of \(B_{J_t(x),t}(x)\) and as a result it must hold that \(L_{j_t(x),t}(x) \leq L_{J_t(x),t}(x)\).

\textbf{Case 2:} \(a = J_t(x)\) is selected. The proof works by contradiction.
Assume that \(U_{j_t(x),t}(x) > U_{J_t(x),t}(x)\). From the arm selection rule we have \(s_{J_t(x),t}(x) \geq s_{j_t(x),t}(x)\). Based on this we can deduce that \(L_{J_t(x),t}(x) \leq L_{j_t(x),t}(x)\). As a result, similar to Case 1, 
\begin{equation*}
\begin{aligned}
B_{j_t(x),t}(x) & = \max_{j \neq j_t(x)} U_{j,t}(x) - L_{j_t(x),t}(x) \\ 
& < \max_{j \neq J_t(x)} U_{j,t}(x) - L_{J_t(x),t}(x) = B_{J_{t}(x),t}(x).
\end{aligned}
\end{equation*}
The above inequality holds because the arm \(j_t(x)\) must necessarily be have the highest upper bound over all the arms. However, this contradicts the definition of \(B_{J_t(x),t}(x)\) and as a result it must hold that \(U_{j_t(x),t}(x) \leq U_{J_t(x),t}(x)\).
\end{proof}

\begin{corollary}
    \label{cor:sBinequality}
For context \(x\), if arm \(a = a_t(x)\) is pulled at time \(t\), then \(B_{J_t(x),t}(x)\) is bounded above by the uncertainty of arm \(a\), i.e., 
\begin{equation*}
B_{J_t(x),t}(x) \leq s_{a,t}(x).
\end{equation*}
\end{corollary}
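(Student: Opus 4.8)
The plan is to proceed by the same case analysis that underlies Lemma \ref{lem:ljlJujuJinequality}, splitting on which of the two candidate arms $j_t(x)$ or $J_t(x)$ is actually pulled. The starting point in both cases is the identity $B_{J_t(x),t}(x) = U_{j_t(x),t}(x) - L_{J_t(x),t}(x)$ (the first bulleted property), together with the observation that for any arm $b$ the confidence diameter satisfies $s_{b,t}(x) = U_{b,t}(x) - L_{b,t}(x)$ (the second bulleted property). Since the algorithm sets $a_t = \argmax_{a \in \{j_t(x), J_t(x)\}} s_{a,t}(x)$, the selected arm $a$ is always one of these two candidates, so the two cases are exhaustive. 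As with the preceding lemma, the argument is carried out over the event $\sE$.

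First I would treat the case $a = j_t(x)$. Here Lemma \ref{lem:ljlJujuJinequality} gives $L_{j_t(x),t}(x) \leq L_{J_t(x),t}(x)$, hence $-L_{J_t(x),t}(x) \leq -L_{j_t(x),t}(x)$. Substituting into the expansion of $B_{J_t(x),t}(x)$ yields
\[
B_{J_t(x),t}(x) = U_{j_t(x),t}(x) - L_{J_t(x),t}(x) \leq U_{j_t(x),t}(x) - L_{j_t(x),t}(x) = s_{j_t(x),t}(x) = s_{a,t}(x).
\]

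The case $a = J_t(x)$ is symmetric. Now Lemma \ref{lem:ljlJujuJinequality} supplies $U_{j_t(x),t}(x) \leq U_{J_t(x),t}(x)$, and substituting into the same expansion gives
\[
B_{J_t(x),t}(x) = U_{j_t(x),t}(x) - L_{J_t(x),t}(x) \leq U_{J_t(x),t}(x) - L_{J_t(x),t}(x) = s_{J_t(x),t}(x) = s_{a,t}(x).
\]
In both cases $B_{J_t(x),t}(x) \leq s_{a,t}(x)$, completing the argument. There is no real obstacle here beyond matching each selection case to the correct inequality from Lemma \ref{lem:ljlJujuJinequality}: the lower-bound inequality is the one needed when $j_t(x)$ is pulled, and the upper-bound inequality the one needed when $J_t(x)$ is pulled. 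The content is essentially bookkeeping layered on top of the previous lemma.
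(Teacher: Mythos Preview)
Your proof is correct and follows essentially the same approach as the paper: split on which candidate arm is pulled, invoke the corresponding inequality from Lemma~\ref{lem:ljlJujuJinequality}, and collapse $U-L$ to $s$. The paper's argument is identical in structure and detail.
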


\begin{proof}
By construction of the algorithm \(a \in \{j_t(x),J_t(x)\}\). If \(a = j_t(x)\), then using the definition of \(B_{J_t(x),t}(x)\) and Lemma \ref{lem:ljlJujuJinequality}, we can write
\begin{equation*}
\begin{aligned}
B_{J_t(x),t}(x) &= U_{j_t(x),t}(x) - L_{J_t(x),t}(x) \\
	& \leq U_{j_t(x),t}(x) - L_{j_t(x),t}(x) = s_{a,t}(x).
\end{aligned}
\end{equation*}
Similarly, for \(a = J_t(x)\), 
\begin{equation*}
\begin{aligned}
B_{J_t(x),t}(x) &= U_{j_t(x),t}(x) - L_{J_t(x),t}(x) \\
	& \leq U_{J(x),t}(x) - L_{J_t(x),t}(x) = s_{a,t}(x).
\end{aligned}
\end{equation*}
\end{proof}

\begin{lemma}
\label{lem:mingapineq}
On event \(\sE\), for any time \(t \leq T\) and for arm \(a = a_t(x)\) the following bounds hold for the minimal gap
\begin{equation*}
    B_{J_t(x),t}(x) \leq \min(0, s_{a,t}(x) - \Delta_a(x)) + s_{a,t}(x).
\end{equation*}
\end{lemma}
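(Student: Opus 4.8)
The plan is to rewrite the right-hand side as $\min\!\bigl(s_{a,t}(x),\,2s_{a,t}(x)-\Delta_a(x)\bigr)$ and then prove the two inequalities $B_{J_t(x),t}(x)\le s_{a,t}(x)$ and $B_{J_t(x),t}(x)\le 2s_{a,t}(x)-\Delta_a(x)$ separately. The first is exactly Corollary~\ref{cor:sBinequality}, so essentially all of the work goes into the second. Write $a^*(x):=\argmax_{i} f_i(x)$ for the optimal arm, so $f_{a^*(x)}(x)=f^*(x)$.

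For the second inequality I would first record a selection-free bound that holds on $\sE$. Since $U_{i,t}(x)=L_{i,t}(x)+s_{i,t}(x)$ and $L_{i,t}(x)\le f_i(x)\le U_{i,t}(x)$, we get $U_{i,t}(x)\le f_i(x)+s_{i,t}(x)$ and $L_{i,t}(x)\ge f_i(x)-s_{i,t}(x)$. Using $B_{J_t(x),t}(x)=U_{j_t(x),t}(x)-L_{J_t(x),t}(x)$ together with $s_{a,t}(x)=\max\bigl(s_{j_t(x),t}(x),s_{J_t(x),t}(x)\bigr)$ from the arm-selection step yields
\[ B_{J_t(x),t}(x)\le \bigl(f_{j_t(x)}(x)-f_{J_t(x)}(x)\bigr)+2\,s_{a,t}(x). \]
Hence it suffices to show $f_{j_t(x)}(x)-f_{J_t(x)}(x)\le -\Delta_a(x)$.

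I would then split on the size of the gap. If $\Delta_a(x)\le s_{a,t}(x)$ then $2s_{a,t}(x)-\Delta_a(x)\ge s_{a,t}(x)$, so the target is already implied by Corollary~\ref{cor:sBinequality}; the substantive regime is $\Delta_a(x)>s_{a,t}(x)$. Here the decisive device is to use Corollary~\ref{cor:sBinequality} in the sharp form $B_{J_t(x),t}(x)\le s_{a,t}(x)<\Delta_a(x)$ to force $J_t(x)=a^*(x)$; once that is secured the gap arithmetic closes, since $j_t(x)\neq J_t(x)=a^*(x)$ gives $f_{j_t(x)}(x)-f_{J_t(x)}(x)=f_{j_t(x)}(x)-f^*(x)\le -\Delta_a(x)$ (directly, with the pulled-arm cases handled by the corresponding form of $\Delta_a(x)$). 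The forcing step is where the two selection cases genuinely diverge. When the pulled arm is $J_t(x)$, suboptimality of $J_t(x)$ would make $a^*(x)$ admissible in $j_t(x)=\argmax_{i\neq J_t(x)}U_{i,t}(x)$, so $U_{j_t(x),t}(x)\ge U_{a^*(x),t}(x)\ge f^*(x)$ and thus $B_{J_t(x),t}(x)\ge f^*(x)-f_{J_t(x)}(x)=\Delta_a(x)$, contradicting $B_{J_t(x),t}(x)<\Delta_a(x)$. When the pulled arm is $j_t(x)$ this one-line contradiction no longer lines up with $\Delta_a(x)=\Delta_{j_t(x)}(x)$, and one must instead first exclude $j_t(x)=a^*(x)$ and then exclude $J_t(x)\neq a^*(x)$, the latter by combining $U_{j_t(x),t}(x)\ge f^*(x)$ with $U_{j_t(x),t}(x)\le f_{j_t(x)}(x)+s_{a,t}(x)$ (valid because $s_{j_t(x),t}(x)=s_{a,t}(x)$ in this case) to obtain $\Delta_{j_t(x)}(x)\le s_{a,t}(x)$, again a contradiction.

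I expect the main obstacle to be exactly this forcing of $J_t(x)=a^*(x)$ in the regime $\Delta_a(x)>s_{a,t}(x)$: the clean contradiction is immediate when the pulled arm is $J_t(x)$ but fails when it is $j_t(x)$, so the $j_t(x)$ case requires the two-stage elimination above, invoking the arm-selection rule (through $s_{a,t}(x)=\max(s_{j_t(x),t}(x),s_{J_t(x),t}(x))$) and, via Corollary~\ref{cor:sBinequality}, the confidence-bound orderings of Lemma~\ref{lem:ljlJujuJinequality}. Once $J_t(x)=a^*(x)$ is pinned down, the remaining gap bookkeeping is elementary.
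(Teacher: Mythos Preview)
Your proposal is correct and uses the same ingredients as the paper (Corollary~\ref{cor:sBinequality}, Lemma~\ref{lem:ljlJujuJinequality}, and the event~$\sE$), but it is organized differently. The paper proceeds by an explicit six-case split on the pair $(a,a^\ast)\in\{j_t(x),J_t(x)\}\times\{j_t(x),J_t(x),\text{other}\}$: in four of the six cases it shows $\Delta_a(x)\le s_{a,t}(x)$ and invokes Corollary~\ref{cor:sBinequality}; in the remaining two (where $\{a,a^\ast\}=\{j_t(x),J_t(x)\}$) it bounds $B_{J_t(x),t}(x)$ directly by $2s_{a,t}(x)-\Delta_a(x)$. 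Your approach instead separates the two inequalities in the $\min$, dispatches the first via the Corollary, and for the second splits on the sign of $s_{a,t}(x)-\Delta_a(x)$, reducing the hard regime to the single structural claim $J_t(x)=a^\ast(x)$. This is logically equivalent to the paper's case analysis (your contradictions are the contrapositives of the paper's ``$\Delta_a(x)\le s_{a,t}(x)$'' cases), but the presentation is more conceptual and avoids the repetitive bookkeeping of six parallel cases. One small point: you flag but do not spell out the exclusion of $j_t(x)=a^\ast(x)$; this is exactly where the lower-bound ordering $L_{j_t(x),t}(x)\le L_{J_t(x),t}(x)$ from Lemma~\ref{lem:ljlJujuJinequality} enters, giving $f^\ast(x)-s_{a,t}(x)\le L_{j_t(x),t}(x)\le L_{J_t(x),t}(x)\le f_{J_t(x)}(x)\le f_{(2)}(x)$ and hence $\Delta_a(x)\le s_{a,t}(x)$, a contradiction.
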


\begin{proof}
    The arm to be pulled is restricted to \(a \in \{j_t(x),J_t(x)\}\). The optimal arm for the context \(x\) at time \(t\) can either belong to \(\{j_t(x), J_t(x)\}\) or be equal to some other arm. This results in 6 cases:
    \begin{enumerate}
        \item \(a = j_t(x), a^\ast = j_t(x)\)
        \item \(a = j_t(x), a^\ast = J_t(x)\)
        \item \(a = j_t(x), a^\ast \notin \{j_t(x), J_t(x)\}\)
        \item \(a = J_t(x), a^\ast = j_t(x)\)
        \item \(a = J_t(x), a^\ast = J_t(x)\)
        \item \(a = J_t(x), a^\ast \notin \{j_t(x), J_t(x)\} \)
    \end{enumerate}
	We define \(f^\ast(x):= f_{a^\ast}(x)\) as the expected reward associated with the best arm and \(f_{(a)}(x)\) as the expected reward of the \(a^{\text{th}}\) best arm.

    \textbf{Case 1:} The following sequence of inequalities holds:
    \begin{eqnarray*}
        f_{(2)}(x) &\geq & f_{J_t(x)}(x) \\
        &\geq& L_{J_t(x),t}(x) \\
        &\geq& L_{j_t(x),t}(x) \\
        &\geq& f_a(x) - s_{a,t}(x).
    \end{eqnarray*}
    The first inequality follows from the assumption that \(a = a^\ast = j_t(x)\), the chosen and optimal arm has the highest upper confidence bound, and therefore, the expected reward of arm \(J_t(x)\) can be at most that of the second best arm. The second inequality follows from event \(\sE\), the third inequality follows from \ref{lem:ljlJujuJinequality}. The last inequality follows from event \(\sE\). Using the above string of inequalities and the definition of \(\Delta_a(x)\), we can write 
    \begin{equation*}
        s_{a,t} - (f_{a}(x) - f_{(2)}(x)) = s_{a,t} - \Delta_a(x) \geq 0.
    \end{equation*}
    The result holds for case 1 with the application of Corollary \ref{cor:sBinequality}.

    \textbf{Case 2:} \(a = j_t(x), a^\ast = J_t(x)\). We can write 
    \begin{equation*}
        \begin{aligned}
            B_{J_t(x),t}(x) &= U_{j_t(x),t}(x) - L_{J_t(x),t}(x) \\
                            & \leq f_{j_t(x)}(x) + s_{j_t(x),t}(x) \\
                            &   - f_{J_t(x)}(x) + s_{J_t(x),t}(x) \\
                            & \leq f_a(x) - f^\ast(x) + 2s_{a,t}(x).
        \end{aligned}
    \end{equation*}
    The first inequality follows from event \(\sE\) and the second inequality holds because the selected arm has a larger uncertainty. From the definition of \(\Delta_a(x)\),
    \begin{equation*}
        \begin{aligned}
            B_{J_t(x),t}(x) &\leq 2s_{a,t}(x) - \Delta_a(x) \\
                            & \leq s_{a,t}(x) + \min(0, s_{a,t} - \Delta_a(x)).
        \end{aligned}
    \end{equation*}
    Where the inequality follows from Corollary \ref{cor:sBinequality}.

    \textbf{Case 3:} \(a = j_t(x), a^\ast \notin \{j_t(x), J_t(x)\}\). We can write the following sequence of inequalities 
        \begin{equation*}
            \begin{aligned}
                f_{j_t(x)}(x) + s_{j_t(x),t}(x) \geq U_{j_t(x),t}(x) \geq U_{a^\ast} \geq f^\ast.
            \end{aligned}
        \end{equation*}
        The first and third inequalities hold due to event \(\sE\), the second inequality holds by definition as \(j_t(x)\) has the highest upper bound on any arm other than \(J_t(x)\) neither of which is the optimal arm in this case. From the first and last inequalities, we obtain 
        \begin{equation*}
            s_{a,t}(x) - (f^\ast - f_{a,t}(x)) \geq 0,
        \end{equation*}
        or \(s_{a,t}(x) - \Delta_a(x) \geq 0\). The result follows from Corollary \ref{cor:sBinequality}.
        
        \textbf{Case 4:} \(a = J_t(x), a^\ast = j_t(x)\). We can write 
    \begin{equation*}
        \begin{aligned}
            B_{J_t(x),t}(x) &= U_{j_t(x),t}(x) - L_{J_t(x),t}(x) \\
                            & \leq f_{j_t(x)}(x) + s_{j_t(x),t}(x) \\
                            & - f_{J_t(x)}(x) + s_{J_t(x),t}(x) \\
                            & \leq f_a(x) - f^\ast(x) + 2s_{a,t}(x).
        \end{aligned}
    \end{equation*}
    The first inequality follows from event \(\sE\) and the second inequality holds because the selected arm has a larger uncertainty. From the definition of \(\Delta_a(x)\),
    \begin{equation*}
        \begin{aligned}
            B_{J_t(x),t}(x) &\leq 2s_{a,t}(x) - \Delta_a(x) \\
                            & \leq s_{a,t}(x) + \min(0, s_{a,t} - \Delta_a(x)).
        \end{aligned}
    \end{equation*}
    Where the inequality follows from Corollary \ref{cor:sBinequality}.

        \textbf{Case 5:} \(a = J_t(x), a^\ast = J_t(x)\). The following sequence of inequalities holds:
        \begin{eqnarray*}
        f_{a}(x) + s_{a,t}(x) &\geq&  U_{J_t(x),t}(x)\\ 
        &\geq& U_{j_t(x),t}(x)\\
        &\geq& f_{j_t(x)}(x)  \\
        &\geq& f_{(2)}(x).
        \end{eqnarray*}
        The first and third inequalities follow from event \(\sE\), the second inequality is a consequence of Lemma \ref{lem:ljlJujuJinequality}, the fourth inequality follows from the fact that since \(J_t(x)\) is the optimal arm, the upper bound and the arm selected should be as good as the second arm.
        Using the above chain of inequalities, we can write
        \begin{equation*}
        s_{a,t}(x) - (f_{(2)}(x) - f_a(x)) = s_{a,t}(x) - \Delta_a(x) \geq 0.
        \end{equation*}
        
        \textbf{Case 6:} \(a = J_t(x), a^\ast \notin \{j_t(x), J_t(x)\}\). We can write the following sequence of inequalities 
        \begin{equation*}
            \begin{aligned}
                f_{J_t(x)}(x) + s_{J_t(x),t}(x) \geq U_{J_t(x),t}(x) \geq U_{a^\ast, t}(x) \geq f^\ast.
            \end{aligned}
        \end{equation*}
        The first and third inequalities hold due to event \(\sE\), the second inequality holds by definition as \(J_t(x)\) has the highest upper bound on any arm when \(a = J_t(x)\) due to Lemma \ref{lem:ljlJujuJinequality} and \(J_t(x)\) is not optimal in this case. From the first and last inequalities, we obtain 
        \begin{equation*}
            s_{a,t}(x) - (f^\ast - f_{a,t}(x)) \geq 0,
        \end{equation*}
        or \(s_{a,t}(x) - \Delta_a(x) \geq 0\). The result follows from Corollary \ref{cor:sBinequality}.

\end{proof}

\section{Experimental Details, Discussion and Additional Experimental Results}
\label{sec:DetailedExperiments}

We present average simple regret comparisons of the Contextual-Gap algorithm against five baselines:
\begin{enumerate}[topsep=0pt, noitemsep]
    \item Uniform Sampling: We equally divide the exploration budget \(T\) among arms and learn a reward estimating function \( f_a: \sX \rightarrow \RR \) for each of the arm during exploration. During the exploitation phase, we select the best arm based on estimated reward function \( f_a \). 
    \item Epsilon Greedy: At every step, we select the best arm (according to estimated \( f_a \) ) with probability \( 1 - \epsilon_t\) and other arms with probability \( \epsilon_t\). We use   \(\epsilon_t = 0.99^t\), where \( t \) is the time step.
    \item Kernel-UCB: We implement kernel-UCB from \citet{valko2013finite} for both exploration and exploitation.
    \item Kernel-UCB-Mod: We implement kernel-UCB from \citet{valko2013finite} for exploration but use best arm based on estimated reward function \( f_a \) for exploitation.  
    \item Kernel-TS: We use kernelized version of Thompson Sampling from \citet{chowdhury2017kernelized}. 
\end{enumerate}

\begin{figure}
\centering
        \begin{subfigure}[h]{0.495\textwidth}
            \centering
        \includegraphics[width=0.9\textwidth]{./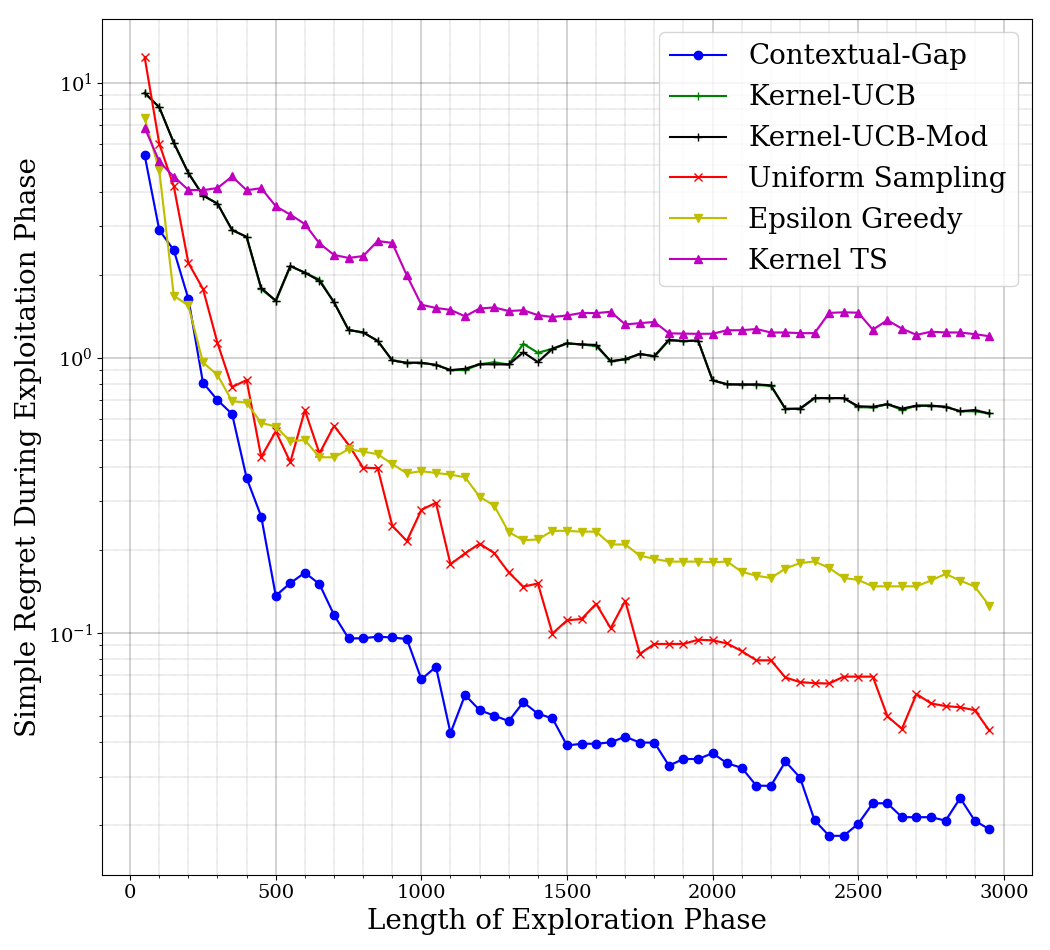}
        \caption{Spacecraft dataset}
        \label{fig:Spacecraft_alpha_01}
    \end{subfigure}
\caption{Simple Regret evaluation with \(\alpha=0.1\)}
\label{fig:alpha_01}
\end{figure}  
\begin{figure}
\centering
        \begin{subfigure}[h]{0.495\textwidth}
            \centering
        \includegraphics[width=0.9\textwidth]{./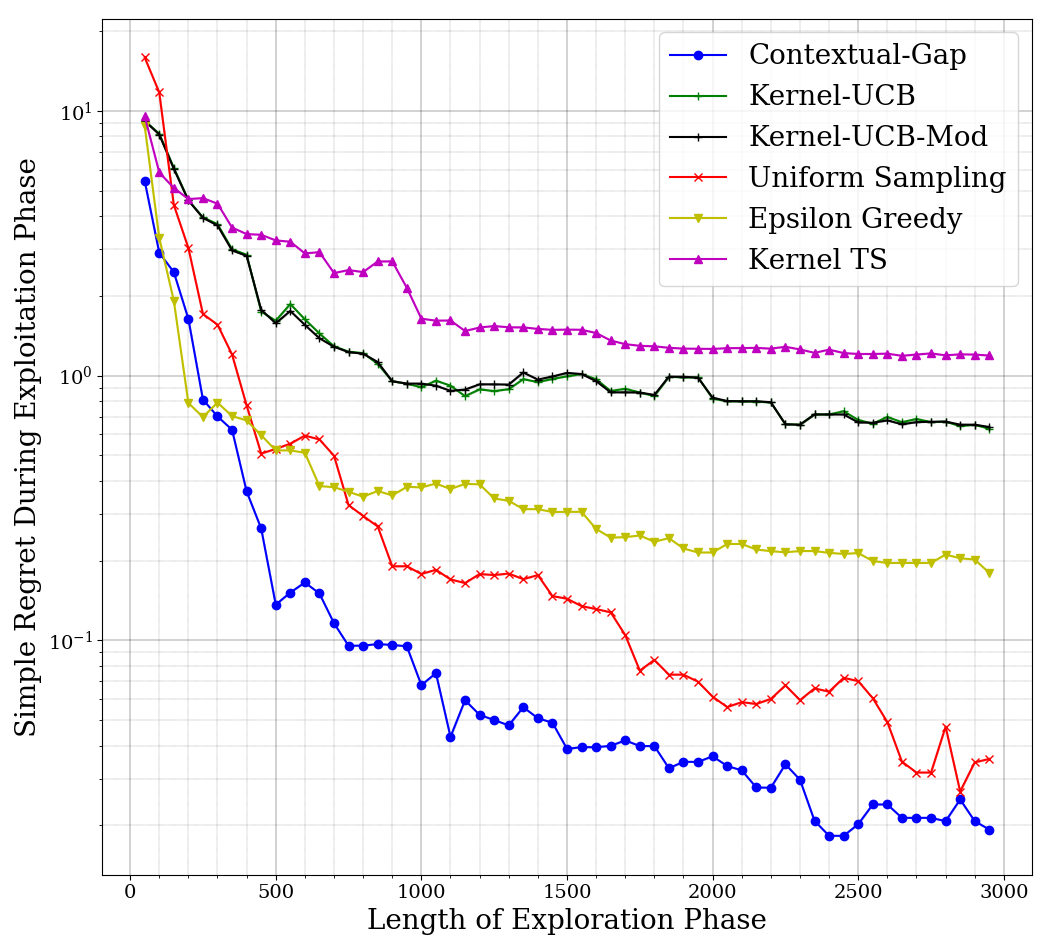}
        \caption{Spacecraft dataset}
        \label{fig:Spacecraft_alpha_05}
    \end{subfigure}
\caption{Simple Regret evaluation with \(\alpha=0.5\)}
\label{fig:alpha_05}
\end{figure}  

\begin{figure}
\centering
        \begin{subfigure}[h]{0.495\textwidth}
            \centering
        \includegraphics[width=0.9\textwidth]{./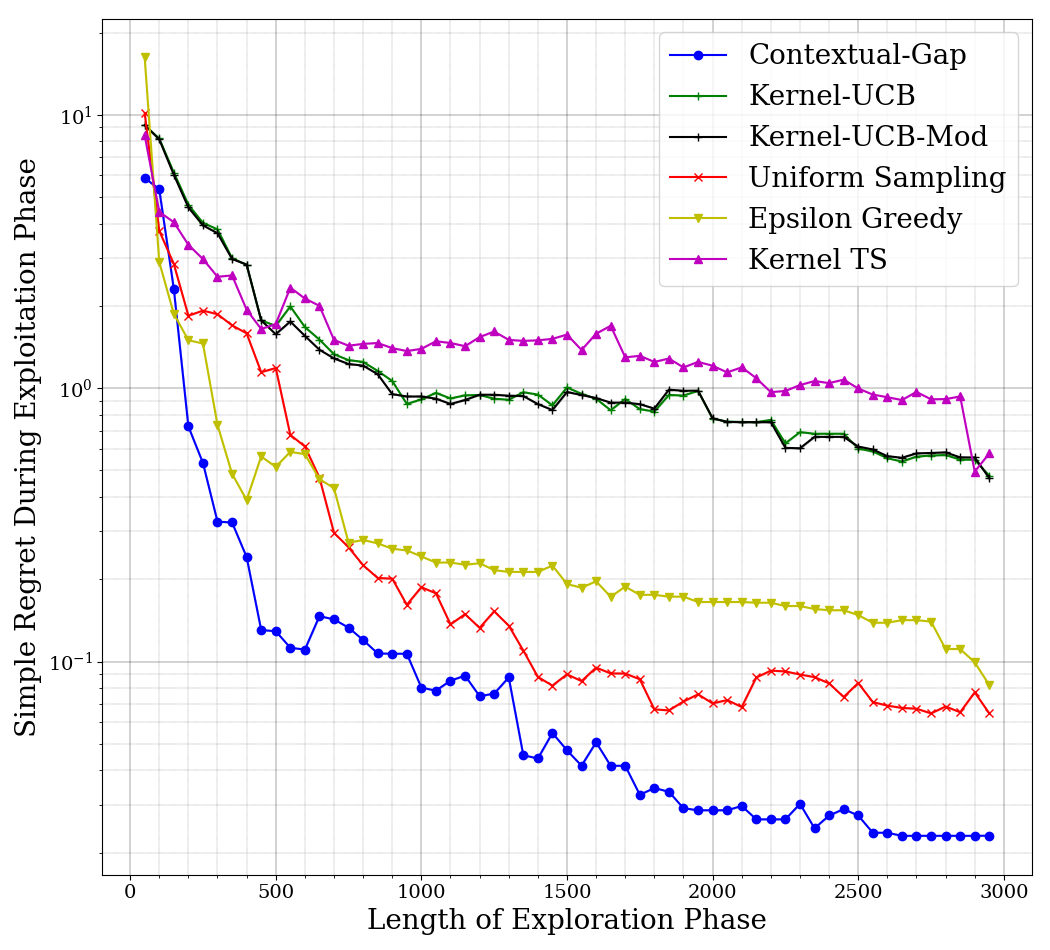}
        \caption{Spacecraft dataset}
        \label{fig:Spacecraft_alpha_2}
    \end{subfigure}
\caption{Simple Regret evaluation with \(\alpha=2\)}
\label{fig:alpha_2}
\end{figure}

The algorithm was implemented with the best arm chosen with a history of one i.e., \(\Omega(x_t) = J_{T}(x_t) \). For speed and scalability in implementation, the kernel inverse for arm \(a\), \((K_{a,t}+\lambda I_{a,t})^{-1}\) and the kernel vector \(k_{a,t}(x)\) updates were implemented as rank one updates. To tune kernel bandwidth and regularization parameters, we use following procedure: The dataset was split into two parts for hold-out (HO) and evaluation (EV). Each part was further split into two phases: exploration and exploitation. The value of \(T\) selected in both the hold-out and evaluation datasets were of similar magnitude. On the hold-out dataset, a grid search was used to set the tuning parameters by optimizing the average simple regret of the exploitation phase. The tuned parameters were used with the evaluation datasets to generate the plots. The code is available online to reproduce all results  \footnote{The code to reproduce our results is available at \url{https://bit.ly/2Msgdfp}}. As our implementation performs rank one updates of the kernel matrix and its inverse, our algorithm has \( O(T^2) \) as both computational and memory complexity in the worst case scenario, where \( T \) is the length of the exploration phase.

The following plots (Figures \ref{fig:alpha_01}, \ref{fig:alpha_05} and \ref{fig:alpha_2}) provide results for different \(\alpha\) values. Note that, the hyper parameter computed with cross validation for \(\alpha=1\) were retained for the evaluation runs of different values of \(\alpha\). It can be seen that Contextual Gap performs consistently better for all the datasets under consideration. 

The spacecraft magnetometer dataset is non-i.i.d in nature due to which it is not possible to generate confidence intervals by shuffling the data. Due to this, only the individual, non-averaged simple regret curves are presented.

A comparison of the average simple regret variation of Contextual gap with a history of the past 25 data points (instead of 1) is shown in Figure \ref{fig:history_comp}. It can be seen that there exists only minor differences in contextual gap runs with history.

\begin{figure}
    \centering
    \includegraphics[width=0.9\linewidth]{./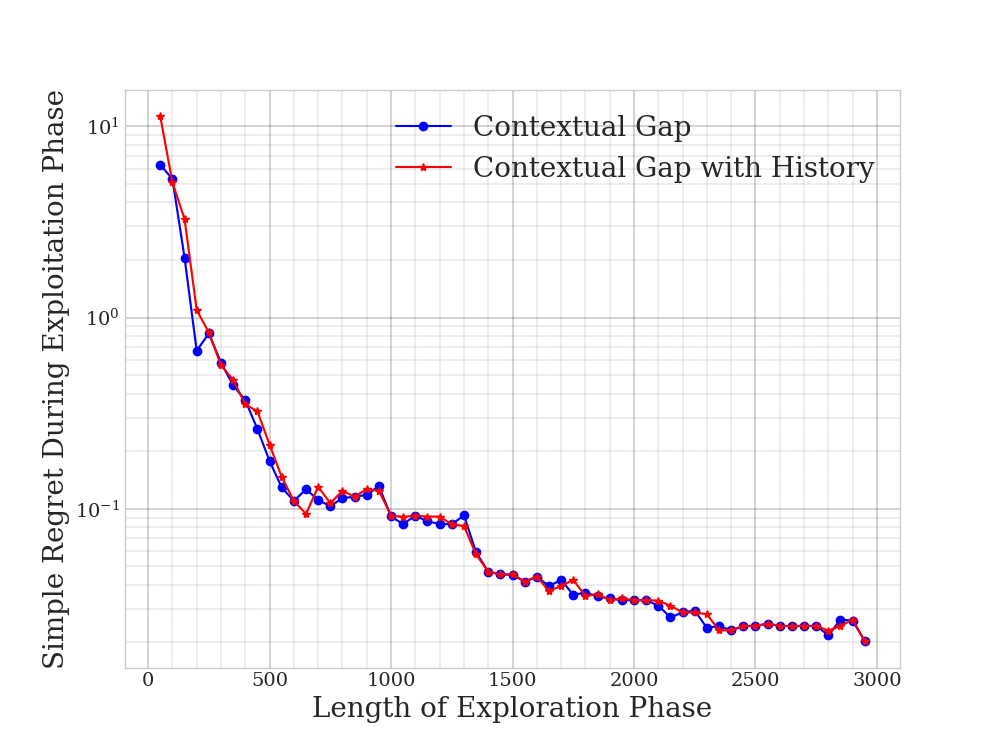}
    \caption{Comparison with Contextual Gap algorithm with recent history for Spacecraft dataset}
    \label{fig:history_comp}
\end{figure}

The contextual gap algorithm presented is a solution to simple regret minimization, and not average simple regret minimization. Hence,we present the worst case simple regret among all the data present in the exploitation phase as additional empirical evidence of simple regret minimization ( Figure \ref{fig:worst_case_SR}).
 \begin{figure}
    \centering
    \includegraphics[width=0.9\linewidth]{./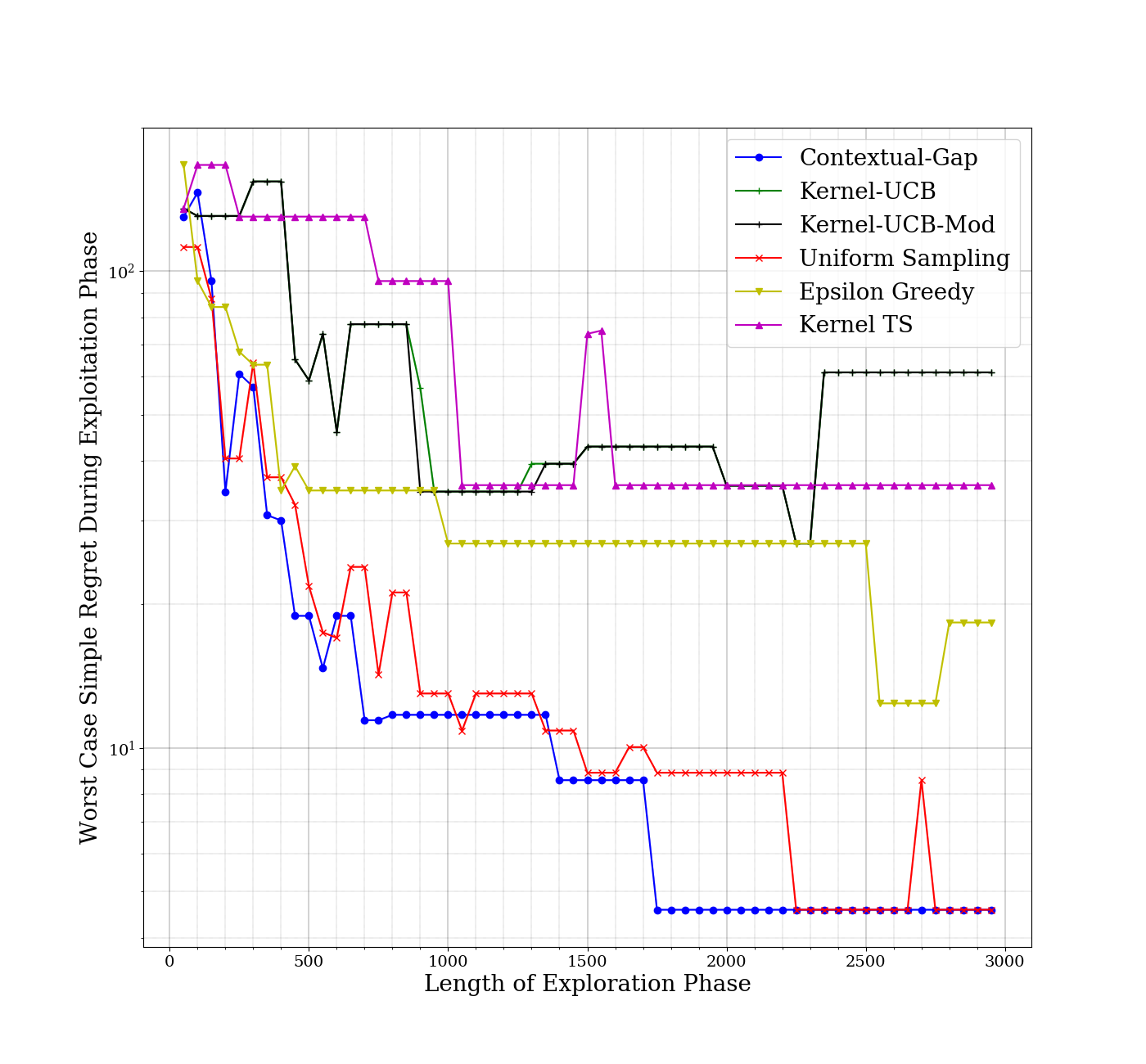}
    \caption{Worst Case Simple Regret Evaluation on Spacecraft Magnetic Field Dataset}
    \label{fig:worst_case_SR}
\end{figure}

\end{document}